\documentclass{article}

\usepackage{microtype}
\usepackage{graphicx}
\usepackage{subcaption}
\usepackage{booktabs} 
\usepackage[mathscr]{eucal}

\usepackage{hyperref}

\usepackage[accepted]{icml2026}

\usepackage{amsmath}
\usepackage{amssymb}
\usepackage{mathtools}
\usepackage{amsthm}
\usepackage{multirow}
\usepackage{multicol}
\usepackage{adjustbox}
\usepackage{hhline}
\usepackage{dsfont}
\usepackage{svg}

\usepackage[capitalize,noabbrev]{cleveref}

\usepackage{acronym}
\acrodef{RL}[RL]{reinforcement learning}
\acrodef{MDP}[MDP]{Markov decision process}
\acrodef{GCRL}[GCRL]{Goal-Conditioned Reinforcement Learning}
\acrodef{HRL}[HRL]{Hierarchical Reinforcement Learning}
\acrodef{CEM}[CEM]{Cross-Entropy Method}
\acrodef{ALLO}[ALLO]{Augmented Lagrangian Laplacian Objective}
\acrodef{ALPS}[ALPS]{Augmented Laplacian Planning with Subgoals}
\acrodef{CTD}[CTD]{commute time distance}
\acrodef{ACRO}[ACRO]{Agent-Controller Representations}
\acrodef{PcLast}[PcLast]{Plannable Continuous Latent States}
\acrodef{CEM}[CEM]{Cross-Entropy Method}
\acrodef{MPPI}[MPPI]{Model-Predictive Path Integral}
\acrodef{MPC}[MPC]{Model Predictive Control}
\acrodef{OGBench}[OGBench]{Offline-Goal Conditioned RL Benchmark suite}

\newcommand{\trans}{\top}
\newcommand{\eye}{\mathbf{I}}

\newcommand{\Reals}{\mathbb{R}}
\newcommand{\States}{\mathscr{S}}
\newcommand{\Actions}{\mathscr{A}}
\newcommand{\Goals}{\mathscr{G}}
\newcommand{\Dataset}{\mathcal{D}}

\newcommand{\state}{s}
\newcommand{\pfnc}{\mathbf{P}}

\newcommand{\initdist}{\mu}

\newcommand{\graphlap}{\mathbf{L}}
\newcommand{\laprep}{\phi}
\newcommand{\scaledlap}{\psi}
\newcommand{\eigenvector}{\mathbf{u}}
\newcommand{\numeigen}{D}

\newcommand{\clusterpath}{\mathcal{P_G}}
\newcommand{\ctd}{c}
\newcommand{\model}{f}
\newcommand{\prior}{\pi_\text{prior}}
\newcommand{\modelHorizon}{H_\text{f}}
\newcommand{\priorHorizon}{k}
\newcommand{\maxpriorHorizon}{K_\text{max}}
\newcommand{\pclastrep}{\Gamma}

\newcommand{\pmaze}{\textit{pointmaze}}
\newcommand{\amaze}{\textit{antmaze}}
\newcommand{\hmaze}{\textit{humanoidmaze}}
\newcommand{\nseeds}{8}
\newcommand{\Graph}{G}
\newcommand{\Edges}{\mathscr{E}}
\newcommand{\volume}{\text{vol}}
\newcommand{\CGraph}{G_c}
\usepackage{xcolor}
\usepackage[table]{xcolor}

\theoremstyle{plain}
\newtheorem{theorem}{Theorem}[section]

\theoremstyle{definition}

\theoremstyle{remark}

\usepackage[textsize=tiny]{todonotes}

 \newcommand{\resultcell}[2]{
  \makebox[3.2em][r]{#1}\,{\scriptsize $\pm$}\,\makebox[1.5em][l]{\scriptsize #2}
}

\icmltitlerunning{Laplacian Representations for Decision-Time Planning}

\begin{document}

\twocolumn[
\icmltitle{Laplacian Representations for Decision-Time Planning}



\icmlsetsymbol{equal}{*}

\begin{icmlauthorlist}
\icmlauthor{Dikshant Shehmar}{uofa,amii}
\icmlauthor{Matthew Schlegel}{uofc}
\icmlauthor{Matthew E. Taylor}{uofa,amii,cifar}
\icmlauthor{Marlos C. Machado}{uofa,amii,cifar}
\end{icmlauthorlist}

\icmlaffiliation{uofa}{Dept. of Computing Science, University of Alberta, Canada}
\icmlaffiliation{uofc}{Dept. of Computer Science, University of Calgary, Canada}
\icmlaffiliation{amii}{Alberta Machine Intelligence Institute (Amii)}
\icmlaffiliation{cifar}{Canada CIFAR AI Chair}

\icmlcorrespondingauthor{Dikshant Shehmar}{ddikshan@ualberta.ca}

\icmlkeywords{Machine Learning, Representation Learning, Model-based RL, Laplacian representation, ICML}

\vskip 0.3in
]



\printAffiliationsAndNotice{}  

\begin{abstract}

Planning with a learned model remains a key challenge in model-based reinforcement learning~(RL) due to the compounding error problem. In decision-time planning, state representations are critical as they must support local cost computation while preserving long-horizon temporal structure. In this paper, we show that the Laplacian representation provides an effective latent space for planning by capturing state-space distances at multiple time scales. The Laplacian representation preserves meaningful distances and naturally decomposes long-horizon problems into subgoals, thus mitigating the compounding errors that arise over long prediction horizons. Building on these properties, we introduce ALPS, a hierarchical planning algorithm, and demonstrate that it outperforms commonly used baselines on a selection of offline goal-conditioned RL tasks from OGBench, a benchmark previously dominated by model-free methods. \looseness=-1

\end{abstract}

\section{Introduction}

Compared to model-free RL, model-based RL offers, among other benefits, the potential for improved sample efficiency \citep{janner2019trust, wang2019benchmarking}, better generalization \citep{mbrl, yu2020mopo}, and faster adaptation \citep{zhang2020cautious, wan2022towards}. These gains stem from the agent’s ability to use a model to reason about consequences before acting, a process known as planning \citep{dyna}. Nevertheless, when function approximation is required, planning with a \emph{learned model} remains a central challenge, due to both how states are represented in latent space and generalize, and to the well-known compounding errors problem over long sequences of predictions~\citep{talvitie2014model, talvitie2017self, clavera2018model}. \looseness=-1

Decision-time planning algorithms such as MPC~\citep{GARCIA1989335} and MCTS~\citep{kocsis2006bandit} use the model to choose actions based on simulated future trajectories and predicted outcomes. In this setting, the effects of compounding errors over long horizons due to imperfect models become obvious when imagined trajectories diverge from reality \citep{gu2016continuous, lambert2022investigating}. Hierarchical planning can overcome this limitation by decoupling the agent's low-level actions from its long-horizon objectives \citep{koulpclast}, but requires a latent space in which to plan at long time scales. State representation is therefore critical: nearby states must be close in latent space, while long-term distances must also be preserved to support planning toward a desired goal. \looseness=-1

The Laplacian representation~\citep{Mahadevan2005ProtovalueFD,mahadevan07a} embeds states into a latent space defined by the eigenvectors of the graph Laplacian induced by the environment’s dynamics. It captures the environment’s temporal structure and connectivity, with eigenvectors ordered by time scale: early eigenvectors encode global structure (e.g., rooms or regions), while later ones capture increasingly local distinctions \citep{machado2017laplacian, machado2023temporal, jinnai2019finding, jinnai2020}. Because this representation reflects reachability, it naturally partitions the environment into well-connected regions \citep{shi2000normalized}. Moreover, Euclidean distance in the Laplacian space approximates how easily one state can be reached from another by following the environment’s dynamics \citep{Lovászrandomwalk}, yielding a geometry well suited for planning. As a result, in this paper, we show how the Laplacian representation supports both subgoal discovery and planning in a unified metric space. \looseness=-1

This paper tackles decision-time planning with a learned model in environments requiring function approximation.  We show that the Laplacian representation provides an effective latent space for hierarchical planning in long-horizon tasks, as it intrinsically captures multiple time scales and naturally decomposes tasks into subgoals. We instantiate these ideas in a novel hierarchical decision-time planning algorithm, \ac{ALPS}, where the Laplacian representation supports both subgoal identification and distance estimation. Empirically, \ac{ALPS} outperforms the commonly used model-free RL baselines on a suite of goal-conditioned tasks from OGBench~\citep{parkogbench}. \looseness=-1

\section{Preliminaries}

In this paper, we use lowercase symbols (e.g., $r$) for functions and values of random variables, uppercase symbols (e.g., $C$) for constants and random variables, calligraphic font (e.g., $\mathscr{S}$) for sets, bold lowercase symbols (e.g., $\mathbf{e}$) for vectors, and bold uppercase symbols (e.g., $\mathbf{L}$) for matrices. We write the $i$-th entry of a vector $\mathbf{v}$ as $\mathbf{v}(i)$. \looseness=-1

\subsection{Problem setting}

We consider the offline \ac{GCRL} setting \citep{Kaelbling1993LearningTA, liu2022goal}. The agent-environment interaction is modeled as a goal-augmented Markov decision process (GA-MDP),
$\mathcal{M} = \langle\States, \Actions, \pfnc, \initdist, \gamma, \Goals\rangle$
where $\States$ and $\Actions$ denote the state and action spaces respectively, $\pfnc: \States \times \Actions \rightarrow \Delta(\States)$ denotes the state transition dynamics (where \(\pfnc(\state'|\state, a)\) is the probability of transitioning from \(\state\) to \(\state'\) when action \(a\) is taken), $\initdist \in \Delta(\States)$ is the initial state distribution, \(\gamma \in [0,1]\) is the discount factor, and the goal space $\Goals \subseteq \States$ is introduced, where any valid state can serve as a goal~\citep{pmlr-v37-schaul15, andrychowicz2017hindsight}. A dataset $\Dataset$ is provided consisting of trajectories \(\tau^{(n)}=(s_0^{(n)}, a_0^{(n)}, s_1^{(n)}, \ldots, s_{T}^{(n)})\) collected from the environment beforehand through a preset behavior policy. The agent learns a goal-conditioned policy $\pi: \States \times \Goals \rightarrow \Delta(\Actions)$ that, for each goal \(g \in \Goals\), maximizes:
\begin{equation*}
    \mathbb{E}_{\pi}\left[ \sum_{t=0}^{T} \gamma^t r_g(S_t) \right],  
\end{equation*}
where \(r_g(s)\) is a sparse reward function (e.g., an indicator \(\mathbb{I}[s=g]\)), and \(T \in \mathbb{N}\) denotes the episode length. However, as the horizon grows, learning a single flat policy becomes increasingly difficult as the sparse reward signal provides no learning signal until the goal is reached, making credit assignment over long action sequences challenging. Hierarchical decomposition alleviates this difficulty by introducing temporal abstraction, enabling a high-level policy to set intermediate subgoals that a low-level policy executes over shorter horizons \citep{nachum2018data, levy2017learning}.

\subsection{Laplacian representation in RL}
\label{subsec:LapRep}

The Laplacian framework \citep{Mahadevan2005ProtovalueFD, mahadevan07a} proposes a state representation that leverages spectral analysis to reflect global geometries of a \ac{MDP}. The states and transitions of an \ac{MDP} are re-interpreted as nodes and edges in a weighted graph \(\Graph = (\States, \Edges)\), where $(i, j) \in \Edges$ if the agent can observe the transition $s_i \rightarrow s_j$ in a single step; edge weights are determined by the transition matrix $\pfnc_\pi$ induced by the policy \(\pi\) and the environment dynamics. The graph Laplacian $\graphlap$ is defined with respect to a policy $\pi$ as
\begin{equation*}
    \graphlap = \eye - f(\pfnc_\pi),
\end{equation*}
where $\eye$ is the identity matrix, and \(f\) is a function that preserves the spectral structure of \(\pfnc_\pi\), commonly $f(\pfnc_\pi) = \frac{1}{2}(\pfnc_\pi + \pfnc_\pi^\trans$) \citep{wulaplacian}. If \(f\) is a symmetric function, then \(\graphlap\) can be eigendecomposed as \(\graphlap =  \mathbf{E} \mathbf{\Lambda} \mathbf{E}^\top\), where \(\mathbf{E} = [\eigenvector_0, \eigenvector_1, \ldots, \eigenvector_{|\mathscr{S}|-1}] \) have eigenvectors as columns, and \(\mathbf{\Lambda} = \text{diag}(\lambda_0, \lambda_1, \dotsc, \lambda_{|\mathscr{S}|-1})\) contains the corresponding eigenvalues. \looseness=-1

The \textit{Laplacian representation} is a state representation mapping \(\laprep: \mathscr{S} \rightarrow \mathbb{R}^\numeigen\) (\(0 < \numeigen \leq \mathscr{|S|}\)) defined by the eigenvectors of \(\graphlap\) corresponding to the smallest \(\numeigen\) non-zero eigenvalues~\citep{Mahadevan2005ProtovalueFD, gomezproper}.\footnote{We define the Laplacian representation in the discrete setting for simplicity, but a principled analogous version exists for continuous state spaces~\citep{gomezproper}.} The first eigenvector, \(\eigenvector_0\), is constant and uninformative, and thus is discarded~\citep{Lovászrandomwalk}. The representation for state \(s\) is given by \(\phi(s) = [\eigenvector_1(s), \dotsc, \eigenvector_{\numeigen}(s)]^\top\), where \(\eigenvector_i(s)\) denotes the \(s\)-th component of the eigenvector \(\eigenvector_i\). \looseness=-1

Recently, several methods have been proposed to learn the Laplacian representation from samples, thereby overcoming the high computational barrier ($O(|\States|^3)$) of performing an eigendecomposition of the graph Laplacian~\citep{wulaplacian, wang2021towards, gomezproper}. We use Augmented Lagrangian Laplacian Objective \citep[ALLO;][]{gomezproper}, an objective to learn the eigenvectors and  eigenvalues of the graph Laplacian:
\begin{align}
    & \max_{\boldsymbol{\beta}} \min_{\mathbf{u}} 
    \sum_{i=1}^{\numeigen} \langle \eigenvector_i, \mathbf{L}\eigenvector_i \rangle 
    + \sum_{j=1}^{\numeigen} \sum_{k=1}^{j} \beta_{jk} \left( \langle \eigenvector_j, [\![\eigenvector_k]\!] \rangle - \delta_{jk} \right) \notag \\
    & 
    \ \ \ \ \ \ \ \ \ \ \ \ \ \ \ \  \ \ \ \ \ \ \ \ \ \ \ \ \ \ 
    + B \sum_{j=1}^{\numeigen} \sum_{k=1}^{j} \left( \langle \eigenvector_j, [\![\eigenvector_k]\!] \rangle - \delta_{jk} \right)^2,
    \label{eq:ALLO}
\end{align}
where \(\eigenvector_i \in \mathbb{R}^{|\mathscr{S}|}\) denotes the \(i\)-th learned eigenvector, \((\beta_{jk})_{1 \leq k \leq j \leq \numeigen}\) are the corresponding dual variables that enforce orthonormality, \(\lambda_i=-\beta_{ii}/2\) are the eigenvalues, \(\delta_{jk}\) is the Kronecker delta, \([\![\cdot ]\!]\) denotes the stop-gradient operator, and \(B\) is the barrier coefficient.\footnote{\citet{gomezproper} shows that ALLO is insensitive to the value of the barrier coefficient, which has also been our experience.}

\subsection{Planning with a learned model}
\ac{MPC} is a common approach for planning in continuous action spaces. In \ac{MPC}, a model is used to select the greedy action from a \(k\)-step lookahead search with respect to a cost function; the greedy action is then executed, and the process is repeated using the new state.
The Cross-Entropy Method ~\citep[CEM;][]{cem} is a popular MPC-based planner that has been successful in model-based RL settings \citep{finn2017deep, chua2018deep, hafner2019learning, pinneri2021sample, gurtler2025long}. CEM samples action sequences using a diagonal Gaussian, \( \mathbf{a}_{t:t+H} \sim \mathcal{N}(\boldsymbol{\mu}_{t:t+H}, \mathrm{diag}(\boldsymbol{\sigma}^2_{t:t+H}))\), and rolls each one out using a forward model. Based on the top-\(N_e\) trajectories with the lowest cost to the goal, the mean \(\boldsymbol{\mu}\) and variance \(\boldsymbol{\sigma}^2\) are updated accordingly. This iterative process reliably converges to a near-optimal action sequence, but only when the planning horizon remains within the model's accuracy window~\citep{chua2018deep, feinberg2018model} --- beyond this, compounding model errors make long-horizon plans unreliable. \looseness=-1

Hierarchical Reinforcement Learning (HRL) bridges this gap by decomposing the long-horizon problem into a sequence of smaller, manageable subproblems \citep{Barto2003RecentAI, klissarov2025discovering}, allowing each subproblem to be solved within the model's reliable planning range. Plannable Continuous Latent States ~\citep[PcLast;][]{koulpclast} is a hierarchical decision-time planning algorithm that identifies subgoals via \(k\)-means clustering~\citep{kmeans} in a learned latent space, and uses CEM to plan between them. The latent space is obtained via a contrastive learning objective structured so that distances reflect random-walk reachability, making it suitable for low-level cost estimation (see Appendix~\ref{appendix:pclast} for more details). \looseness=-1

\ac{ALPS} is heavily inspired by PcLast, but instead uses the Laplacian representation as its high-level latent space to capture environment geometry directly from the raw state space, and includes a behavior prior to bias CEM optimization. \looseness=-1


\section{The Laplacian for Long-Horizon Planning}
\label{sec:why_laprep}

In this section, we motivate the Laplacian representation as an effective latent space for both subgoal generation and trajectory optimization. The representation induces a metric space encoding the environment's temporal structure and connectivity at multiple time scales. Furthermore, eigenvectors of the graph Laplacian form a natural basis for clustering methods well-suited for subgoal identification~\citep{simsek2005, von2007tutorial}. Finally, as discussed in Section~\ref{subsec:LapRep}, recent breakthroughs in learning the Laplacian representation from data~\citep{wulaplacian,wang2021towards,gomezproper} unlock the aforementioned benefits in problems with large state and action spaces. \looseness=-1

The distance metric induced by the Laplacian representation has a well known connection to \ac{CTD} of the underlying graph~\citep{klein}.\footnote{The resistance distance, $r_d(u,v)=\tfrac{\ctd(u,v)}{\text{vol}(\Graph)}$ \citep{klein}, can equivalently be used.}
\ac{CTD}, denoted as $\ctd(u, v)$, is defined as the expected number of steps to travel from node $u$ to $v$ and back to $u$ following a random walk \citep{Lovászrandomwalk}, and can be computed directly from the eigenvectors and eigenvalues of the graph Laplacian:\looseness=-1
\begin{align*}
\ctd(u, v) &= \text{vol}(\Graph)\sum_{i=1}^{|\States|} \left(\frac{\eigenvector_i(u)}{\sqrt{\lambda_i}} - \frac{\eigenvector_i(v)}{\sqrt{\lambda_i}}\right)^2,
\end{align*}
where $\text{vol}(\Graph)$ is the volume of the graph \citep{xiao2003resistance}. To take advantage of this relation, the scaled Laplacian representation\footnote{We call this the scaled Laplacian representation, but it has several names throughout the literature, such as reachability-aware Laplacian representation \cite{wang2023reachability}, and commute-embedding \cite{qiu2007clustering}.} $\scaledlap_i(\state) = {\laprep_i(\state)} / {\sqrt{\lambda_i}}$ approximates \ac{CTD} as $\ctd(s_u, s_v) \approx \left\|\scaledlap(u) - \scaledlap(v)\right\|^2$. For completeness, we provide a derivation of this approximation in Appendix \ref{appendix:CTD}. We refer to the scaled Laplacian space learned by ALLO as $\scaledlap$-space. Since the \(\scaledlap\)-space is isometric to the commute time distance, it serves as an effective latent space for trajectory optimization. \looseness=-1

\begin{figure}[t]
    \includegraphics[width=\linewidth]{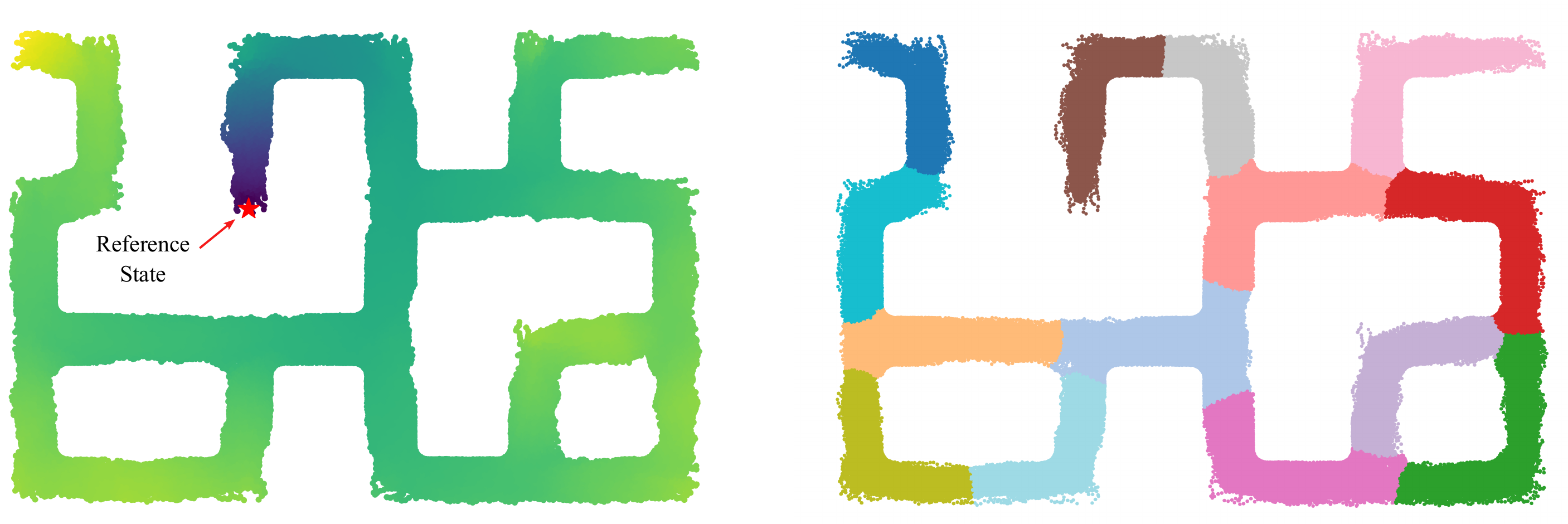}
    \caption{Visualization of the scaled Laplacian (\(\scaledlap\)) space properties in the \textit{pointmaze-large} environment from OGBench. (\textbf{left}) Heatmap of \(c(s^\star, s_i)\) distance from a reference state ($s^\star$ denoted by \textcolor{red}{\(\star\)} in the figure) to each state in the dataset  (darker colors indicate smaller distances). (\textbf{right}) Cluster labels assigned to each state in the dataset via clustering in \(\scaledlap\)-space.
    }
    \label{fig:allo_prop}
    \vspace{-14pt}
\end{figure}

\begin{figure*}[htp]
    \centering
    \includegraphics[width=0.95\linewidth]{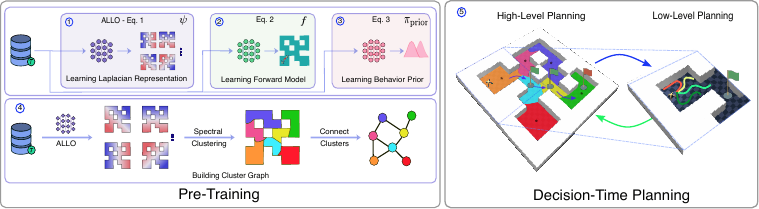}
    \caption{\textbf{\ac{ALPS} at the pre-training and decision-time planning phases.} In pre-training, \ac{ALPS} (1) learns the Laplacian representation using the \ac{ALLO}, (2) learns a one-step forward model on-top of the original state space $\States$, (3) learns the behavior prior, \(\prior\), using the scaled Laplacian representation, and (4) clusters the dataset using $k$-means in the scaled Laplacian space to generate the cluster graph. In planning, (5) \ac{ALPS} takes the current state from the environment and uses the high-level planner to determine the next subgoal, and then determines the next action towards this subgoal using the low-level planner.}
    \label{fig:architecture}
\end{figure*}

Spectral clustering uses the Laplacian representation as a basis for clustering~\citep{weiss1999segementation, ng2001spectral, von2007tutorial}. Intuitively, from the perspective of \ac{CTD}, clustering using the Laplacian representation as a basis partitions the environment at its bottlenecks and groups well-connected regions~\citep{shi2000normalized}. A random walk on the graph is unlikely to travel between different clusters, and in the Laplacian representation this is reflected in a longer distance between points from different clusters. Using the scaled Laplacian representation accentuates the relationships between points compared to the non-scaled version, pulling similar points closer together and pushing dissimilar points further apart \cite{qiu2007clustering}. Motivated by these properties, we use the scaled Laplacian representation for subgoal identification via \(k\)-means clustering and perform high-level planning in \(\scaledlap\)-space. \looseness=-1

So far, we have motivated the scaled Laplacian representation as an ideal latent space for hierarchical planning by connecting it to \ac{CTD} and spectral clustering. In Figure~\ref{fig:allo_prop}, 
we visualize the Laplacian representation learned with ALLO in the \textit{large pointmaze} environment from OGBench (see Section~\ref{sec:experiments} for more details). Note that (1) the approximation of \ac{CTD} shows distances that are temporally consistent, and (2) the tight clusters that respect the environment dynamics whose centers can be used as subgoals. These characteristics of the scaled Laplacian representation provide a strong foundation for hierarchical decision-time planning. \looseness=-1

\section{Augmented Laplacian Planning with Subgoals}
\label{sec:alps}
Augmented Laplacian Planning with Subgoals (ALPS) is a hierarchical planning algorithm that leverages the Laplacian representation to efficiently plan in continuous state and action spaces. \ac{ALPS} leverages an offline dataset to learn: (1) the Laplacian representation, (2) a forward model, and (3) a behavior prior. Clustering is then performed over the scaled Laplacian representation as a basis, and the clusters centers are used as subgoals. When given a goal state, a high-level plan over subgoals is constructed using Dijkstra algorithm and this plan is executed using CEM with a behavior prior guiding its search. \ac{ALPS} is outlined in Algorithm~\ref{alg:ALPS} with its components detailed in Fig.~\ref{fig:architecture}. \looseness=-1

\textbf{Learning the Laplacian representation}:
We learn the Laplacian representation, \(\laprep\), using the ALLO objective (Eq. \ref{eq:ALLO}). The ALLO objective is minimized using stochastic gradient descent by sampling transition pairs (\(S_t, S_{t+\Delta}\)), where \(\Delta \sim \text{Geom}(1-\gamma_s) \) is the geometric distribution. The scaled Laplacian representation, \(\scaledlap\), is then obtained by scaling the individual eigenvectors with their corresponding eigenvalues obtained from dual variables:
\begin{equation*}
\scaledlap_i = \frac{\laprep_i}{\sqrt{\lambda_i}} = \frac{\sqrt{2}\laprep_i}{-\sqrt{\beta_{ii}}}.    
\end{equation*}

\textbf{Learning a forward model}:
The low-level planner uses a one-step model in the original state space \(\model: \States \times \Actions \rightarrow \States\). To reduce the accumulation of errors known to occur in one-step forward models, we use a multi-step auto-regressive objective over the horizon \(\modelHorizon\)~\citep{talvitie2014model,talvitie2017self}:
\begin{equation}
    \mathbb{E}_{(S_t, A_{t:t+\modelHorizon-1}, S_{t+1:t+\modelHorizon}) \sim \mathscr{D}} \bigg[\frac{1}{\modelHorizon} \sum_{\tau=1}^{\modelHorizon} \|\hat{S}_{t+\tau} - S_{t+\tau}\|_2^2\bigg],
\end{equation}

where \(\hat{S}_{t+1} = \model(S_t, A_t)\) and \(\hat{S}_{t+\tau} = \model(\hat{S}_{t+\tau-1}, A_{t+\tau-1})\) for \(\tau > 1\). Auto-regressive training allows gradients to flow back through time, incentivizing the model to learn transition dynamics that remain stable over \(\modelHorizon\). \looseness=-1

\textbf{Learning a behavior prior}:
Usually, CEM samples action sequences from an unconditional Gaussian distribution, but this strategy is inefficient for high-dimensional action spaces \citep{bharadhwaj2020model}. To accelerate the convergence of CEM, we use a behavior prior to generate candidate action sequences \citep{hansentd}. The behavior prior, \(\prior(S_t, \scaledlap(S_t), \scaledlap(S_{t+\priorHorizon})) \) is a deterministic goal-conditioned policy.
We learn \(\prior\) by minimizing 
\begin{equation}
    \mathbb{E}_{(S_t, A_t, S_{t+\priorHorizon}) \sim \mathscr{D}} [\|\prior(S_t, \scaledlap(S_t), \scaledlap(S_{t+\priorHorizon}))-A_t\|_2^2],
\end{equation}
where \(\priorHorizon \sim U(1, \maxpriorHorizon)\), with \(\maxpriorHorizon\) as a hyperparameter. This behavior cloning objective encourages goal-directed behavior by assuming all trajectories in the dataset are generated by a goal-seeking policy trying to reach the goal state $S_{t+\priorHorizon}$ from $S_t$. The behavior prior is then used to predict an action sequence \(\mathbf{a}_{t:t+H-1}\) from a state \(S_t\) using the forward model over the planner horizon \(H\).

\begin{algorithm}
    \caption{ALPS}
    \label{alg:ALPS}
    \begin{algorithmic}
        \small
        \STATE {\bfseries Training Input}: Dataset \(\Dataset\), number of eigenvectors \(\numeigen\), number of clusters \(C\)
        \STATE {\color{gray} === Pre-Training Phase ===}
        \STATE \(\scaledlap \leftarrow \) TrainALLO(\(\Dataset\)) \hfill {\color{gray} \(\triangleright \scaledlap: \States \rightarrow \Reals^{d}\)}
        \STATE \(\model \leftarrow\) TrainDynamics(\(\Dataset\)) \hfill {\color{gray} \(\triangleright \model: \mathscr{S} \times \Actions \rightarrow \States\)}
        \STATE \(\prior \leftarrow\) TrainBehaviorPrior(\(\Dataset, \scaledlap\)) \hfill {\color{gray} \(\triangleright \pi_{\text{prior}} : \States \times \scaledlap \times \scaledlap \rightarrow \Actions\)}
        \STATE {\color{gray} === Get Cluster Graph ===}
        \STATE labels \(\leftarrow\) KMeans(\(\scaledlap, \Dataset, C\))
        \STATE \(\CGraph \leftarrow \) BuildClusterGraph(\(\Dataset\), labels)
        \STATE {\color{gray} === Decision-Time Planning ===}
        \STATE {\bfseries Testing Input}: \(s_{\text{start}}\), \(s_{\text{goal}}\) \hfill {\color{gray} \(\triangleright\) get start and goal state}
        \STATE \(z_{s} \leftarrow \scaledlap(s_{\text{start}})\), \(z_{g} \leftarrow \scaledlap(s_{\text{goal}})\)
        \STATE \(c_{s} \leftarrow \text{getCluster}(z_{s})\), \(c_{g} \leftarrow \text{getCluster}(z_{g})\) \hfill {\color{gray} \(\triangleright\) get cluster label}
        \STATE \(\clusterpath\) \(\leftarrow\) Dijkstra(\(\CGraph, c_s, c_g\)) \hfill {\color{gray} \(\triangleright\) get high-level plan}
        \STATE \(i \leftarrow 1\) \hfill {\color{gray} \(\triangleright\) index of next target cluster}
        \WHILE{\(\| \scaledlap(s) - z_g \| > \epsilon\)}
            \STATE \(c_{\text{curr}} \leftarrow \text{getCluster}(\scaledlap(s))\)
            \IF{\(c_{\text{curr}} \notin \clusterpath\)}
                \STATE \(\clusterpath \leftarrow\) Dijkstra(\(\CGraph, c_{\text{curr}}, c_g\)) \hfill {\color{gray} \(\triangleright\) replan if drifted from \(\clusterpath\)}
                \STATE \(i \leftarrow 1\)
            \ELSIF{\(c_{\text{curr}} = \clusterpath[i]\)}
                \STATE \(i \leftarrow \min(i + 1, |\clusterpath| - 1)\)  \hfill {\color{gray} \(\triangleright\) advance if reached target}
            \ENDIF
            \STATE \(z_{\text{sub}} \leftarrow \begin{cases} z_g & \text{if } \clusterpath[i] = c_g \\ \text{centroid}(\clusterpath[i]) & \text{otherwise} \end{cases}\)
            \STATE \(\mathbf{a} \leftarrow \) CEM(\(s, z_{\text{sub}}, \scaledlap, \model, \prior\))
            \STATE \(s \leftarrow\) env.step(\(\mathbf{a}[0]\))
        \ENDWHILE
    \end{algorithmic}
\end{algorithm}

\textbf{Building a cluster graph}: The space is partitioned into \(C\) regions \(\{c_i\}_{i=1}^{C}\) using $k$-means clustering in the \(\scaledlap\)-space. The cluster centers act as the vertices of a graph \(\CGraph\), with the edges defined by the dataset \(\Dataset\): an edge $(i, j)$ exists if we observe transitions from states that belong to cluster $i$ to states from cluster $j$ (or the converse). Like PcLast, we prune infrequent inter-cluster transitions via nucleus sampling~\citep{holtzman2019curious}, retaining only top-\(p\%\) most frequent neighbors of each cluster to avoid unreachable subgoals. \looseness=-1

\textbf{Decision-time planning}:
At the beginning of an episode, the planner receives the start state, \(s_{\text{start}}\), and the goal state, \(s_{\text{goal}}\). It then identifies their respective clusters, \(c_s\) and \(c_g\), in the cluster graph \(\CGraph\). Dijkstra's algorithm \citep{dijkstra} finds the shortest path \(\clusterpath\) on the cluster graph between the clusters \(c_s\) and \(c_g\). \(\clusterpath\) acts as a high-level plan for the agent, providing a path in the scaled Laplacian \(\scaledlap\)-space.

At each step of the episode, the high-level planner provides the next target cluster center, \(z_\text{sub}\), from \(\clusterpath\). The low-level planner then uses CEM to find an action that moves the agent toward \(z_\text{sub}\). First, a mean action sequence, \(\mathbf{a}_{t:t+H-1}=(A_t, A_{t+1}, \dots, A_{t+H-1})\), is generated by rolling out the behavior prior \(\prior\) autoregressively through the learned forward model \(\model\): at each step \(k\), the prior proposes an action given the current predicted state, and \(\model\) predicts the next state \(\hat{S}_{t+k+1}=f(\hat{S}_{t+k}, a_{t+k})\). Then, \(N_\text{s}\) candidate action sequences are generated by adding temporally-correlated Gaussian noise \citep{wang2019exploring} around this mean sequence. Each candidate is rolled out through the forward model to produce a trajectory, which is evaluated using the cost function: \looseness=-1
\begin{equation}
    J^m = \sum_{t=1}^{H} \left(\|\scaledlap(\hat{S_t}^m)-z_{\text{sub}}\|^2_2 + \lambda \|{A}^m_t\|^2_2\right),
\end{equation}
where the first term minimizes the distance in Laplacian space and the second term penalizes large actions with \(\lambda\) as a hyperparameter. Note that since we perform spectral clustering in \(\scaledlap\)-space, the distance between \(\scaledlap(\hat{S_t})\) and \(z_{\text{sub}}\) approximates the CTD between corresponding states. We penalize extreme actions to prevent abrupt behavior. The CEM algorithm is outlined in Algorithm~\ref {alg:cem-planner}. The top-\(N_e\) lowest cost trajectories are then used to update the action sampling distribution, and this process is repeated over \(N_\text{iter}\) iterations. The agent then executes the first action of the best action sequence. If the agent deviates from the precomputed plan, the high-level planner recomputes a plan from the current cluster to the goal state. \looseness=-1

Once the agent enters the current target cluster, the next cluster center in \(\clusterpath\) becomes the target cluster. This process repeats until the agent reaches the goal cluster, when CEM uses the goal state as the final target. The pseudocode of the CEM planner is outlined in Appendix
~\ref{appendix:pseudocode}. \looseness=-1

\section{Experiments}
\label{sec:experiments}

We now empirically validate the utility of the Laplacian representation as a latent space for decision-time planning with \ac{ALPS}. First, given that ALPS shares many of the same components as PcLast, we directly compare these algorithms on Maze2D---Point Mass tasks, where PcLast was originally evaluated. These environments pose difficult navigation conditions for long-horizon planning and state-space partitioning from images, as they require the agent to travel far to reach states that appear close in Euclidean space. \looseness=-1

To demonstrate ALPS's scalability and potential, we evaluate it on the locomotion and manipulation tasks from Offline-Goal Conditioned RL Benchmark suite~\citep[OGBench;][]{parkogbench}, a complex robotic benchmark for offline \ac{GCRL} with large state and action spaces. The locomotion tasks are difficult not only because navigating from start to goal requires navigating a maze, but because the underlying locomotion is itself challenging. The manipulation tasks are designed to test the agent’s object manipulation, sequential generalization, and combinatorial generalization abilities. Up-to-now, model-free baselines have led performance across all OGBench tasks. The code is available at \href{https://github.com/machado-research/ALPS}{https://github.com/machado-research/ALPS}.\footnote{Project Page: \href{https://dikshuy.github.io/ALPS/}{https://dikshuy.github.io/ALPS/}} \looseness=-1

\subsection{Experimental domains}

In all environments, performance is measured by the average success rate across five pre-defined state-goal pairs. In each evaluation, a goal \(g\) is given to the agent, and the episode immediately terminates when the agent reaches the goal. Unless otherwise specified, all results report mean and standard deviation over \(\nseeds\) seeds for state-based tasks and \(4\) seeds for pixel-based tasks. Hyperparameter details are in Appendix~\ref{sec:hypers}. \looseness=-1

\textbf{Maze2D---PointMass} \citep{koulpclast}: Each maze is a unit square with varied wall configurations (see Figure~\ref{fig:PcLast_envs} in Appendix \ref{appendix:pclast_envs}). The agent controls a point mass with actions corresponding to the coordinate space change $(\Delta x, \Delta y)$ bounded by the range $[-0.2, 0.2]$ for each action. Observations are defined as a single-channel $(100 \times 100)$ image encoding the current position of the agent and no other environmental information. A Gaussian blur ($\sigma=1.0$) is applied to the agent's coordinate position, and the resulting image is normalized to $[0, 1]$. An offline dataset of \(500K\) transitions is generated using a uniform random policy. We follow PcLast's empirical design where the agent must navigate from a starting position to within $0.03$ units of a known target position within \(30\) actions. \looseness=-1 

\textbf{Locomotion and manipulation tasks from OGBench} \citep{parkogbench}: We use three locomotion tasks from OGBench: \pmaze, \amaze, and \hmaze, requiring control of a \(2\)-DoF \textit{ball}, \(8\)-DoF \textit{ant}, and \(21\)-DoF \textit{humanoid} body, respectively. We consider both state-based and pixel-based variants. In state-based variants, the agent has access to the full low-dimensional state, including its \(x\)-\(y\) position; in pixel-based variants, it receives only \(64\times64\times3\) third-person images, with the floor colored to enable location inference without recurrent networks. The dataset types are as follows:  (i) \textit{navigate}, collected by a noisy expert repeatedly reaching random sampled goals; (ii) \textit{stitch}, collected through shorter goal-reaching trajectories testing stitching ability; and (iii) \textit{explore}, collected with high action noise, testing navigation from extremely low-quality but high-coverage data. \looseness=-1

We consider two robotic manipulation tasks: \textit{Cube} and \textit{Scene}, designed to test object manipulation, sequential generalization, and combinatorial generalization. Both use a \(6\)-DoF UR5e arm with a Robotiq 2F-85 gripper controlled via a \(5\)-D end-effector action space. \textit{Cube} tasks involve pick-and-place manipulation of blocks into a desired configuration; \textit{Scene} tasks require pressing a button to toggle lock states, picking up a cube, and placing it in a drawer. Datasets are collected by scripted non-Markovian policies with temporally correlated noise~\citep{parkogbench}. \looseness=-1

An episode terminates as soon as the agent reaches the proximity of the goal location that defines success, or when the episode ends, which varies across environments. For locomotion tasks, success is determined by proximity to the goal location, not joint positions~\citep{park2023hiql}. For manipulation tasks, success is based solely on object configurations. We follow the OGBench evaluation protocol, averaging performance over \(750\) rollouts (\(3\) evaluation epochs \(\times\) \(5\) test-time goals \(\times\) \(50\) rollouts). See Appendix \ref{appendix:ogbench_envs} for further details about environments and evaluation setup. \looseness=-1

We compare to the baseline algorithms selected by \citet{parkogbench}. Goal-conditioned behavior cloning~\citep[GCBC;][]{lynch2020learning, ghoshlearning} performs goal-conditioned behavior cloning by sampling a future state from the same trajectory as the goal. Goal-conditioned implicit \{V, Q\}-learning~\citep[GCIVL;][]{kostrikovoffline},~\citep[GCIQL;][]{park2024value} are goal-conditioned variants of implicit Q-Learning ~\citep[IQL;][]{kostrikovoffline} that fit optimal value functions (V\(^{*}\) and Q\(^{*}\)) via expectile regression. Quasimetric RL~\citep[QRL;][]{wang2023optimal} learns a quasimetric distance function satisfying the triangle inequality to represent goal-conditioned value functions. Contrastive RL~\citep[CRL;][]{eysenbach2022contrastive} uses contrastive learning to learn a goal-conditioned value function. Finally, hierarchical implicit Q-learning~\citep[HIQL;][]{park2023hiql} is a hierarchical model-free algorithm in which the high-level policy predicts the representation of an optimal \(k\)-step subgoal, and the low-level policy predicts the optimal action for this subgoal. \looseness=-1

\subsection{Comparison with PcLast}

\ac{ALPS} differs from PcLast by the latent space used for the high-level planner and for trajectory optimization, as well as by the fact that ALPS uses a behavior prior to accelerate planner convergence. To evaluate the impact of using different latent spaces, isolating the impact of the representation used, we compare the performance of PcLast to ALPS when not using a behavior prior, which we denote by $\text{ALPS}^\dagger$. We consider two scenarios, one with only the low-level planner, using a single cluster, and one with the high-level planner, using 16 clusters. The results are shown in Table~\ref{tab:PcLast_results}. \looseness=-1

\begin{table}[hbp]
\centering
\caption[Success rate~(\%) of PcLast and $\text{ALPS}^\dagger$ on the Maze2D---PointMass environments]{Success rate~(\%) of PcLast and $\text{ALPS}^\dagger$ on the Maze2D---PointMass environments. Mean and std. deviation over 10 seeds.}
\begin{adjustbox}{max width=\columnwidth}
\setlength{\tabcolsep}{5pt}
\begin{tabular}{cccc}
\toprule
\textbf{Environment} & \textbf{Clusters} & \textbf{PcLast} & \textbf{$\ac{ALPS}^\dagger$} \\
\midrule
\multirow{2}{*}{Hallway}
    & 1  & \resultcell{51}{4} & \resultcell{94}{3}  \\
    & 16 & \resultcell{62}{4} & \resultcell{97}{2} \\
\hhline{----}
\multirow{2}{*}{Rooms}
    & 1  & \resultcell{30}{3} & \resultcell{92}{3}\\
    & 16 & \resultcell{57}{10} & \resultcell{96}{2}  \\
\hhline{----}
\multirow{2}{*}{Spiral}
    & 1  & \resultcell{35}{4} & \resultcell{91}{4}  \\
    & 16 & \resultcell{60}{6} & \resultcell{94}{2} \\
\bottomrule
\end{tabular}
\end{adjustbox}
\label{tab:PcLast_results}
\end{table}

Overall, $\text{ALPS}^\dagger$ outperforms PcLast in both scenarios. Both algorithms perform better across all domains when using both the high- and low-level planners jointly, demonstrating that they learn a latent space useful for planning. Our distance metric's ability to disentangle the state space with respect to the dynamics can be clearly seen in the Spiral domain when the high-level planner is not included, resulting in similar performance in the low- and high-level settings with $\text{ALPS}^\dagger$. However, PcLast has a substantial performance reduction when not using the high-level planner (i.e., clusters=1). The above results demonstrate the scaled Laplacian representation's utility as both a distance metric for the low-level planner and a latent representation for subgoal identification through spectral clustering. \looseness=-1

\begin{table*}[ht!]
\centering
\caption[Success rate (\%) on the locomotion and manipulation tasks from OGBench]{\textbf{Success rate (\%) on each of the locomotion and manipulation tasks from OGBench considered.} The results are averaged over \(\nseeds\) seeds (4 seeds for pixel-based tasks), and we report the standard deviation after the ± sign. \ac{ALPS} outperforms the model-free \ac{GCRL} algorithms with $p<0.001$ using a Holm-Bonferroni-corrected two-sided Wilcoxon signed-rank test. Values in \textbf{bold} denote the largest mean in each row as a visual aid.}
\begin{adjustbox}{max width=\textwidth}
\small
\setlength{\tabcolsep}{3pt}
\begin{tabular}{lll*{7}{c}}
\toprule
\textbf{Environment} & \textbf{Dataset Type} & \textbf{Dataset} & \textbf{GCBC} & \textbf{GCIVL} & \textbf{GCIQL} & \textbf{QRL} & \textbf{CRL} & \textbf{HIQL} & \textbf{ALPS} \\
\midrule
  \multirow{8}{*}{pointmaze} & \multirow{4}{*}{navigate} & pointmaze-medium-navigate-v0 & \resultcell{9}{6} & \resultcell{63}{6} & \resultcell{53}{8} & \resultcell{\textbf{82}}{5} & \resultcell{29}{7} & \resultcell{79}{5} & \resultcell{\textbf{82}}{10} \\
   &  & pointmaze-large-navigate-v0 & \resultcell{29}{6} & \resultcell{45}{5} & \resultcell{34}{3} & \resultcell{\textbf{86}}{9} & \resultcell{39}{7} & \resultcell{58}{5} & \resultcell{80}{8} \\
   &  & pointmaze-giant-navigate-v0 & \resultcell{1}{2} & \resultcell{0}{0} & \resultcell{0}{0} & \resultcell{\textbf{68}}{7} & \resultcell{27}{10} & \resultcell{46}{9} & \resultcell{67}{11} \\
   &  & pointmaze-teleport-navigate-v0 & \resultcell{25}{3} & \resultcell{\textbf{45}}{3} & \resultcell{24}{7} & \resultcell{4}{4} & \resultcell{24}{6} & \resultcell{18}{4} & \resultcell{40}{6} \\
  \hhline{~---------}
   & \multirow{4}{*}{stitch} & pointmaze-medium-stitch-v0 & \resultcell{23}{18} & \resultcell{70}{14} & \resultcell{21}{9} & \resultcell{80}{12} & \resultcell{0}{1} & \resultcell{74}{6} & \resultcell{\textbf{94}}{6} \\
   &  & pointmaze-large-stitch-v0 & \resultcell{7}{5} & \resultcell{12}{6} & \resultcell{31}{2} & \resultcell{84}{15} & \resultcell{0}{0} & \resultcell{13}{6} & \resultcell{\textbf{96}}{2} \\
   &  & pointmaze-giant-stitch-v0 & \resultcell{0}{0} & \resultcell{0}{0} & \resultcell{0}{0} & \resultcell{50}{8} & \resultcell{0}{0} & \resultcell{0}{0} & \resultcell{\textbf{98}}{1} \\
   &  & pointmaze-teleport-stitch-v0 & \resultcell{31}{9} & \resultcell{\textbf{44}}{2} & \resultcell{25}{3} & \resultcell{9}{5} & \resultcell{4}{3} & \resultcell{34}{4} & \resultcell{13}{4} \\
\midrule
  \multirow{11}{*}{antmaze} & \multirow{4}{*}{navigate} & antmaze-medium-navigate-v0 & \resultcell{29}{4} & \resultcell{72}{8} & \resultcell{71}{4} & \resultcell{88}{3} & \resultcell{95}{1} & \resultcell{96}{1} & \resultcell{\textbf{97}}{2} \\
   &  & antmaze-large-navigate-v0 & \resultcell{24}{2} & \resultcell{16}{5} & \resultcell{34}{4} & \resultcell{75}{6} & \resultcell{83}{4} & \resultcell{91}{2} & \resultcell{\textbf{93}}{5} \\
   &  & antmaze-giant-navigate-v0 & \resultcell{0}{0} & \resultcell{0}{0} & \resultcell{0}{0} & \resultcell{14}{3} & \resultcell{16}{3} & \resultcell{65}{5} & \resultcell{\textbf{69}}{9} \\
   &  & antmaze-teleport-navigate-v0 & \resultcell{26}{3} & \resultcell{39}{3} & \resultcell{35}{5} & \resultcell{35}{5} & \resultcell{\textbf{53}}{2} & \resultcell{42}{3} & \resultcell{45}{3} \\
  \hhline{~---------}
   & \multirow{4}{*}{stitch} & antmaze-medium-stitch-v0 & \resultcell{45}{11} & \resultcell{44}{6} & \resultcell{29}{6} & \resultcell{59}{7} & \resultcell{53}{6} & \resultcell{\textbf{94}}{1} & \resultcell{93}{7} \\
   &  & antmaze-large-stitch-v0 & \resultcell{3}{3} & \resultcell{18}{2} & \resultcell{7}{2} & \resultcell{18}{2} & \resultcell{11}{2} & \resultcell{67}{5} & \resultcell{\textbf{95}}{2} \\
   &  & antmaze-giant-stitch-v0 & \resultcell{0}{0} & \resultcell{0}{0} & \resultcell{0}{0} & \resultcell{0}{0} & \resultcell{0}{0} & \resultcell{2}{2} & \resultcell{\textbf{92}}{3} \\
   &  & antmaze-teleport-stitch-v0 & \resultcell{31}{6} & \resultcell{\textbf{39}}{3} & \resultcell{17}{2} & \resultcell{24}{5} & \resultcell{31}{4} & \resultcell{36}{2} & \resultcell{35}{11} \\
  \hhline{~---------}
   & \multirow{3}{*}{explore} & antmaze-medium-explore-v0 & \resultcell{2}{1} & \resultcell{19}{3} & \resultcell{13}{2} & \resultcell{1}{1} & \resultcell{3}{2} & \resultcell{37}{10} & \resultcell{\textbf{100}}{0} \\
   &  & antmaze-large-explore-v0 & \resultcell{0}{0} & \resultcell{10}{3} & \resultcell{0}{0} & \resultcell{0}{0} & \resultcell{0}{0} & \resultcell{4}{5} & \resultcell{\textbf{90}}{15} \\
   &  & antmaze-teleport-explore-v0 & \resultcell{2}{1} & \resultcell{32}{2} & \resultcell{7}{3} & \resultcell{2}{2} & \resultcell{20}{2} & \resultcell{34}{15} & \resultcell{\textbf{48}}{6} \\
\midrule
  \multirow{6}{*}{humanoidmaze} & \multirow{3}{*}{navigate} & humanoidmaze-medium-navigate-v0 & \resultcell{8}{2} & \resultcell{24}{2} & \resultcell{27}{2} & \resultcell{21}{8} & \resultcell{60}{4} & \resultcell{\textbf{89}}{2} & \resultcell{\textbf{89}}{5} \\
   &  & humanoidmaze-large-navigate-v0 & \resultcell{1}{0} & \resultcell{2}{1} & \resultcell{2}{1} & \resultcell{5}{1} & \resultcell{24}{4} & \resultcell{49}{4} & \resultcell{\textbf{56}}{5} \\
   &  & humanoidmaze-giant-navigate-v0 & \resultcell{0}{0} & \resultcell{0}{0} & \resultcell{0}{0} & \resultcell{1}{0} & \resultcell{3}{2} & \resultcell{12}{4} & \resultcell{\textbf{67}}{11} \\
  \hhline{~---------}
   & \multirow{3}{*}{stitch} & humanoidmaze-medium-stitch-v0 & \resultcell{29}{5} & \resultcell{12}{2} & \resultcell{12}{3} & \resultcell{18}{2} & \resultcell{36}{2} & \resultcell{\textbf{88}}{2} & \resultcell{68}{5} \\
   &  & humanoidmaze-large-stitch-v0 & \resultcell{6}{3} & \resultcell{1}{1} & \resultcell{0}{0} & \resultcell{3}{1} & \resultcell{4}{1} & \resultcell{28}{3} & \resultcell{\textbf{39}}{6} \\
   &  & humanoidmaze-giant-stitch-v0 & \resultcell{0}{0} & \resultcell{0}{0} & \resultcell{0}{0} & \resultcell{0}{0} & \resultcell{0}{0} & \resultcell{3}{2} & \resultcell{\textbf{62}}{6} \\
\midrule
  \multirow{11}{*}{visual-antmaze} & \multirow{4}{*}{navigate} & visual-antmaze-medium-navigate-v0 & \resultcell{11}{2} & \resultcell{22}{2} & \resultcell{11}{1} & \resultcell{0}{0} & \resultcell{\textbf{94}}{1} & \resultcell{93}{3} & \resultcell{\textbf{94}}{5} \\
   &  & visual-antmaze-large-navigate-v0 & \resultcell{4}{0} & \resultcell{5}{1} & \resultcell{4}{1} & \resultcell{0}{0} & \resultcell{84}{1} & \resultcell{53}{9} & \resultcell{\textbf{88}}{3} \\
   &  & visual-antmaze-giant-navigate-v0 & \resultcell{0}{1} & \resultcell{1}{1} & \resultcell{0}{0} & \resultcell{0}{0} & \resultcell{\textbf{47}}{2} & \resultcell{6}{4} & \resultcell{36}{7} \\
   &  & visual-antmaze-teleport-navigate-v0 & \resultcell{5}{1} & \resultcell{8}{1} & \resultcell{6}{1} & \resultcell{6}{3} & \resultcell{\textbf{48}}{2} & \resultcell{37}{2} & \resultcell{47}{3} \\
  \hhline{~---------}
   & \multirow{4}{*}{stitch} & visual-antmaze-medium-stitch-v0 & \resultcell{67}{4} & \resultcell{6}{2} & \resultcell{2}{0} & \resultcell{0}{0} & \resultcell{69}{2} & \resultcell{87}{2} & \resultcell{\textbf{95}}{2} \\
   &  & visual-antmaze-large-stitch-v0 & \resultcell{24}{3} & \resultcell{1}{1} & \resultcell{0}{0} & \resultcell{1}{1} & \resultcell{11}{3} & \resultcell{28}{0} & \resultcell{\textbf{90}}{2} \\
   &  & visual-antmaze-giant-stitch-v0 & \resultcell{0}{0} & \resultcell{0}{0} & \resultcell{0}{0} & \resultcell{0}{0} & \resultcell{0}{0} & \resultcell{0}{0} & \resultcell{\textbf{55}}{6} \\
   &  & visual-antmaze-teleport-stitch-v0 & \resultcell{32}{3} & \resultcell{1}{1} & \resultcell{1}{0} & \resultcell{1}{2} & \resultcell{32}{6} & \resultcell{\textbf{37}}{2} & \resultcell{21}{4} \\
  \hhline{~---------}
   & \multirow{3}{*}{explore} & visual-antmaze-medium-explore-v0 & \resultcell{0}{0} & \resultcell{0}{0} & \resultcell{0}{0} & \resultcell{0}{0} & \resultcell{0}{0} & \resultcell{0}{0} & \resultcell{\textbf{78}}{10} \\
   &  & visual-antmaze-large-explore-v0 & \resultcell{0}{0} & \resultcell{1}{0} & \resultcell{0}{0} & \resultcell{0}{0} & \resultcell{1}{0} & \resultcell{0}{0} & \resultcell{\textbf{26}}{10} \\
   &  & visual-antmaze-teleport-explore-v0 & \resultcell{0}{0} & \resultcell{0}{0} & \resultcell{0}{0} & \resultcell{0}{0} & \resultcell{1}{0} & \resultcell{19}{8} & \resultcell{\textbf{34}}{4} \\
\midrule
  \multirow{2}{*}{cube} & \multirow{2}{*}{play} & cube-single-play-v0 & \resultcell{6}{2} & \resultcell{53}{4} & \resultcell{\textbf{68}}{6} & \resultcell{5}{1} & \resultcell{19}{2} & \resultcell{15}{3} & \resultcell{\textbf{68}}{6} \\
   &  & cube-double-play-v0 & \resultcell{1}{1} & \resultcell{36}{5} & \resultcell{\textbf{40}}{5} & \resultcell{1}{0} & \resultcell{10}{2} & \resultcell{6}{2} & \resultcell{2}{1} \\
\midrule
  \multirow{1}{*}{scene} & \multirow{1}{*}{play} & scene-play-v0 & \resultcell{5}{1} & \resultcell{42}{4} & \resultcell{\textbf{51}}{4} & \resultcell{5}{1} & \resultcell{19}{2} & \resultcell{38}{3} & \resultcell{26}{3} \\
\bottomrule
\end{tabular}
\end{adjustbox}
\label{tab:ogbench_results}
\end{table*}

\subsection{Scaling to large state and action spaces}
\label{subsec:ogbench_task}

We now empirically demonstrate that ALPS also scales to large state and action spaces. The results are reported in Table~\ref{tab:ogbench_results}. \ac{ALPS} significantly outperforms the baselines across domains according to a two-sided paired Wilcoxon signed-rank test applied to per-domain mean performance, with Holm-Bonferroni correction \(p<0.001\); see Appendix~\ref{subsec:stattest} for further details. Importantly, our results are significant because all the \ac{GCRL} baselines we considered are model-free methods, due to the historical difficulty of getting model-based methods to succeed in these tasks. Furthermore, note that \ac{ALPS} is more robust to the size of the maze the agent navigates, handily outperforming model-free baselines in the giant mazes. This result provides evidence that the Laplacian representation can be a useful building block for successful planning with a learned model, the central hypothesis of this thesis. \looseness=-1

Three sets of results are particularly interesting to discuss here. First, ALPS’s performance is closer to the baseline on the \textit{pointmaze-navigate} tasks. These are the simplest tasks in OGBench, and here the behavior prior hinders performance. When this prior is removed, ALPS achieves a success rate close to $100\%$ in the medium and large mazes, as shown in the ablation study in the next section. This is not surprising, as the introduced inductive bias is intended to help in substantially harder tasks. \looseness=-1

\begin{table*}[htp]
\centering
\caption{\textbf{Success rates (\%) for the six variants of ALPS compared in Section~\ref{subsec:ablations} on the state-based OGBench locomotion tasks.} Results averaged over \(\nseeds\) seeds with standard deviation reported after ±. CEM, \(\prior\), and CEM + \(\prior\) are the low-level planning only variants. Dijkstra + CEM and Dijkstra + \(\prior\) are the hierarchical planning variants with either CEM or \(\prior\) acting as the low-level planner.}
\begin{adjustbox}{max width=\textwidth}
\small
\setlength{\tabcolsep}{3pt}
\begin{tabular}{ll*{6}{c}}
\toprule
\textbf{Environment} & \textbf{Dataset} & \textbf{CEM} & \textbf{\(\prior\)} & \textbf{CEM + \(\prior\)} & \textbf{Dijkstra + CEM} & \textbf{Dijkstra + \(\prior\)} & \textbf{ALPS}\\
\midrule
\multirow{4}{*}{pointmaze}
   & pointmaze-medium-navigate-v0 & \resultcell{58}{8} & \resultcell{19}{8} & \resultcell{30}{7} & \resultcell{100}{0} & \resultcell{46}{10} & \resultcell{82}{10} \\
   & pointmaze-large-navigate-v0 & \resultcell{19}{11} & \resultcell{17}{10} & \resultcell{21}{9} & \resultcell{100}{0} & \resultcell{44}{17} & \resultcell{80}{8} \\
   & pointmaze-medium-stitch-v0 & \resultcell{40}{13} & \resultcell{40}{12} & \resultcell{46}{11} & \resultcell{99}{3} & \resultcell{82}{10} & \resultcell{94}{6} \\
   & pointmaze-large-stitch-v0 & \resultcell{0}{0} & \resultcell{12}{9} & \resultcell{9}{9} & \resultcell{100}{1} & \resultcell{93}{4} & \resultcell{96}{2} \\
\midrule
\multirow{6}{*}{antmaze}
   & antmaze-medium-navigate-v0 & \resultcell{0}{0} & \resultcell{50}{10} & \resultcell{57}{8} & \resultcell{0}{0} & \resultcell{92}{5} & \resultcell{97}{2} \\
   & antmaze-large-navigate-v0 & \resultcell{0}{0} & \resultcell{30}{11} & \resultcell{29}{12} & \resultcell{0}{0} & \resultcell{90}{4} & \resultcell{93}{5} \\
   & antmaze-medium-stitch-v0 & \resultcell{0}{0} & \resultcell{50}{13} & \resultcell{55}{12} & \resultcell{0}{0} & \resultcell{92}{4} & \resultcell{93}{7} \\
   & antmaze-large-stitch-v0 & \resultcell{0}{0} & \resultcell{14}{8} & \resultcell{11}{7} & \resultcell{0}{0} & \resultcell{93}{2} & \resultcell{95}{2} \\
   & antmaze-medium-explore-v0 & \resultcell{0}{0} & \resultcell{11}{6} & \resultcell{7}{3} & \resultcell{47}{22} & \resultcell{92}{4} & \resultcell{100}{0} \\
   & antmaze-large-explore-v0 & \resultcell{0}{0} & \resultcell{1}{1} & \resultcell{0}{0} & \resultcell{11}{8} & \resultcell{43}{9} & \resultcell{90}{15} \\
\midrule
\multirow{4}{*}{humanoidmaze}
   & humanoidmaze-medium-navigate-v0 & \resultcell{0}{0} & \resultcell{29}{12} & \resultcell{26}{10} & \resultcell{0}{0} & \resultcell{86}{6} & \resultcell{89}{5} \\
   & humanoidmaze-large-navigate-v0 & \resultcell{0}{0} & \resultcell{3}{2} & \resultcell{3}{2} & \resultcell{0}{0} & \resultcell{58}{6} & \resultcell{56}{5} \\
   & humanoidmaze-medium-stitch-v0 & \resultcell{0}{0} & \resultcell{33}{8} & \resultcell{31}{7} & \resultcell{0}{0} & \resultcell{64}{6} & \resultcell{68}{5} \\
   & humanoidmaze-large-stitch-v0 & \resultcell{0}{0} & \resultcell{2}{2} & \resultcell{1}{2} & \resultcell{0}{0} & \resultcell{48}{5} & \resultcell{56}{5} \\
\bottomrule
\end{tabular}
\end{adjustbox}
\label{tab:ablation}
\end{table*}

The \textit{teleport} mazes are also particularly interesting in the context of the Laplacian representation. When an agent enters a teleporter, it may reappear elsewhere in the maze or in an inescapable absorbing region from which the goal cannot be reached. Despite ALPS’s strong overall performance in these environments, inspection reveals it would be unable to achieve a $100\%$ success rate. This limitation arises from the symmetries enforced by the \ac{ALLO} objective: the learned Laplacian representation embeds teleporter entrances and exits as nearby states, incentivizing the agent to use the teleporter, which is suboptimal due to the risk of entering an inescapable region. We verify this empirically in Fig.~\ref{fig:teleport_issues} where all the teleport gates are getting clustered to the same cluster. Thus, if a goal is closer to the teleport gates, the agent attempts to access them, which is suboptimal due to the risk of entering an inescapable region. To the best of our knowledge, these results are the first to suggest that the symmetrization function commonly used when learning Laplacian representations can be problematic, as prior work has reported success in asymmetric environments~\citep{klissarov2023deep}.\looseness=-1

While ALPS achieves approximately \(70\%\) success rate on the \textit{single-cube} task, performance degrades drastically on multi-cube variants. We attribute this drop to the global nature of the Laplacian representation and the compounding dimensionality of the joint state space, where each additional cube introduces 9 new dimensions. In the multi-object manipulation, task dynamics are often conditionally independent; the resting position of a distractor cube is irrelevant while the arm manipulates a target cube. Yet the Laplacian representation inherently models the global state space topology, failing to capture this spatial invariance, mapping two trajectories with identical active-cube movements but differing resting-cube positions to entirely separate, distant regions in the latent space. We believe future approaches should address this through factorized representation spaces. \looseness=-1

\section{Ablation Studies}
\label{subsec:ablations}

Results thus far demonstrate that the Laplacian representation can be used in hierarchical decision-time planning with images and in large continuous state and action spaces. This section provides additional analysis to better understand \ac{ALPS}'s empirical success. \looseness=-1

\subsection{Components of ALPS}
\label{subsec:components}
We perform a series of ablations to investigate how the components of \ac{ALPS} influence the performance of the agent. Our aim is to disentangle the roles of CEM and the behavior prior in the low-level planner and to understand the contribution of the high-level planner to \ac{ALPS}' performance on the OGBench tasks. We compare \ac{ALPS} with five variations, each including one or two of the following components: the high-level planner (Dijkstra+), CEM, and \(\prior\). We perform hyperparameter tuning for all ablations, reported in Table \ref{tab:hyperparameters}. A subset of results are presented in Table~\ref{tab:ablation}, with full results reported in Table~\ref{tab:all_ablations}. We follow the same evaluation procedure described in Section~\ref{sec:experiments}. \looseness=-1

\textbf{The role of the behavior prior}: Without the behavior prior (CEM, Dijkstra + CEM), performance dramatically reduces in locomotion tasks with challenging movement dynamics like \textit{ant} and \textit{humanoid}. We suspect two reasons for this observation. First, this reflects the slow convergence of CEM when only using noise to generate proposal trajectories, the original motivation for including a behavior prior. Second, we believe that random actions lead to out-of-distribution transitions for the forward model. The effect of the forward model being partially responsible due to out-of-distribution transitions can be observed in the outlier performance of the \textit{explore} dataset, which has greater state-action space coverage. \looseness=-1

\textbf{The role of CEM}: CEM is most impactful when the expert policy used to generate the dataset is not ideal. This is supported by the reduced performance of Dijkstra + \(\prior\) in the \textit{explore} datasets as compared to \ac{ALPS}. More surprising is the favorable performance of Dijkstra + CEM over Dijkstra + \(\prior\) in the pointmaze task with the \textit{navigate} dataset. We believe this is primarily due to the structure of the \textit{navigate} dataset. Because the agent visits multiple goals in sequence within a single trajectory, the dataset contains both approaching and departing behaviors with respect to each goal. As a result, the behavior prior may learn a stationary policy in regions near the goal. We observed this behavior in these settings when investigating this performance. \looseness=-1 

\textbf{The role of hierarchical planning}: The low-level planner alone is unable to effectively reach the goal in the locomotion environments, as seen by the drop in performance across the CEM, \(\prior\), and CEM + \(\prior\) variants. When the hierarchical planner is added back (denoted by ``Dijkstra+'') the results generally improve across domains, and especially in domains with noisy data-generating policies (e.g., \textit{explore} datasets). The impact of this hierarchical decomposition through an ablation on different number of clusters is expanded on below. This result demonstrates the importance of hierarchical planning for long-horizon planning in \ac{ALPS}. \looseness=-1

\subsection{Number of clusters}

\begin{figure}
    \centering
    \includegraphics[width=\linewidth]{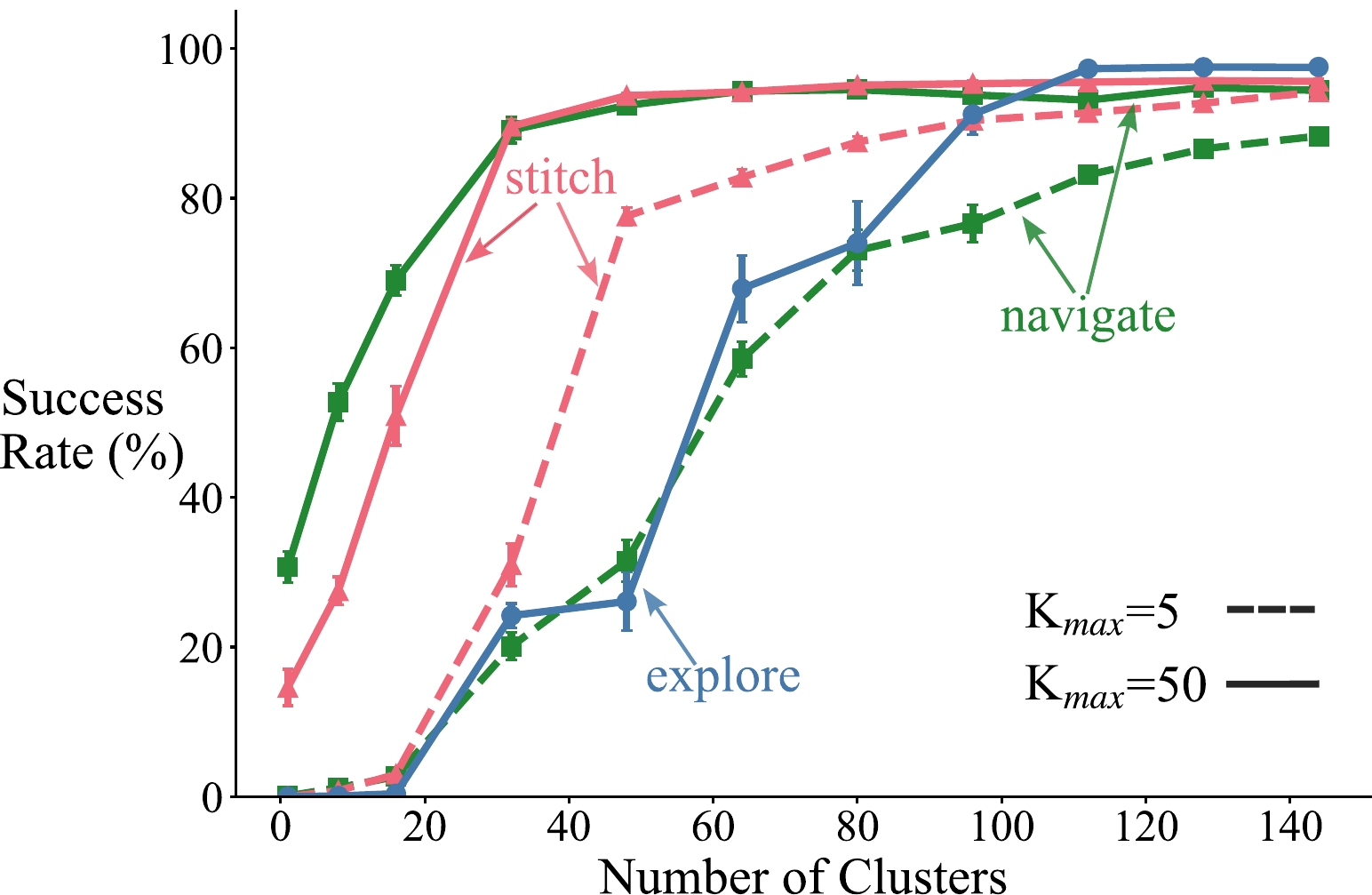}
    \caption{Success rate vs. number of clusters on OGBench \textit{large} maze for \textit{ant} task and types (\textit{navigate}, \textit{stitch}, \textit{explore}). Error bars indicate standard error.}
    \vspace{-0.4cm}
    \label{fig:clusters_ablation}
\end{figure}

We investigate how the performance of \ac{ALPS} is impacted by the number of clusters used by the high-level planner and the effective planning horizon of the behavior prior, i.e., the state-to-subgoal distance at which the behavior prior can produce good plans. 
The results, reported in Figure~\ref{fig:clusters_ablation}, reveal why \ac{ALPS} is so dominant on the explore datasets. As the effective horizon of the data generation policy reduces, i.e., when we artificially reduce \(\maxpriorHorizon\) or when the expert-data quality is reduced as in the \textit{explore} dataset, we can simply increase the number of partitions in our high-level planner to overcome the limitation of the low-level planner. In other words, \ac{ALPS} is robust to the quality of the process used to generate the expert dataset, unlike all the baselines.\looseness=-1


\section{Related Work}

Several methods have been proposed to solve the \ac{GCRL} problem. A common strategy is to condition a value function or policy on the goal~\citep{pmlr-v37-schaul15, nair2018visual, chane2021goal}, but this approach struggles as the horizon grows, since sparse rewards provide no learning signal until the goal is reached, making credit assignment over long action sequences challenging~\citep{nachum2018data, levy2017learning}. Hierarchical approaches address this by introducing temporal abstraction, decomposing the problem into manageable subproblems where a high-level policy sets intermediate subgoals executed by a low-level policy over shorter horizons. This can be done model-free~\citep{levy2017learning, park2023hiql} or, like ALPS, model-based using MPC for long-term reasoning~\citep{nasiriany2019planning, nair2019hierarchical, pertsch2021accelerating}. \looseness=-1

Although hierarchical strategies show considerable promise for long-horizon problems, identifying subgoals remains a key challenge. Graph partitioning has been widely used to identify bottleneck states in the transition graph~\citep{mcgovern2001, menache2002q, simsek2005}, while others learn structured latent spaces for subgoal generation via generative models~\citep{nair2018visual, chane2021goal}, contrastive objectives~\citep{zheng2023stabilizing, myers2024learning}, or expert demonstrations~\citep{konidaris}. Similarly to \ac{ALPS} and PcLast, HILP~\citep{park2024foundation} learns a geometric state abstraction preserving the temporal structure of the MDP. SAW~\citep{zhou2026flattening} instead learns a flat policy via advantage-weighted importance sampling bootstrapped on in-trajectory waypoint states. We compare against both HILP and SAW in Appendix~\ref{appendix:baselines}. \looseness=-1

\section{Conclusion}
In this paper, we have demonstrated the usage of the Laplacian representation for decision-time planning through the development and evaluation of \ac{ALPS}. ALPS is a hierarchical decision-time planning algorithm that leverages the Laplacian representation for both subgoal identification and distance-guided trajectory optimization. ALPS successfully plans in continuous, high-dimensional state spaces using only pre-collected offline datasets, and outperforms commonly used model-free baselines on goal-conditioned tasks from OGBench. Through ablation studies, we show that the hierarchical decomposition is critical to ALPS's success, particularly when the offline dataset has limited coverage of the state-action space. \looseness=-1

There are several promising directions for advancing the application of the Laplacian representation in hierarchical planning. In Section~\ref{subsec:ablations}, we demonstrate that a lower-quality low-level planner can be overcome by increasing the number of subgoals used in the high-level planner. Developing a better low-level planner to use with the Laplacian representation could unlock planning at even longer horizons. Another avenue for future work is to investigate stacking the Laplacian representation on-top of a lower level latent space. Similarly to PcLast, which uses ACRO, an intermediate representation could improve the performance of \ac{ALPS}. A particularly interesting direction concerns the symmetry assumptions underlying the current approach. Learning an asymmetric Laplacian can address this limitation, and the commute time distance used for trajectory optimization can be replaced by the hitting time~\citep{klein}, which naturally captures the asymmetry of directed transitions. \looseness=-1

\section*{Impact Statement}
This paper presents work whose goal is to advance the field of Machine Learning. There are many potential societal consequences of our work, none which we feel must be specifically highlighted here.

\section*{Acknowledgements}
We thank Diego Gomez for providing an initial version of the ALLO objective, Siddarth Chandersekar, and Calarina Muslimani, along with members of the Reinforcement Learning and Artificial Intelligence (RLAI) lab, for helpful discussions and feedback. We also thank the authors of PcLast for helpful discussions and resolving our queries. Part of this work has taken place in the Intelligent Robot Learning (IRL) Lab and the Representation and Agent-based Learning (ReAL) Lab at the University of Alberta, which are supported in part by the Natural Sciences and Engineering Research Council of Canada (NSERC); Alberta Machine Intelligence Institute (Amii); Canada CIFAR AI Chair Program, Amii; Mitacs; and Alberta Innovates. It was also enabled in part by computational resources provided by the Digital Research Alliance of Canada.\looseness=-1


\bibliography{references}
\bibliographystyle{icml2026}

\appendix
\onecolumn

\section{Further Background}

In this section, we provide further mathematical details for completeness.

\subsection{CTD}
\label{appendix:CTD}
For completeness, we include the transformation of CTD to the scaled Laplacian representation following from \citep{Lovászrandomwalk} and \citep{klein}. CTD, \(\ctd(i,k)\), is defined as the expected number of steps it takes to travel from node \(i\) to \(k\) and back to \(i\) from a random walk \citep{Lovászrandomwalk}. 
\begin{theorem}
   \citep{klein} Let \(\Graph=(\States, \Edges)\) be a connected, undirected graph with \(N=|\States|\) nodes. Let \(\eigenvector_j\) be the \(j\)-th eigenvector of the graph Laplacian, \(\graphlap = \mathbf{D}-\mathbf{W}\), with corresponding eigenvalue \(\lambda_j\). The commute distance between states \(i\) and \(k\) is given by:
    \begin{equation}
        \ctd(i,k) = \volume(\Graph)\sum_{j=1}^{N-1} \bigg(\frac{\eigenvector_j(i)}{\sqrt{\lambda_j}} - \frac{\eigenvector_j(k)}{\sqrt{\lambda_j}}\bigg)^2
    \end{equation}
\end{theorem}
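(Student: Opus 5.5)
The plan is to express the commute time through the Moore--Penrose pseudoinverse $\graphlap^{+}$ of the combinatorial Laplacian $\graphlap=\mathbf{D}-\mathbf{W}$. We will show
\[
\ctd(i,k)=\volume(\Graph)\,(\mathbf{e}_i-\mathbf{e}_k)^\trans \graphlap^{+}(\mathbf{e}_i-\mathbf{e}_k),
\]
where $\mathbf{e}_i$ is the $i$-th standard basis vector. Expanding $\graphlap^{+}$ in the eigenbasis then gives the stated formula. The random walk throughout is the one with transition matrix $\mathbf{D}^{-1}\mathbf{W}$, where $d_x$ is the weighted degree of node $x$ and $\volume(\Graph)=\sum_x d_x$.

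\textbf{Step 1: a linear system for hitting times.} Fix a target node $k$ and let $\mathbf{h}_k(x)$ be the expected hitting time of $k$ starting from $x$, so $\mathbf{h}_k(k)=0$. First-step analysis gives, for $x\neq k$,
\[
d_x\mathbf{h}_k(x)-\sum_y w_{xy}\mathbf{h}_k(y)=d_x .
\]
At $x=k$ we use the expected return time to $k$. Because $\Graph$ is connected and undirected, the walk is irreducible with stationary distribution $d_x/\volume(\Graph)$. By Kac's lemma the return time is therefore $\volume(\Graph)/d_k = 1+\sum_y \tfrac{w_{ky}}{d_k}\mathbf{h}_k(y)$. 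This gives $(\graphlap\mathbf{h}_k)(k)=d_k-\volume(\Graph)$. Collecting both cases,
\[
\graphlap\mathbf{h}_k=\mathbf{d}-\volume(\Graph)\,\mathbf{e}_k .
\]
The same holds with $i$ in place of $k$.

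\textbf{Step 2: subtract and invert.} Subtracting the two systems gives $\graphlap(\mathbf{h}_k-\mathbf{h}_i)=\volume(\Graph)(\mathbf{e}_i-\mathbf{e}_k)$. Connectedness means $\ker\graphlap=\mathrm{span}(\mathbf{1})$, which equals $\mathrm{span}(\eigenvector_0)$. The right-hand side is orthogonal to $\mathbf{1}$, so
\[
\mathbf{h}_k-\mathbf{h}_i=\volume(\Graph)\,\graphlap^{+}(\mathbf{e}_i-\mathbf{e}_k)+\alpha\mathbf{1}
\]
for some scalar $\alpha$. Using $\mathbf{h}_k(k)=\mathbf{h}_i(i)=0$, the commute time is
\[
\ctd(i,k)=\mathbf{h}_k(i)+\mathbf{h}_i(k)=(\mathbf{e}_i-\mathbf{e}_k)^\trans(\mathbf{h}_k-\mathbf{h}_i).
\]
Since $(\mathbf{e}_i-\mathbf{e}_k)^\trans\mathbf{1}=0$, the $\alpha$ term drops out and the claimed quadratic form follows.

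\textbf{Step 3: spectral expansion.} Take an orthonormal eigenbasis with $\lambda_0=0$ and $\lambda_j>0$ for $j\ge1$, which again uses connectedness. Then $\graphlap^{+}=\sum_{j=1}^{N-1}\eigenvector_j\eigenvector_j^\trans/\lambda_j$. Substituting gives
\[
(\mathbf{e}_i-\mathbf{e}_k)^\trans\graphlap^{+}(\mathbf{e}_i-\mathbf{e}_k)=\sum_{j=1}^{N-1}\frac{(\eigenvector_j(i)-\eigenvector_j(k))^2}{\lambda_j},
\]
which is exactly the statement.

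The main obstacle is the boundary equation at $x=k$ in Step 1. The naive first-step recursion does not hold at the target, and one needs the return-time identity from the stationary distribution $\mathbf{d}/\volume(\Graph)$ to get the exact constant $\volume(\Graph)$. It is also essential that $\mathbf{e}_i-\mathbf{e}_k\perp\ker\graphlap$, since this is what makes the pseudoinverse solution valid and removes the free constant $\alpha$.
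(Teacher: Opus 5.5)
Your proof is correct and has the same skeleton as the paper's. Both reduce the commute time to the quadratic form $\mathrm{vol}(G)\,(\mathbf{e}_i-\mathbf{e}_k)^\top\mathbf{L}^{+}(\mathbf{e}_i-\mathbf{e}_k)$ and then expand $\mathbf{L}^{+}$ in the eigenbasis, and your Step~3 is exactly the paper's computation.

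The difference is in how that quadratic form is obtained. The paper simply asserts $c(i,k)=\mathrm{vol}(G)\bigl(\mathbf{L}^{\dagger}(i,i)-2\mathbf{L}^{\dagger}(i,k)+\mathbf{L}^{\dagger}(k,k)\bigr)$ and attributes it to Klein's electrical-network argument, i.e.\ commute time equals volume times effective resistance, and effective resistance is the pseudoinverse quadratic form. You instead derive it directly from the hitting-time systems $\mathbf{L}\mathbf{h}_k=\mathbf{d}-\mathrm{vol}(G)\,\mathbf{e}_k$.

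The paper's route is shorter but outsources the only nontrivial step. Yours is self-contained and shows exactly where each hypothesis enters:
\begin{itemize}
\item connectedness gives $\ker\mathbf{L}=\mathrm{span}(\mathbf{1})$ and finite hitting times;
\item undirectedness makes $\mathbf{L}$ symmetric, with stationary law proportional to degree;
\item using the combinatorial Laplacian $\mathbf{D}-\mathbf{W}$ rather than a normalized one is what produces the constant $\mathrm{vol}(G)$.
\end{itemize}

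One simplification: the boundary equation you flag as the main obstacle does not need Kac's lemma. Since $\mathbf{L}$ is symmetric with $\mathbf{L}\mathbf{1}=0$, every vector in its range sums to zero. The interior equations alone therefore force $(\mathbf{L}\mathbf{h}_k)(k)=-\sum_{x\neq k}d_x=d_k-\mathrm{vol}(G)$. Kac's return-time formula $\mathrm{vol}(G)/d_k$ then follows as a corollary rather than serving as an input.
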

\begin{proof}
This result was first established by \citet{klein} using electrical network theory. The CTD on a graph can be computed using the generalized inverse \(\mathbf{L}^\dagger\) of the graph Laplacian \(\mathbf{L}\).  Since, \(\mathbf{L}\) is eigendecomposable, \(\mathbf{L}^\dagger = \sum_{j=1}^{n-1}\frac{1}{\lambda_j}\eigenvector_j\eigenvector_j^\top\). Specifically for computing the commute time, we have: 
\begin{equation}
    \ctd(i,k) = \volume(\Graph)\left(\graphlap^\dagger(i,i) - 2\graphlap^\dagger(i,k) + \graphlap^\dagger(k,k) \right),
    \label{eq:commute-distance}
\end{equation}
where \(\volume(\Graph)=\sum_{i=0}^{N-1}\mathbf{d}(i)\) is the volume of the graph. Substituting \(\mathbf{L}^\dagger\) into Eq. \ref{eq:commute-distance}:
\begin{align*}
    \graphlap^\dagger(i,i) - 2\graphlap^\dagger(i,k) + \graphlap^\dagger(k,k) 
    &= \sum_{j=1}^{N-1} \frac{1}{\lambda_j} \eigenvector_j(i)^2 - 2 \sum_{j=1}^{N-1} \frac{1}{\lambda_j} \eigenvector_j(i)\eigenvector_j(k) + \sum_{j=1}^{N-1} \frac{1}{\lambda_j} \eigenvector_j(k)^2 \\
    &= \sum_{j=1}^{N-1} \frac{1}{\lambda_j} \bigg( \eigenvector_j(i)^2 - 2\eigenvector_j(i)\eigenvector_j(k) + \eigenvector_j(k)^2 \bigg) \\
    &= \sum_{j=1}^{N-1} \bigg( \frac{\eigenvector_j(i)}{\sqrt{\lambda_j}} - \frac{\eigenvector_j(k)}{\sqrt{\lambda_j}} \bigg)^2.
\end{align*}
\end{proof}
To leverage this theoretical result, we define the scaled Laplacian representation vector for a given node \(i\), denoted as \(\psi(i)\).\footnote{We refer to this as the scaled Laplacian representation, though it appears under several names in the literature, including the reachability-aware Laplacian representation \citep{wang2023reachability}, and the commute-time embedding \citep{qiu2007clustering}.} The \(j\)-th entry of this representation vector is scaled by the inverse square root of its corresponding eigenvalue: \(\psi_j(i) = \eigenvector_j(i)/\sqrt{\lambda_j}\). By substituting this definition into our proven result, we see that the commute time distance is exactly proportional to the squared Euclidean distance in this latent space. Since the volume of the graph, \(\volume(\Graph)\), is a constant scalar for any static environment, we can write:
\begin{equation}
    \ctd(i,k) \propto \|\psi(i) - \psi(k)\|_2^2 = \sum_{j=1}^{N-1} \bigg(\frac{\eigenvector_j(i)}{\sqrt{\lambda_j}} - \frac{\eigenvector_j(k)}{\sqrt{\lambda_j}}\bigg)^2.
\end{equation}

Hence, the \(\scaledlap\)-space is isometric to the commute time distance, making it an effective space for trajectory optimization. However, on large graphs, commute time degenerates toward \(\left(\frac{1}{\mathbf{d}(i)}+\frac{1}{\mathbf{d}(k)}\right)\), which is dominated by high-eigenvalue terms~\citep{vonluxburg14a}. Truncating to \(\numeigen\) eigenvectors discards these terms, preserving the global manifold structure. The resulting approximation error is bounded by \(O(1/\lambda_{\numeigen+1})\), where \(\lambda_{\numeigen+1}\) is the \((\numeigen+1)\)-th eigenvalue of the Laplacian. Also, note that we learn the eigenvectors of the Laplacian corresponding to the discounted transition operator \(P_\pi^{\gamma_s}=(1-\gamma_s)\sum_k(\gamma_sP_\pi)^k\), not the one-step transition operator. As noted in Appendix B of \citet{wulaplacian}, the generalized multi-step (discounted) formulation provides better performance for RL applications.  \looseness=-1

\section{Baselines}
We briefly explain all the baselines considered in this section. 
\label{appendix:baselines}
\subsection{PcLast}
\label{appendix:pclast}

Plannable Continuous Latent States ~\citep[PcLast;][]{koulpclast} maps observations to a latent representation and associates neighboring states together by optimizing a contrastive loss:
\begin{align}
    & \arg \min_{\pclastrep, \alpha, \beta} - \bigg(\log\left(\sigma(e^\alpha-e^\beta\|\pclastrep(\xi(s_t))-\pclastrep(\xi(s_{t+d}))\|^2)\right) \notag \\
    & + \log\left(1  - \sigma(e^\alpha-e^\beta\|\pclastrep(\xi(s_t))-\pclastrep(\xi(s_r))\|^2)\right)\bigg)
    \label{eq:pclast}
\end{align}
where \(\xi\) is a space representation to remove exogenous noise \citep{islam2022agent}, \(\pclastrep\) is the PcLast space, \(s_t\) and \(s_{t+d}\) are positive pairs \(d\) transitions apart, with \(s_r\) as a negative example, sampled uniformly from the data buffer, \(\alpha\) and \(\beta\) are the hyperparameters. \looseness=-1

\subsection{OGBench Baselines}
Goal-Conditioned Behavior Cloning \citep{lynch2020learning, ghoshlearning} is a baseline that performs behavior cloning by sampling a future state from the same trajectory as the goal. Goal-Conditioned Implicit \{V, Q\}-Learning are goal-conditioned variants of implicit Q-Learning ~\citep[IQL;][]{kostrikovoffline} that fits optimal value functions (V* and Q*) using expectile regression. Quasimetric RL \citep{wang2023optimal} learns a quasimetric distance function that satisfies the triangle inequality and uses this to represent goal-conditioned value functions. Contrastive RL \citep{eysenbach2022contrastive} uses contrastive learning to learn a goal-conditioned value function. Hierarchical Implicit Q-Learning \citep{park2023hiql} is a hierarchical model-free algorithm where the high-level predicts the representation of an optimal \(k\)-step subgoal, and the low-level policy predicts the optimal action for this subgoal. \looseness=-1

\subsection{Other Baselines}
HILP~\citep{park2024foundation} learns a geometric state abstraction that preserves the temporal structure of the MDP for zero-shot goal-conditioned planning. SAW~\citep{zhou2026flattening} learns a flat policy via advantage-weighted importance sampling over in-trajectory waypoint states. RLDP~\citep{jajoo2026regularized} learns a latent-predictive state encoder used as a task encoder for successor features trained via a contrastive loss. TD-JEPA~\citep{bagatella2025td} uses TD learning to train latent-predictive representations of long-term policy dynamics, learning policies directly in the resulting latent space. Since these methods were evaluated on OGBench, we report their results directly from the respective papers. \looseness=-1

\begin{table}[htbp]
\centering
\caption{\textbf{Success rate (\%) on a subset of locomotion and manipulation tasks from OGBench.} The results are averaged over \(\nseeds\) seeds (4 seeds for pixel-based tasks), and we report the standard deviation after the ± sign.}
\begin{adjustbox}{max width=\textwidth}
\small
\setlength{\tabcolsep}{1pt}
\begin{tabular}{lccccc}
\toprule
\textbf{Dataset} & \textbf{HILP} & \textbf{SAW} & \textbf{RLDP} & \textbf{TD-JEPA} & \textbf{ALPS} \\
\midrule
antmaze-medium-navigate-v0 & \resultcell{84}{3} &\resultcell{97}{1} &\resultcell{75}{4} &\resultcell{70}{4} &\resultcell{97}{2} \\
antmaze-medium-stitch-v0  & \resultcell{51}{2} &\resultcell{96}{1} &\resultcell{58}{3} &\resultcell{62}{5} &\resultcell{93}{7} \\
antmaze-large-navigate-v0  & \resultcell{53}{4} &\resultcell{90}{3} &\resultcell{36}{5} &\resultcell{57}{4} &\resultcell{93}{5} \\
antmaze-large-stitch-v0 & \resultcell{12}{2} &\resultcell{66}{9} &\resultcell{20}{3} &\resultcell{41}{3} &\resultcell{95}{2} \\
antmaze-medium-explore-v0 & \resultcell{2}{1} &\resultcell{25}{4} &\resultcell{5}{2} &\resultcell{20}{2} &\resultcell{100}{0} \\
cube-single-play-v0 & \resultcell{74}{4} &\resultcell{72}{5} &\resultcell{20}{2} &\resultcell{34}{3} &\resultcell{68}{6} \\
scene-play-v0 & \resultcell{44}{2} &\resultcell{63}{6} &\resultcell{12}{2} &\resultcell{38}{1} &\resultcell{26}{3} \\
visual-antmaze-medium-navigate-v0 & \resultcell{85}{4} &\resultcell{95}{0} &\resultcell{98}{1} &\resultcell{97}{1} &\resultcell{94}{5} \\
visual-antmaze-large-navigate-v0 & \resultcell{47}{4} &\resultcell{82}{4} &\resultcell{64}{4} &\resultcell{75}{3} &\resultcell{88}{3} \\
\bottomrule
\end{tabular}
\end{adjustbox}
\end{table}

\section{Statistical testing}
\label{subsec:stattest}

Results reported in Section~\ref{sec:experiments} on OGBench are statistically tested using the Wilcoxon signed-rank test. For each baseline, \ac{ALPS} and the baseline are compared across datasets to show the two algorithms aren't drawing from the same distribution. Because we are testing across multiple baselines with a single sample of \ac{ALPS}  we use the Holm-Bonferroni corrections. We set an acceptable error rate of $p<0.001$, which our tests show was too cautious in Table~\ref{tab:stat_tests}.

\begin{table*}[htbp]
    \centering
    \caption[Wilcoxon signed-rank test comparing ALPS against baselines]{Wilcoxon signed-rank test \(p\)-values (two-sided) with Holm-Bonferroni correction comparing ALPS against baselines.}
    \begin{tabular}{ccccccc}
    \toprule
       & \textbf{GCBC}  & \textbf{GCIVL} & \textbf{GCIQL} & \textbf{QRL} & \textbf{CRL} & \textbf{HIQL} \\
       \midrule
       \(p\)-value  & \(0.000001\) & \(0.000002\) & \(0.000001\) & \(0.000001\) & \(0.000002\) &\(0.000031\) \\
    \bottomrule
    \end{tabular}
    \label{tab:stat_tests}
\end{table*}

\section{Cross-Entropy Method for Decision-time Planning}

The Cross-Entropy Method ~\citep[CEM;][]{cem} is a popular MPC-based planner that has been successful in model-based RL settings \citep{finn2017deep, chua2018deep, hafner2019learning, pinneri2021sample, gurtler2025long}. CEM begins by sampling action sequences from a normal distribution and updating its mean and variance based on the lowest-cost trajectories. We used the behavior prior, \(\prior\), to generate the initial action sequence. \(\prior\) predicts an action \(\hat{A}_t\) in a state \(S_t\), which is then fed to the forward model \(\model\) to predict the next state \(\hat{S}_{t+1}\). This predicted state \(\hat{S}_{t+1}\) is then fed back to \(\prior\) to predict \(\hat{A}_{t+1}\). This process repeats until the planning horizon \(H\), resulting in an action sequence \(\mathbf{a}_{t:t+H-1}\). Correlated noise is then added to the initial action sequence to generate multiple action sequences. CEM uses these action sequences to generate multiple trajectories, whose costs are computed in scaled Laplacian representation, \(\scaledlap\)-space, because it is isometric to CTD. The CEM algorithm is outlined in Algorithm~\ref {alg:cem-planner}.

\label{appendix:pseudocode}
\begin{algorithm}[htbp]
    \caption{Cross-Entropy Method Planner}
    \label{alg:cem-planner}
    \begin{algorithmic}
        \small
        \STATE {\bfseries Input}: \(s, z_{\text{sub}}\), \(\scaledlap\), \(\model\), \(\prior\)
        \STATE {\bfseries CEM parameters:} planner horizon \(H\), iterations \(N_{\text{iter}}\), number of samples \(N_\text{s}\), elite ratio \(N_\text{e}\)
        \STATE {\color{gray} === Warm Start CEM with Policy ===}
        \STATE \(\mathbf{a}_{1:H} \leftarrow\) rollout \(\prior\) auto-regressively with \(\model\) from \(s\)
        \STATE {\color{gray} === CEM iterations ===}
        \FOR{\(i = 1 \text{ to } N_{\text{iter}} \)}
            \STATE sample \(\{\mathbf{a}^j_{1:H}\}_{j=1}^{N_\text{s}}\) around \(\mathbf{a}_{1:H}\) \hfill {\color{gray}  \(\triangleright\) sample action sequences}
            \STATE {\color{gray} === Cost Computation ===}
            \FOR{\(j=1 \text{ to } N_\text{s}\)}
                \STATE \(\{\hat{S_t}\}_{t=1}^H \leftarrow \model(s, \mathbf{a}_{1:H}^j)\)  \hfill {\color{gray}  \(\triangleright\) rollout action sequences using \(\model\)}
                \STATE \(J^m \leftarrow \sum_{t=1}^{H} (\|\scaledlap(\hat{S_t}^m)-z_{\text{sub}}\| + \lambda \|\mathbf{a}^m_t\|^2\))
            \ENDFOR
            \STATE \(\mathbf{a}_{1:H} \leftarrow\) mean of top-\(N_\text{e}\) elite trajectories \hfill {\color{gray} \(\triangleright\) update sampling distribution}
        \ENDFOR
        \STATE {\bfseries return} \(\mathbf{a}_{1:H}\)
    \end{algorithmic}
\end{algorithm}

\section{Environments}
In this section, we will discuss all the environments that were considered for evaluation of ALPS.
\subsection{Maze2D---PointMass Environments}
\label{appendix:pclast_envs}
We introduce the \textit{Maze2D---PointMass} environments here, introduced in \citep{koulpclast}. Each maze is a unit square with varied wall configurations. The agent controls a point mass with actions corresponding to the coordinate space change $(\Delta x, \Delta y)$ bounded by the range $[-0.2, 0.2]$ for each action. Observations are defined as a single-channel $(100 \times 100)$ image encoding the current position of the agent and no other environmental information. A Gaussian blur ($\sigma=1.0$) is applied to the agent's coordinate position, and the resulting image is normalized to $[0, 1]$ (Fig.~\ref{fig:observation}). In the presence of obstacles, the point mass starts from \(s_t\) and is moved along the direction of \(a_t\) until it collides with an obstacle. We consider all three variants: \textit{Hallway}, \textit{Rooms}, and \textit{Spiral} of the Maze environment, whose layouts are shown in Fig. \ref{fig:PcLast_envs}. An offline dataset of \(500K\) transitions is generated using a uniform random policy. We follow PcLast's empirical design where the agent must navigate from a starting position to within $0.03$ units of a known target position within \(30\) actions. \looseness=-1

\begin{figure}[htbp]
  \centering
  \begin{subfigure}{0.24\columnwidth}
    \includegraphics[width=\linewidth]{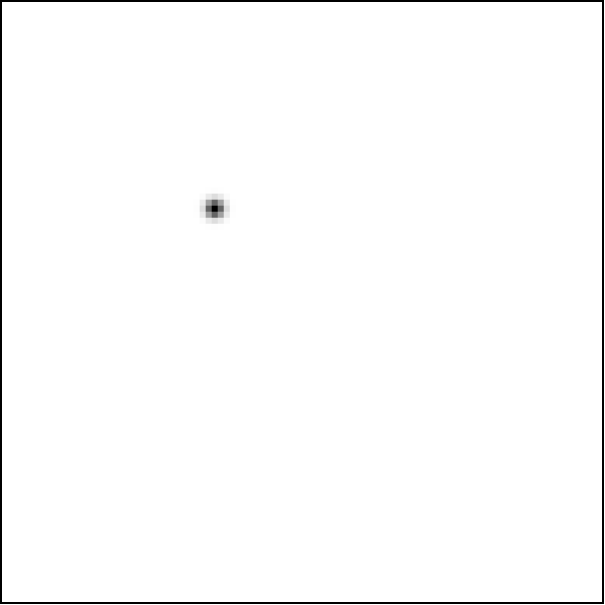}
    \caption{Input Observation}
    \label{fig:observation}
  \end{subfigure}
  \hfill
  \begin{subfigure}{0.24\columnwidth}
    \includegraphics[width=\linewidth]{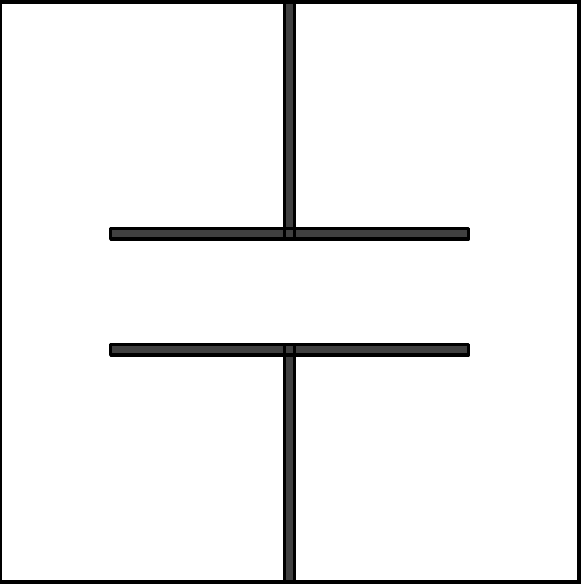}
    \caption{Hallway}
  \end{subfigure}
  \hfill
  \begin{subfigure}{0.24\columnwidth}
    \includegraphics[width=\linewidth]{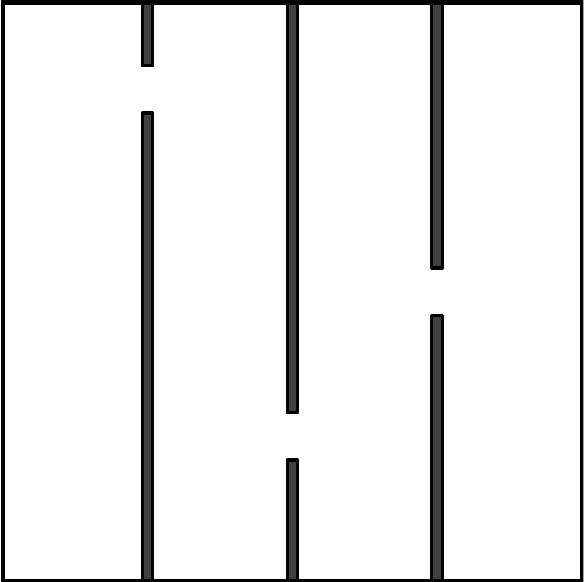}
    \caption{Rooms}
  \end{subfigure}
  \hfill
  \begin{subfigure}{0.24\columnwidth}
    \includegraphics[width=\linewidth]{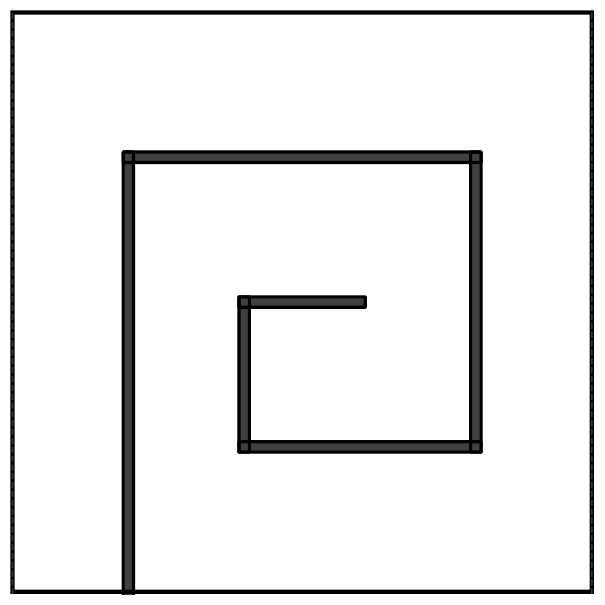}
    \caption{Spiral}
    \label{fig:spiral}
  \end{subfigure}
  \caption[2-D Maze environments]{(a) Input observation, (b)-(d) 2-D Maze environments.}
  \label{fig:PcLast_envs}
\end{figure}

\subsection{OGBench Environments}
\label{appendix:ogbench_envs}

We use three locomotion tasks from OGBench: \pmaze, \amaze, and \hmaze. They require controlling 2-DoF \textit{ball}, 8-DoF \textit{ant}, and 21-DoF \textit{humanoid} bodies, respectively. We consider both state-based and pixel-based variants of these locomotion-based tasks. In state-based variants, the agent has access to the full low-dimensional state representation, including its current \(x \)-\(y\) position. In pixel-based variants, the agent only receives \(64\times64\times3\) images rendered from a third-person camera viewpoint. To avoid the need for recurrent networks, the floor is colored to enable the agent to infer its location directly from the images. \looseness=-1

\begin{figure}[hb]
    \centering
    \includegraphics[width=0.9\linewidth]{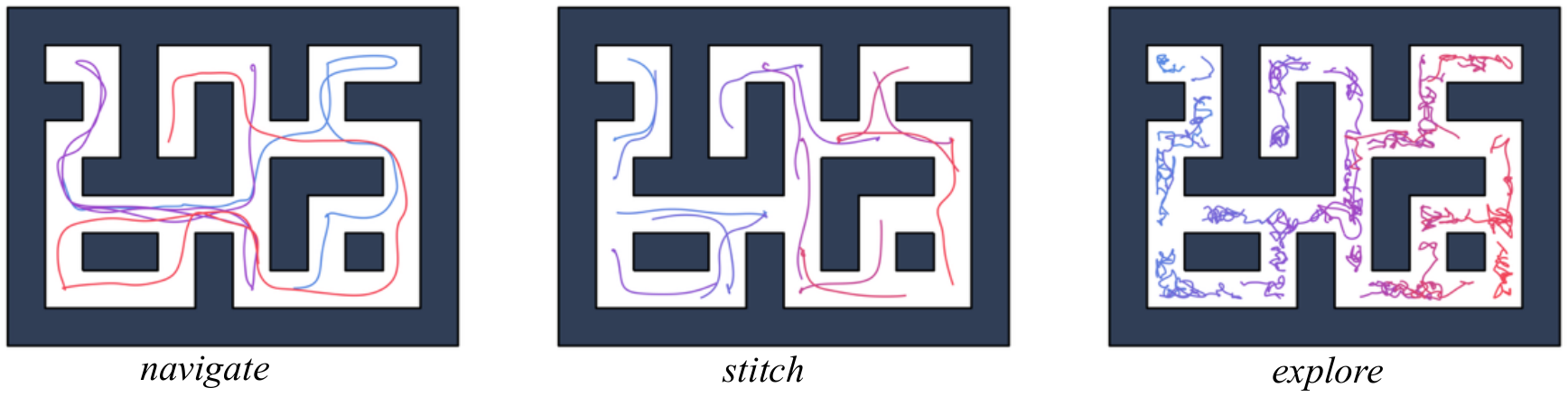}
    \caption{OGBench dataset variants.}
    \label{fig:dataset}
\end{figure}

There are three dataset variants, each collected through a directional policy trained via SAC \citep{haarnoja2018soft} and a high-level waypoint controller. The dataset types are as follows: (i) \(\textit{navigate}\), collected by a noisy expert policy that navigates the maze by repeatedly reaching randomly sampled goals, (ii) \(\textit{stitch}\), collected through shorter goal-reaching trajectories from a noisy expert, which test the agent's stitching ability, and (iii) \(\textit{explore}\), collected by commanding the low-level policy with a large amount of action noise, aimed to test agent's navigation skills from extremely low-quality (yet high-coverage) data. When collecting datasets, Gaussian noise with a standard deviation of 0.5 (\textit{pointmaze}), 1.0 (\textit{explore}), or 0.2 (others) is added to the expert actions. Fig.~\ref{fig:dataset} shows example trajectories in these datasets. \looseness=-1

All the mazes considered in OGBench for locomotion tasks are depicted in Fig.~\ref{fig:ogbench_envs}. It consists of four mazes:
\textit{medium}, \textit{large}, \textit{giant}, and \textit{teleport}. \textit{medium} and \textit{large} maze have the same layouts as in D4RL~\citep{fu2020d4rl}. \textit{giant} maze is the largest maze, twice the size of \textit{large}, and is primarily designed to test the agent's long-horizon reasoning, i.e., capability of navigating from a starting state to a goal state that is many steps apart. \textit{teleport} is a stochastic environment with teleportation gates: black hole as \textit{teleport-in} and white hole as \textit{teleport-out}. If the agent enters the black hole, it is randomly spawned at any of the white holes. One of the white holes is a dead end; thus, accessing the teleport holes is risky. Hence, the agent must learn to avoid black holes to complete the task reliably. \looseness=-1

\begin{figure}[htbp]
    \centering
    \includegraphics[width=\linewidth]{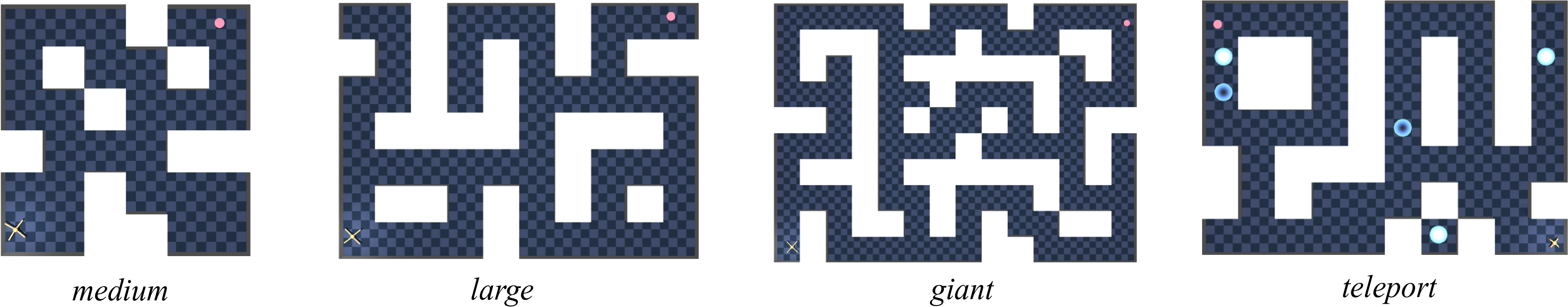}
    \caption{OGBench mazes.}
    \label{fig:ogbench_envs}
\end{figure}

We also consider two variants of robotic manipulation tasks: \textit{Cube} and \textit{Scene}. These tasks are designed to test the agent's object manipulation, sequential generalization, and combinatorial generalization abilities. These environments come with a \(6\)-DoF UR5e robot arm and Robotiq \(2F-85\) gripper model from MuJoCo, controlled with a \(5\)-D action space end-effector. The dataset is collected by non-Markovian scripted policies with temporally correlated noise~\citep{parkogbench}. \looseness=-1

\textit{Cube} tasks involve the pick-and-place manipulation of cube blocks, with the goal of controlling the robotic arm to arrange the cubes into a desired configuration. We consider two variants: \textit{single} and \textit{double}, in which the agent is required to move, swap, stack, and permute the cube blocks. \textit{Scene} tasks involve manipulating the robotic arm to press the button to toggle lock states, pick up the cube, and put it in the drawer. This task is designed to challenge the sequential, long-horizon reasoning capabilities, ranging from a single atomic (e.g., a single pick-and-place) to the longest task involving eight atomic behaviors. Hence, the agent must be able to plan and sequentially combine the learned manipulation skills. The layout for \textit{cube} and \textit{scene} is shown in Fig.~\ref{fig:manipulation}. \looseness=-1

\begin{figure}[htbp]
    \centering
    \includegraphics[width=0.8\linewidth]{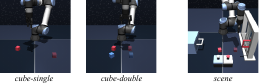}
    \caption{OGBench manipulation environments.}
    \label{fig:manipulation}
\end{figure}

\begin{table}[htbp]
\caption{Environment specifications.}
\centering
\small
\resizebox{0.8\textwidth}{!}{
\begin{tabular}{llccc}
\toprule
\textbf{Environment Type} & \textbf{Environment} & \textbf{State Dim} & \textbf{Action Dim} & \textbf{Max Episode Length} \\
\hline
\multirow{4}{*}{pointmaze} & pointmaze-medium-v0 & 2 & 2 & 1000 \\
 & pointmaze-large-v0 & 2 & 2 & 1000 \\
 & pointmaze-giant-v0 & 2 & 2 & 1000 \\
 & pointmaze-teleport-v0 & 2 & 2 & 1000 \\
\hline
\multirow{4}{*}{antmaze} & antmaze-medium-v0 & 29 & 8 & 1000 \\
 & antmaze-large-v0 & 29 & 8 & 1000 \\
 & antmaze-giant-v0 & 29 & 8 & 1000 \\
 & antmaze-teleport-v0 & 29 & 8 & 1000 \\
\hline
\multirow{3}{*}{humanoidmaze} & humanoidmaze-medium-v0 & 69 & 21 & 2000 \\
 & humanoidmaze-large-v0 & 69 & 21 & 2000 \\
 & humanoidmaze-giant-v0 & 69 & 21 & 4000 \\
\hline
\multirow{4}{*}{visual-antmaze} & visual-antmaze-medium-v0 & \(64 \times 64 \times 3\) & 8 & 1000 \\
 & visual-antmaze-large-v0 & \(64 \times 64 \times 3\)  & 8 & 1000 \\
 & visual-antmaze-giant-v0 & \(64 \times 64 \times 3\)  & 8 & 1000 \\
 & visual-antmaze-teleport-v0 & \(64 \times 64 \times 3\)  & 8 & 1000 \\
\hline
\multirow{2}{*}{cube} & cube-single-v0 & 28 & 5 & 200 \\
 & cube-double-v0 & 37 & 5 & 500 \\
\hline
\multirow{1}{*}{scene} & scene-v0 & 40 & 5 & 750 \\
\bottomrule
\end{tabular}
}
\label{tab:environments}
\end{table}

An episode ends as soon as the agent reaches the proximity of the goal location that defines success, or when the episode ends, which varies across environments. For locomotion tasks, joint positions are not used to determine success, unlike previous works \citep{park2023hiql}. As mentioned in the OGBench, for manipulation tasks, success criteria are based solely on the object configurations; the arm pose is not considered in determining success. Specifically, this proximity value is 0.5 units for \(\pmaze\), 1 unit for \(\amaze/\hmaze\), 0.04 units for \textit{cube} and \textit{scene} individual tasks. Since the number of environmental steps and the dataset collection procedure differ across environments, we report both the environment and dataset specifications in Tables~\ref{tab:environments} and \ref{tab:datasets}, respectively. \looseness=-1

Each task in OGBench has \(5\) pre-defined start-goal pairs, which are slightly randomized on every reset. Following the OGBench protocols, the performance of each agent is averaged over \(750\) evaluations (\(3\) evaluation epochs \(\times\) \(5\) test-time goals \(\times\) \(50\) rollouts). Since ALPS involves sequential training, we adopt the following evaluation protocol. Since ALPS is not an end-to-end algorithm, for a fair comparison with the rest of the algorithms, the Laplacian representation from ALLO is first pre-trained and frozen at three distinct checkpoints (\(800\)K, \(900\)K, and \(1\)M steps for state-based tasks and \(300\)K, \(400\)K, and \(500\)K steps for pixel-based tasks). Then, behavior prior is trained with these weights for the corresponding time steps. All the OGBench baselines are directly evaluated at \(800\)K, \(900\)K, and \(1\)M steps for state-based tasks and \(300\)K, \(400\)K, and \(500\)K steps for pixel-based tasks. The final performance is averaged across these three evaluation stages and 8 random seeds for state-based tasks and 4 random seeds for pixel-based tasks. \looseness=-1

\begin{table}[htbp]
\caption{Dataset specifications.}
\centering
\small
\resizebox{0.95\textwidth}{!}{
\begin{tabular}{llllcc}
\toprule
\textbf{Environment Type} & \textbf{Dataset Type} & \textbf{Dataset} & \textbf{\# Transitions} & \textbf{\# Episodes} & \textbf{Data Episode Length} \\
\hline
\multirow{8}{*}{pointmaze} & \multirow{4}{*}{navigate} & pointmaze-medium-navigate-v0 & 1M & 1000 & 1000 \\
 &  & pointmaze-large-navigate-v0 & 1M & 1000 & 1000 \\
 &  & pointmaze-giant-navigate-v0 & 1M & 500 & 2000 \\
 &  & pointmaze-teleport-navigate-v0 & 1M & 1000 & 1000 \\
\cline{2-6}
 & \multirow{4}{*}{stitch} & pointmaze-medium-stitch-v0 & 1M & 5000 & 200 \\
 &  & pointmaze-large-stitch-v0 & 1M & 5000 & 200 \\
 &  & pointmaze-giant-stitch-v0 & 1M & 5000 & 200 \\
 &  & pointmaze-teleport-stitch-v0 & 1M & 5000 & 200 \\
\hline
\multirow{11}{*}{antmaze} & \multirow{4}{*}{navigate} & antmaze-medium-navigate-v0 & 1M & 1000 & 1000 \\
 &  & antmaze-large-navigate-v0 & 1M & 1000 & 1000 \\
 &  & antmaze-giant-navigate-v0 & 1M & 500 & 2000 \\
 &  & antmaze-teleport-navigate-v0 & 1M & 1000 & 1000 \\
\cline{2-6}
 & \multirow{4}{*}{stitch} & antmaze-medium-stitch-v0 & 1M & 5000 & 200 \\
 &  & antmaze-large-stitch-v0 & 1M & 5000 & 200 \\
 &  & antmaze-giant-stitch-v0 & 1M & 5000 & 200 \\
 &  & antmaze-teleport-stitch-v0 & 1M & 5000 & 200 \\
\cline{2-6}
 & \multirow{3}{*}{explore} & antmaze-medium-explore-v0 & 5M & 10000 & 500 \\
 &  & antmaze-large-explore-v0 & 5M & 10000 & 500 \\
 &  & antmaze-teleport-explore-v0 & 5M & 10000 & 500 \\
\hline
\multirow{6}{*}{humanoidmaze} & \multirow{3}{*}{navigate} & humanoidmaze-medium-navigate-v0 & 2M & 1000 & 2000 \\
 &  & humanoidmaze-large-navigate-v0 & 2M & 1000 & 2000 \\
 &  & humanoidmaze-giant-navigate-v0 & 4M & 1000 & 4000 \\
\cline{2-6}
 & \multirow{3}{*}{stitch} & humanoidmaze-medium-stitch-v0 & 2M & 5000 & 400 \\
 &  & humanoidmaze-large-stitch-v0 & 2M & 5000 & 400 \\
 &  & humanoidmaze-giant-stitch-v0 & 4M & 10000 & 400 \\
\hline
\multirow{11}{*}{visual-antmaze} & \multirow{4}{*}{navigate} & visual-antmaze-medium-navigate-v0 & 1M & 1000 & 1000 \\
 &  & visual-antmaze-large-navigate-v0 & 1M & 1000 & 1000 \\
 &  & visual-antmaze-giant-navigate-v0 & 1M & 500 & 2000 \\
 &  & visual-antmaze-teleport-navigate-v0 & 1M & 1000 & 1000 \\
\cline{2-6}
 & \multirow{4}{*}{stitch} & visual-antmaze-medium-stitch-v0 & 1M & 5000 & 200 \\
 &  & visual-antmaze-large-stitch-v0 & 1M & 5000 & 200 \\
 &  & visual-antmaze-giant-stitch-v0 & 1M & 5000 & 200 \\
 &  & visual-antmaze-teleport-stitch-v0 & 1M & 5000 & 200 \\
\cline{2-6}
 & \multirow{3}{*}{explore} & visual-antmaze-medium-explore-v0 & 5M & 10000 & 500 \\
 &  & visual-antmaze-large-explore-v0 & 5M & 10000 & 500 \\
 &  & visual-antmaze-teleport-explore-v0 & 5M & 10000 & 500 \\
\hline
\multirow{2}{*}{cube} & \multirow{2}{*}{play} & cube-single-play-v0 & 1M & 1000 & 1000 \\
 &  & cube-double-play-v0 & 1M & 1000 & 1000 \\
 \hline
\multirow{1}{*}{scene} & \multirow{1}{*}{play} & scene-play-v0 & 1M & 1000 & 1000 \\
\bottomrule
\end{tabular}
}
\label{tab:datasets}
\end{table}


\section{Implementation details}
\label{sec:hypers}
We use the same architecture for all domains. The ALLO encoder \(\laprep\) is a four-layer MLP network with hidden layers of size 256. The first layer has Layer norm~\citep{ba2016layer} followed by a tanh activation, while all other layers have ReLU activations. The network outputs 32 eigenvectors, decided by a sweep over 16, 32, and 64. The forward model \(\model\) network is a four-layer MLP with hidden layers of size 512 and ReLU activations. The behavior prior \(\prior\) is a four-layer MLP with hidden size 512 and ReLU activations. For image states, three convolutional layers are prepended to all networks. 

For training the \ac{ALLO} encoder, we set the geometric distribution parameter \(\gamma_s = 0.2\) for PcLast environments, \(\gamma_s = 0.6\) for OGBench locomotions tasks, and \(\gamma_s = 0.2\) for OGBench manipulation tasks. All other hyperparameters are set to the default values from \citet{gomezproper} and listed in Table \ref{tab:hyperparameters}. These values are chosen through a sweep over values in the range \([0.2, 0.8]\) with a stride of \(0.1\). Unless otherwise specified, the horizon used for the behavior prior is \(\priorHorizon \sim U[1, 50]\), and the max horizon for the forward model \(\modelHorizon=10\). All networks are trained using Adam optimizer \citep{kingma2014adam} with a step size of \(\alpha=10^{-4}\) for \(\laprep\) chosen as the default from \citep{gomezproper} and \(\alpha = 3 \times 10^{-4}\) for \(\model\) and \(\prior\) with a batch size of 1024 (256 for pixel-based tasks). We only tuned the depth of the neural networks. For domains with continuous states, we centered the values to have a zero mean and a unit standard deviation. For image domains, we normalize the values to be in the range \([0,1]\). All the neural networks are trained for \(1\)M steps for state-based tasks and \(500\)K for image-based tasks. The number of training steps and the batch size are chosen the same as in the OGBench implementation to ensure a fair comparison.

\begin{table}[htbp]
\centering
\small
\caption{Hyperparameters of ALPS.}
\label{tab:hyperparameters}
\begin{tabular}{lc}
\toprule
\textbf{Hyperparameter} & \textbf{Value} \\
\midrule
\multicolumn{2}{c}{\textcolor{gray}{\textit{=== ALLO Parameters ===}}} \\
Sampling discount (\(\gamma_s\)) & 0.6 (ogbench locomotion), 0.2 (ogbench manipulation, \textit{Maze})\\
Number of eigenvectors & 32 \\
ALLO learning rate & \(1 \times 10^{-4}\) \\
Duals initial value & \(-1.0\) \\
Barrier initial value & 0.5 \\
Min duals & \(-100.0\) \\ 
Max duals & 100.0 \\
Min barrier coefficients & 0.0 \\
Max barrier coefficients & 0.5 \\
Step size duals & 1.0 \\
ALLO training steps & \(1 \times 10^6\) (states), \(5\times10^5\) (pixels) \\
\multicolumn{2}{c}{\textcolor{gray}{\textit{=== Forward Model Parameters ===}}} \\
Multistep Dynamics Horizon (\(\modelHorizon\)) & 10(ogbench), 3 (\textit{Maze}) \\
Dynamics learning rate & \(3 \times 10^{-4}\) \\
Dynamics training steps & \(1 \times 10^6\) (states), \(5\times10^5\) (pixels) \\
\multicolumn{2}{c}{\textcolor{gray}{\textit{=== Behavior Prior Parameters ===}}} \\
Prior max horizon (\(\maxpriorHorizon\)) & 50 \\
Prior learning rate & \(3 \times 10^{-4}\) \\
Prior training steps & \(1 \times 10^6\) (states), \(5\times10^5\) (pixels) \\
\multicolumn{2}{c}{\textcolor{gray}{=== \textit{Planner Parameters ===}}} \\
Number of clusters & 64 (\textit{medium}), 96 (\textit{large}, \textit{teleport}), 128 (\textit{giant}) \\
Top-\(p\) & 0.95 \\
Noise beta (\(\beta\)) & 0.9 \\
Sigma (\(\sigma\)) & 0.5 \\
Penalty Coefficient (\(\lambda\)) & 0.01 \\
Planner Momentum (\(\eta\)) & 0.3 \\
Planner Horizon (\(H\)) & 20 (ogbench), 2(\textit{Maze}) \\
CEM Iterations (\(N_\text{iter}\)) & 5 \\
CEM Samples (\(N_\text{s}\)) & 500 \\
CEM Elite ratio (\(N_\text{e}\)) & 0.15 \\
\bottomrule
\end{tabular}
\end{table}

The number of clusters in the cluster graph is dependent on the maze size for the locomotion tasks: 64 (\textit{medium}), 96 (\textit{large}, \textit{teleport}), and 128 (\textit{giant}). For the manipulation tasks, the number of clusters is set to 8. To construct the connections of the cluster graph, nucleus sampling with a \(p\)-value of 0.95 is used, inspired by \citet{koulpclast}.

All planner parameters are consistent across all environments. The planning horizon \(H\) for the CEM is set to be \(20\), swept over \(\{10, 20\}\) on all tasks. We add temporally-correlated noise with \(\beta=0.9\) and magnitude \(\sigma=0.5\) to create \(N_\text{s}=500\) trajectories around the mean trajectory rolled out from \(\prior\). The actions are refined for \(N_\text{iter}=5\) iterations, updating the distribution using the top \(N_\text{e} = 15\%\) samples as elites. A momentum \(\eta\) of \(0.3\) is used to smooth out updates to the final mean trajectory. These values were selected by a grid search over \(N_\text{s} \in \{200, 500\}, \sigma \in \{0.3, 0.5, 0.7\}, \eta \in \{0.3, 0.5, 0.7\}\). The penalty coefficient \(\lambda\) for penalizing large actions in the cost function is set to \(0.01\), selected from \(\{0.01, 0.05, 0.1\}\).

\section{Further Results and Visualizations}
Here, we provide some additional results and visualizations to better understand ALPS.
\subsection{Visualization for PcLast Environments}
We further show the effectiveness of the Laplacian representation using the Maze2D---PointMass Spiral environment (Fig.~\ref{fig:spiral}). As shown in Fig.~\ref{fig:spiral_viz}, spectral clustering correctly partitions the spiral environment into well-connected regions respecting the wall boundaries. Crucially, the learned representation captures \ac{CTD} effectively. For instance, the inner part of the spiral is geometrically closer to the reference state but far in terms of temporal distance. Further, the topological accuracy provides a smooth gradient for trajectory optimization, validating ALPS's strong performance in these Maze2D environments with only a low-level planner, as reported in Table~\ref{tab:PcLast_results}.

\begin{figure}[ht]
    \centering
     \includegraphics[width=0.9\linewidth]{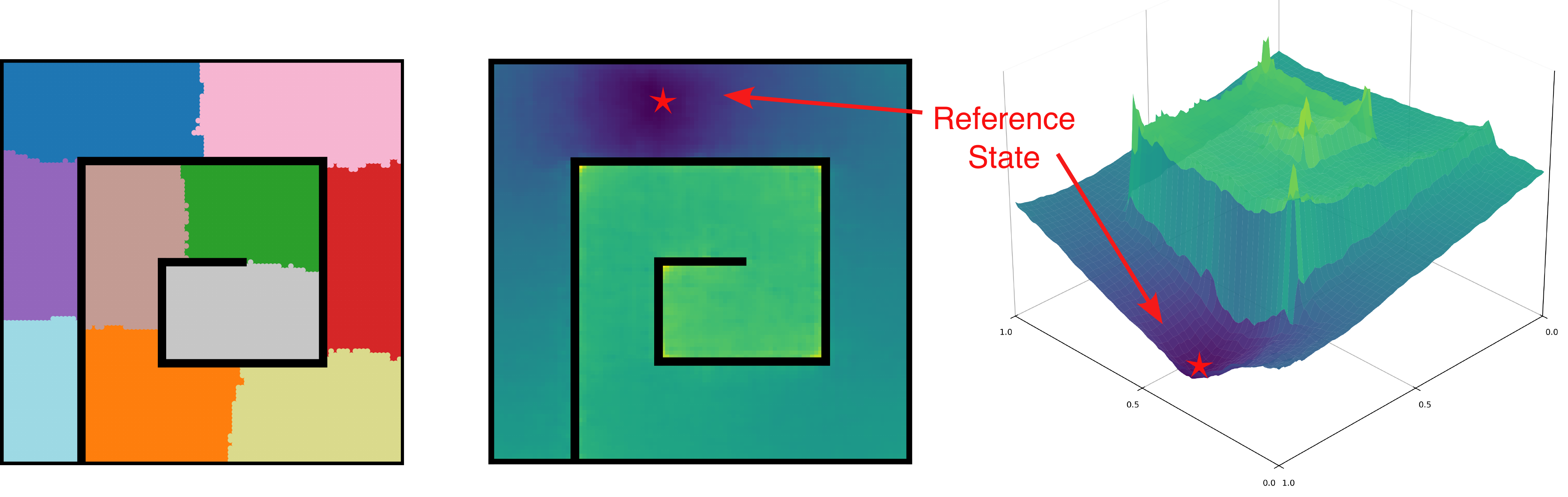}
    \caption{Visualization of the $\psi$-space properties in the spiral maze environment (Fig.~\ref{fig:spiral}). (\textbf{left}) Cluster labels assigned to each state in the dataset via clustering in \(\scaledlap\)-space.(\textbf{center}) Heatmap of \(c(s^\star, s_i)\) distance from a reference state ($s^\star$ denoted by \textcolor{red}{\(\star\)} in the figure) to each state in the dataset. (\textbf{right}) 3-D visualization of the distance landscape from the same reference state. This further validates the CTD property and spectral clustering characteristic of \(\scaledlap\)-space.
    }
    \label{fig:spiral_viz}
\end{figure}

\subsection{Comparison with PcLast}
\label{appendix:pclast_results}

In Section~\ref{sec:experiments}, we compared $\ac{ALPS}^\dagger$ and PcLast to the best of our ability using the code provided by the authors. When initially testing \ac{PcLast} on the Maze2D-Point Mass environments without a gaussian blur, we were unable to reproduce their results. In Table~\ref{tab:appendix_pclast}, we report the results we were able to reproduce using their code and discussion with the authors and the original performance reported in \citet{koulpclast}.

\begin{table*}[htbp]
\centering
\caption{Further results using PcLast on Maze2D-Point Mass domain. We follow the same protocol as discussed in Section~\ref{sec:experiments}.}
\setlength{\tabcolsep}{5pt}
\begin{tabular}{cccc}
\toprule
\textbf{Environment} & \textbf{Clusters} & \textbf{PcLast (Reported)} & \textbf{PcLast (Ours)} \\
\midrule
\multirow{2}{*}{Hallway}
    & 1  & \resultcell{88}{3} & \resultcell{40}{2}  \\
    & 16 & \resultcell{97}{5} & \resultcell{69}{6} \\
\hhline{----}
\multirow{2}{*}{Rooms}
    & 1  & \resultcell{69}{3} & \resultcell{26}{3}\\ 
    & 16 & \resultcell{90}{10} & \resultcell{36}{7}  \\
\hhline{----}
\multirow{2}{*}{Spiral}
    & 1  & \resultcell{50}{4} & \resultcell{39}{4}  \\ 
    & 16 & \resultcell{89}{10} & \resultcell{60}{7} \\
\bottomrule
\end{tabular}
\label{tab:appendix_pclast}
\end{table*}

\subsection{Ablation on number of eigenvectors}
For a varying number of eigenvectors, we tested ALPS across \(\numeigen\) (2, 8, 16, 24, 32, 48) in antmaze-large/giant environments, while keeping the default number of clusters fixed. The results are shown in Table~\ref{tab:eigenvector_ablation}. Performance largely saturates around \(\numeigen=24-32\), indicating that the coarse spatial structure of the environment is well captured by a modest number of eigenvectors. Note that \(\numeigen=2\) fails almost entirely, confirming that the multiple time scale information encoded by additional eigenvectors is essential for the success of ALPS. \looseness=-1

\begin{table}
    \centering
    \caption{Ablation on number of eigenvectors.}
    \begin{adjustbox}{max width=\textwidth}
\small
\setlength{\tabcolsep}{1pt}
    \begin{tabular}{lcccccc}
    \toprule
        Dataset / \(\#\)eigenvectors & \textbf{2} & \textbf{8} & \textbf{16} & \textbf{24} & \textbf{32} & \textbf{48} \\
        \midrule
        antmaze-large-navigate-v0 & \resultcell{33}{6} & \resultcell{90}{2} & \resultcell{94}{1} & \resultcell{93}{8} & \resultcell{94}{3} & \resultcell{94}{2} \\
        antmaze-large-stitch-v0   & \resultcell{19}{6} & \resultcell{93}{5} & \resultcell{95}{3} & \resultcell{94}{5} & \resultcell{95}{2} & \resultcell{94}{4} \\
        antmaze-giant-navigate-v0 & \resultcell{0}{0} & \resultcell{28}{7} & \resultcell{51}{14} & \resultcell{64}{12} & \resultcell{68}{5} & \resultcell{76}{4} \\
        antmaze-giant-stitch-v0   & \resultcell{1}{1} & \resultcell{59}{10} & \resultcell{88}{3} & \resultcell{91}{2} & \resultcell{92}{2} & \resultcell{92}{2} \\
    \bottomrule
    \end{tabular}
    \end{adjustbox}
    \label{tab:eigenvector_ablation}
\end{table}

\subsection{Teleport Environment}
\label{appendix:teleport}

As discussed in the Section ~\ref{subsec:ogbench_task}, \textit{teleport} mazes pose an interesting problem in the context of the Laplacian representation. Since \ac{ALLO} objective learns a symmetrized version, in \(\scaledlap\)-space, the embeddings of teleport in and teleport out gates are close to each other. We verify this empirically in Fig. \ref{fig:teleport_issues} where all the teleport gates are getting clustered to the same cluster. Thus, if a goal is closer to the teleport gates, the agent attempts to access them, which is suboptimal due to the risk of entering an inescapable region. 

\begin{figure}[ht]
    \centering
     \includegraphics[width=0.95\linewidth]{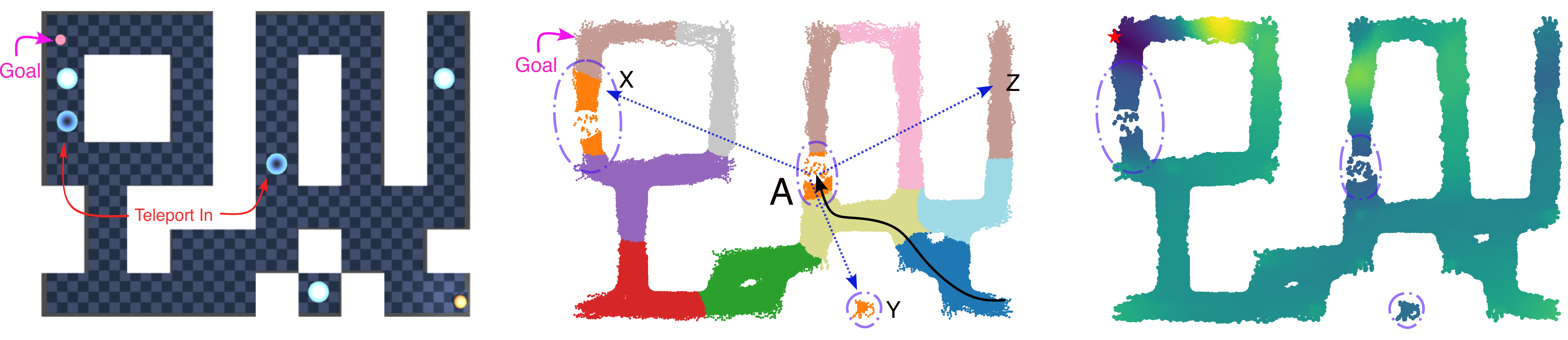}
    \caption{Visualization of the cluster assignment and Laplacian space distance for the \textit{pointmaze-teleport} maze environment from OGBench. In this task, the goal lies closer to teleport-out gate. Since all the teleport gates are clustered together as their embeddings lies together, \textit{ball} tries to access the teleport gate after reaching point A. However, \textit{ball} may end up at point Y through which it cannot escape.\looseness=-1}
    \label{fig:teleport_issues}
\end{figure}

\subsection{Individual Task Results}
We report individual evaluation goals success rates for each task averaged over 150 rollouts (\(3\) evaluation epochs \(\times 50\) rollouts) for \textit{pointmaze} (Table~\ref{tab:pointmaze_results}), \textit{antmaze} (Table~\ref{tab:antmaze_results}), \textit{humanoidmaze} (Table~\ref{tab:humanoidmaze_results}), \textit{cube} (Table~\ref{tab:cube_results}), \textit{scene} (Table~\ref{tab:scene_results}), and \textit{visual-antmaze} (Table~\ref{tab:visual_antmaze_results}).

\begin{table}[htbp]
\centering
\small
\caption{Full Results on \textit{pointmaze}.}
\label{tab:pointmaze_results}
\resizebox{\textwidth}{!}{
\begin{tabular}{lll l *{7}{c}}
\toprule
Environment Type & Dataset Type & Dataset & Task & GCBC & GCIVL & GCIQL & QRL & CRL & HIQL & ALPS \\
\midrule
  \multirow{48}{*}{\(\pmaze\)} & \multirow{24}{*}{\textit{navigate}} & \multirow{6}{*}{\textit{pointmaze-medium-navigate-v0}} & \textit{task1} & \resultcell{30}{27} & \resultcell{88}{16} & \resultcell{97}{4} & \resultcell{100}{0} & \resultcell{20}{6} & \resultcell{99}{1} & \resultcell{85}{8} \\
   &  &  & \textit{task2} & \resultcell{3}{2} & \resultcell{95}{10} & \resultcell{76}{29} & \resultcell{94}{17} & \resultcell{45}{25} & \resultcell{87}{7} & \resultcell{96}{3} \\
   &  &  & \textit{task3} & \resultcell{5}{5} & \resultcell{37}{28} & \resultcell{10}{28} & \resultcell{23}{20} & \resultcell{30}{4} & \resultcell{55}{13} & \resultcell{84}{9} \\
   &  &  & \textit{task4} & \resultcell{0}{1} & \resultcell{2}{2} & \resultcell{0}{0} & \resultcell{94}{14} & \resultcell{28}{29} & \resultcell{82}{12} & \resultcell{88}{7} \\
   &  &  & \textit{task5} & \resultcell{4}{3} & \resultcell{92}{7} & \resultcell{79}{6} & \resultcell{97}{8} & \resultcell{24}{13} & \resultcell{70}{10} & \resultcell{83}{26} \\
   &  &  & \textit{overall} & \resultcell{9}{6} & \resultcell{63}{6} & \resultcell{53}{8} & \resultcell{82}{5} & \resultcell{29}{7} & \resultcell{79}{5} & \resultcell{82}{10} \\
\cmidrule{3-11}
   &  & \multirow{6}{*}{\textit{pointmaze-large-navigate-v0}} & \textit{task1} & \resultcell{63}{11} & \resultcell{76}{23} & \resultcell{86}{14} & \resultcell{95}{8} & \resultcell{42}{27} & \resultcell{83}{13} & \resultcell{72}{23} \\
   &  &  & \textit{task2} & \resultcell{1}{2} & \resultcell{0}{0} & \resultcell{0}{0} & \resultcell{100}{0} & \resultcell{31}{24} & \resultcell{2}{7} & \resultcell{90}{8} \\
   &  &  & \textit{task3} & \resultcell{10}{7} & \resultcell{98}{5} & \resultcell{83}{8} & \resultcell{40}{50} & \resultcell{78}{7} & \resultcell{88}{10} & \resultcell{96}{3} \\
   &  &  & \textit{task4} & \resultcell{20}{18} & \resultcell{0}{0} & \resultcell{0}{0} & \resultcell{96}{7} & \resultcell{24}{14} & \resultcell{72}{19} & \resultcell{76}{23} \\
   &  &  & \textit{task5} & \resultcell{52}{17} & \resultcell{53}{20} & \resultcell{0}{0} & \resultcell{96}{7} & \resultcell{20}{10} & \resultcell{46}{16} & \resultcell{74}{21} \\
   &  &  & \textit{overall} & \resultcell{29}{6} & \resultcell{45}{5} & \resultcell{34}{3} & \resultcell{86}{9} & \resultcell{39}{7} & \resultcell{58}{5} & \resultcell{80}{8} \\
\cmidrule{3-11}
   &  & \multirow{6}{*}{\textit{pointmaze-giant-navigate-v0}} & \textit{task1} & \resultcell{1}{3} & \resultcell{0}{0} & \resultcell{0}{0} & \resultcell{98}{7} & \resultcell{6}{15} & \resultcell{0}{0} & \resultcell{4}{5} \\
   &  &  & \textit{task2} & \resultcell{1}{4} & \resultcell{0}{0} & \resultcell{0}{0} & \resultcell{92}{16} & \resultcell{28}{10} & \resultcell{72}{17} & \resultcell{90}{22} \\
   &  &  & \textit{task3} & \resultcell{0}{0} & \resultcell{0}{1} & \resultcell{0}{0} & \resultcell{68}{27} & \resultcell{9}{5} & \resultcell{32}{11} & \resultcell{96}{4} \\
   &  &  & \textit{task4} & \resultcell{0}{0} & \resultcell{0}{0} & \resultcell{0}{0} & \resultcell{66}{20} & \resultcell{64}{17} & \resultcell{60}{22} & \resultcell{26}{19} \\
   &  &  & \textit{task5} & \resultcell{5}{12} & \resultcell{0}{0} & \resultcell{0}{0} & \resultcell{19}{32} & \resultcell{29}{28} & \resultcell{66}{20} & \resultcell{98}{2} \\
   &  &  & \textit{overall} & \resultcell{1}{2} & \resultcell{0}{0} & \resultcell{0}{0} & \resultcell{68}{7} & \resultcell{27}{10} & \resultcell{46}{9} & \resultcell{67}{11} \\
\cmidrule{3-11}
   &  & \multirow{6}{*}{\textit{pointmaze-teleport-navigate-v0}} & \textit{task1} & \resultcell{1}{2} & \resultcell{33}{12} & \resultcell{0}{1} & \resultcell{0}{0} & \resultcell{3}{3} & \resultcell{5}{5} & \resultcell{36}{15} \\
   &  &  & \textit{task2} & \resultcell{4}{6} & \resultcell{49}{2} & \resultcell{39}{14} & \resultcell{8}{10} & \resultcell{30}{23} & \resultcell{6}{6} & \resultcell{34}{18} \\
   &  &  & \textit{task3} & \resultcell{50}{4} & \resultcell{46}{5} & \resultcell{31}{19} & \resultcell{2}{5} & \resultcell{26}{6} & \resultcell{39}{9} & \resultcell{48}{6} \\
   &  &  & \textit{task4} & \resultcell{33}{13} & \resultcell{49}{4} & \resultcell{42}{13} & \resultcell{12}{16} & \resultcell{40}{11} & \resultcell{24}{11} & \resultcell{50}{9} \\
   &  &  & \textit{task5} & \resultcell{38}{6} & \resultcell{48}{4} & \resultcell{9}{9} & \resultcell{1}{2} & \resultcell{20}{15} & \resultcell{17}{8} & \resultcell{42}{9} \\
   &  &  & \textit{overall} & \resultcell{25}{3} & \resultcell{45}{3} & \resultcell{24}{7} & \resultcell{4}{4} & \resultcell{24}{6} & \resultcell{18}{4} & \resultcell{40}{6} \\
\cmidrule{2-11}
   & \multirow{24}{*}{\textit{stitch}} & \multirow{6}{*}{\textit{pointmaze-medium-stitch-v0}} & \textit{task1} & \resultcell{21}{29} & \resultcell{76}{14} & \resultcell{56}{24} & \resultcell{94}{13} & \resultcell{0}{0} & \resultcell{77}{14} & \resultcell{99}{2} \\
   &  &  & \textit{task2} & \resultcell{32}{35} & \resultcell{79}{23} & \resultcell{26}{19} & \resultcell{81}{34} & \resultcell{0}{0} & \resultcell{61}{23} & \resultcell{91}{17} \\
   &  &  & \textit{task3} & \resultcell{33}{34} & \resultcell{69}{16} & \resultcell{0}{0} & \resultcell{66}{29} & \resultcell{2}{3} & \resultcell{82}{13} & \resultcell{96}{3} \\
   &  &  & \textit{task4} & \resultcell{0}{0} & \resultcell{41}{37} & \resultcell{0}{0} & \resultcell{68}{32} & \resultcell{0}{0} & \resultcell{92}{6} & \resultcell{98}{3} \\
   &  &  & \textit{task5} & \resultcell{29}{37} & \resultcell{84}{11} & \resultcell{22}{22} & \resultcell{92}{9} & \resultcell{0}{0} & \resultcell{59}{9} & \resultcell{95}{8} \\
   &  &  & \textit{overall} & \resultcell{23}{18} & \resultcell{70}{14} & \resultcell{21}{9} & \resultcell{80}{12} & \resultcell{0}{1} & \resultcell{74}{6} & \resultcell{94}{6} \\
\cmidrule{3-11}
   &  & \multirow{6}{*}{\textit{pointmaze-large-stitch-v0}} & \textit{task1} & \resultcell{8}{13} & \resultcell{0}{1} & \resultcell{56}{11} & \resultcell{100}{1} & \resultcell{0}{0} & \resultcell{3}{5} & \resultcell{95}{3} \\
   &  &  & \textit{task2} & \resultcell{0}{0} & \resultcell{0}{0} & \resultcell{0}{0} & \resultcell{74}{37} & \resultcell{0}{0} & \resultcell{0}{0} & \resultcell{96}{3} \\
   &  &  & \textit{task3} & \resultcell{26}{28} & \resultcell{60}{29} & \resultcell{98}{4} & \resultcell{74}{23} & \resultcell{0}{0} & \resultcell{59}{25} & \resultcell{97}{2} \\
   &  &  & \textit{task4} & \resultcell{0}{0} & \resultcell{0}{0} & \resultcell{0}{0} & \resultcell{88}{32} & \resultcell{0}{0} & \resultcell{1}{4} & \resultcell{91}{6} \\
   &  &  & \textit{task5} & \resultcell{0}{0} & \resultcell{0}{0} & \resultcell{0}{0} & \resultcell{85}{22} & \resultcell{0}{0} & \resultcell{0}{0} & \resultcell{96}{3} \\
   &  &  & \textit{overall} & \resultcell{7}{5} & \resultcell{12}{6} & \resultcell{31}{2} & \resultcell{84}{15} & \resultcell{0}{0} & \resultcell{13}{6} & \resultcell{96}{2} \\
\cmidrule{3-11}
   &  & \multirow{6}{*}{\textit{pointmaze-giant-stitch-v0}} & \textit{task1} & \resultcell{0}{0} & \resultcell{0}{0} & \resultcell{0}{0} & \resultcell{99}{2} & \resultcell{0}{0} & \resultcell{0}{0} & \resultcell{100}{1} \\
   &  &  & \textit{task2} & \resultcell{0}{0} & \resultcell{0}{0} & \resultcell{0}{0} & \resultcell{80}{27} & \resultcell{0}{0} & \resultcell{0}{0} & \resultcell{99}{1} \\
   &  &  & \textit{task3} & \resultcell{0}{0} & \resultcell{0}{0} & \resultcell{0}{0} & \resultcell{3}{5} & \resultcell{0}{0} & \resultcell{0}{0} & \resultcell{96}{4} \\
   &  &  & \textit{task4} & \resultcell{0}{0} & \resultcell{0}{0} & \resultcell{0}{0} & \resultcell{63}{23} & \resultcell{0}{0} & \resultcell{0}{0} & \resultcell{100}{0} \\
   &  &  & \textit{task5} & \resultcell{0}{0} & \resultcell{0}{0} & \resultcell{0}{0} & \resultcell{4}{8} & \resultcell{0}{0} & \resultcell{0}{0} & \resultcell{97}{2} \\
   &  &  & \textit{overall} & \resultcell{0}{0} & \resultcell{0}{0} & \resultcell{0}{0} & \resultcell{50}{8} & \resultcell{0}{0} & \resultcell{0}{0} & \resultcell{98}{1} \\
\cmidrule{3-11}
   &  & \multirow{6}{*}{\textit{pointmaze-teleport-stitch-v0}} & \textit{task1} & \resultcell{28}{20} & \resultcell{34}{14} & \resultcell{0}{0} & \resultcell{0}{0} & \resultcell{0}{0} & \resultcell{24}{13} & \resultcell{53}{6} \\
   &  &  & \textit{task2} & \resultcell{13}{15} & \resultcell{41}{8} & \resultcell{12}{14} & \resultcell{7}{7} & \resultcell{0}{0} & \resultcell{23}{11} & \resultcell{4}{5} \\
   &  &  & \textit{task3} & \resultcell{48}{8} & \resultcell{50}{5} & \resultcell{47}{2} & \resultcell{15}{13} & \resultcell{0}{0} & \resultcell{46}{9} & \resultcell{0}{0} \\
   &  &  & \textit{task4} & \resultcell{40}{16} & \resultcell{50}{6} & \resultcell{46}{5} & \resultcell{19}{12} & \resultcell{8}{8} & \resultcell{46}{5} & \resultcell{5}{13} \\
   &  &  & \textit{task5} & \resultcell{29}{16} & \resultcell{48}{5} & \resultcell{21}{7} & \resultcell{1}{3} & \resultcell{13}{13} & \resultcell{31}{10} & \resultcell{10}{17} \\
   &  &  & \textit{overall} & \resultcell{31}{9} & \resultcell{44}{2} & \resultcell{25}{3} & \resultcell{9}{5} & \resultcell{4}{3} & \resultcell{34}{4} & \resultcell{13}{4} \\
\bottomrule
\end{tabular}
}
\end{table}

\begin{table}[htbp]
\centering
\small
\caption{Full Results on \textit{antmaze}.}
\label{tab:antmaze_results}
\resizebox{\textwidth}{!}{
\begin{tabular}{lll l *{7}{c}}
\toprule
Environment Type & Dataset Type & Dataset & Task & GCBC & GCIVL & GCIQL & QRL & CRL & HIQL & ALPS \\
\midrule
  \multirow{66}{*}{\textit{antmaze}} & \multirow{24}{*}{\textit{navigate}} & \multirow{6}{*}{\textit{antmaze-medium-navigate-v0}} & \textit{task1} & \resultcell{35}{9} & \resultcell{81}{10} & \resultcell{63}{9} & \resultcell{93}{2} & \resultcell{97}{1} & \resultcell{94}{2} & \resultcell{97}{2} \\
   &  &  & \textit{task2} & \resultcell{21}{7} & \resultcell{85}{5} & \resultcell{78}{8} & \resultcell{90}{5} & \resultcell{95}{2} & \resultcell{97}{1} & \resultcell{98}{2} \\
   &  &  & \textit{task3} & \resultcell{28}{6} & \resultcell{60}{13} & \resultcell{71}{8} & \resultcell{86}{6} & \resultcell{92}{3} & \resultcell{96}{2} & \resultcell{98}{3} \\
   &  &  & \textit{task4} & \resultcell{28}{7} & \resultcell{42}{25} & \resultcell{59}{12} & \resultcell{83}{4} & \resultcell{94}{5} & \resultcell{96}{2} & \resultcell{97}{2} \\
   &  &  & \textit{task5} & \resultcell{37}{10} & \resultcell{92}{3} & \resultcell{85}{7} & \resultcell{88}{8} & \resultcell{96}{2} & \resultcell{96}{2} & \resultcell{96}{4} \\
   &  &  & \textit{overall} & \resultcell{29}{4} & \resultcell{72}{8} & \resultcell{71}{4} & \resultcell{88}{3} & \resultcell{95}{1} & \resultcell{96}{1} & \resultcell{97}{2} \\
\cmidrule{3-11}
   &  & \multirow{6}{*}{\textit{antmaze-large-navigate-v0}} & \textit{task1} & \resultcell{6}{3} & \resultcell{16}{12} & \resultcell{21}{6} & \resultcell{71}{15} & \resultcell{91}{3} & \resultcell{93}{3} & \resultcell{94}{3} \\
   &  &  & \textit{task2} & \resultcell{16}{4} & \resultcell{5}{6} & \resultcell{25}{7} & \resultcell{77}{7} & \resultcell{62}{14} & \resultcell{78}{9} & \resultcell{70}{39} \\
   &  &  & \textit{task3} & \resultcell{65}{4} & \resultcell{49}{18} & \resultcell{80}{5} & \resultcell{94}{2} & \resultcell{91}{2} & \resultcell{96}{2} & \resultcell{98}{2} \\
   &  &  & \textit{task4} & \resultcell{14}{3} & \resultcell{2}{2} & \resultcell{19}{6} & \resultcell{64}{8} & \resultcell{85}{11} & \resultcell{94}{2} & \resultcell{94}{4} \\
   &  &  & \textit{task5} & \resultcell{18}{4} & \resultcell{5}{2} & \resultcell{26}{9} & \resultcell{67}{9} & \resultcell{85}{3} & \resultcell{94}{3} & \resultcell{95}{4} \\
   &  &  & \textit{overall} & \resultcell{24}{2} & \resultcell{16}{5} & \resultcell{34}{4} & \resultcell{75}{6} & \resultcell{83}{4} & \resultcell{91}{2} & \resultcell{93}{5} \\
\cmidrule{3-11}
   &  & \multirow{6}{*}{\textit{antmaze-giant-navigate-v0}} & \textit{task1} & \resultcell{0}{0} & \resultcell{0}{0} & \resultcell{0}{0} & \resultcell{1}{2} & \resultcell{2}{2} & \resultcell{47}{10} & \resultcell{62}{11} \\
   &  &  & \textit{task2} & \resultcell{0}{0} & \resultcell{0}{0} & \resultcell{0}{0} & \resultcell{17}{5} & \resultcell{21}{10} & \resultcell{74}{5} & \resultcell{64}{22} \\
   &  &  & \textit{task3} & \resultcell{0}{0} & \resultcell{0}{0} & \resultcell{0}{0} & \resultcell{14}{8} & \resultcell{5}{5} & \resultcell{55}{7} & \resultcell{77}{31} \\
   &  &  & \textit{task4} & \resultcell{0}{0} & \resultcell{0}{0} & \resultcell{0}{0} & \resultcell{18}{6} & \resultcell{35}{9} & \resultcell{69}{5} & \resultcell{84}{7} \\
   &  &  & \textit{task5} & \resultcell{1}{1} & \resultcell{1}{1} & \resultcell{1}{1} & \resultcell{18}{5} & \resultcell{16}{10} & \resultcell{82}{4} & \resultcell{58}{11} \\
   &  &  & \textit{overall} & \resultcell{0}{0} & \resultcell{0}{0} & \resultcell{0}{0} & \resultcell{14}{3} & \resultcell{16}{3} & \resultcell{65}{5} & \resultcell{69}{9} \\
\cmidrule{3-11}
   &  & \multirow{6}{*}{\textit{antmaze-teleport-navigate-v0}} & \textit{task1} & \resultcell{17}{5} & \resultcell{35}{5} & \resultcell{26}{5} & \resultcell{31}{6} & \resultcell{35}{5} & \resultcell{37}{5} & \resultcell{33}{15} \\
   &  &  & \textit{task2} & \resultcell{51}{5} & \resultcell{41}{5} & \resultcell{58}{8} & \resultcell{47}{22} & \resultcell{92}{3} & \resultcell{66}{8} & \resultcell{52}{7} \\
   &  &  & \textit{task3} & \resultcell{22}{3} & \resultcell{36}{8} & \resultcell{31}{5} & \resultcell{35}{6} & \resultcell{47}{4} & \resultcell{37}{5} & \resultcell{43}{18} \\
   &  &  & \textit{task4} & \resultcell{25}{5} & \resultcell{45}{3} & \resultcell{33}{5} & \resultcell{33}{6} & \resultcell{50}{2} & \resultcell{30}{2} & \resultcell{46}{8} \\
   &  &  & \textit{task5} & \resultcell{14}{6} & \resultcell{38}{6} & \resultcell{26}{9} & \resultcell{28}{8} & \resultcell{44}{3} & \resultcell{41}{8} & \resultcell{40}{8} \\
   &  &  & \textit{overall} & \resultcell{26}{3} & \resultcell{39}{3} & \resultcell{35}{5} & \resultcell{35}{5} & \resultcell{53}{2} & \resultcell{42}{3} & \resultcell{45}{3} \\
\cmidrule{2-11}
   & \multirow{24}{*}{\textit{stitch}} & \multirow{6}{*}{\textit{antmaze-medium-stitch-v0}} & \textit{task1} & \resultcell{70}{33} & \resultcell{76}{13} & \resultcell{17}{12} & \resultcell{43}{20} & \resultcell{43}{10} & \resultcell{92}{2} & \resultcell{88}{13} \\
   &  &  & \textit{task2} & \resultcell{65}{19} & \resultcell{80}{4} & \resultcell{22}{16} & \resultcell{61}{12} & \resultcell{46}{14} & \resultcell{94}{3} & \resultcell{91}{12} \\
   &  &  & \textit{task3} & \resultcell{21}{15} & \resultcell{16}{12} & \resultcell{41}{9} & \resultcell{72}{29} & \resultcell{46}{17} & \resultcell{95}{2} & \resultcell{98}{2} \\
   &  &  & \textit{task4} & \resultcell{1}{2} & \resultcell{0}{0} & \resultcell{32}{9} & \resultcell{80}{9} & \resultcell{53}{19} & \resultcell{93}{2} & \resultcell{98}{1} \\
   &  &  & \textit{task5} & \resultcell{70}{33} & \resultcell{47}{20} & \resultcell{34}{14} & \resultcell{41}{18} & \resultcell{75}{8} & \resultcell{95}{3} & \resultcell{98}{3} \\
   &  &  & \textit{overall} & \resultcell{45}{11} & \resultcell{44}{6} & \resultcell{29}{6} & \resultcell{59}{7} & \resultcell{53}{6} & \resultcell{94}{1} & \resultcell{93}{7} \\
\cmidrule{3-11}
   &  & \multirow{6}{*}{\textit{antmaze-large-stitch-v0}} & \textit{task1} & \resultcell{2}{2} & \resultcell{23}{9} & \resultcell{0}{0} & \resultcell{7}{5} & \resultcell{1}{1} & \resultcell{85}{5} & \resultcell{95}{2} \\
   &  &  & \textit{task2} & \resultcell{0}{0} & \resultcell{0}{0} & \resultcell{0}{0} & \resultcell{10}{5} & \resultcell{4}{4} & \resultcell{24}{16} & \resultcell{94}{4} \\
   &  &  & \textit{task3} & \resultcell{15}{14} & \resultcell{69}{6} & \resultcell{37}{10} & \resultcell{73}{8} & \resultcell{43}{11} & \resultcell{94}{3} & \resultcell{99}{2} \\
   &  &  & \textit{task4} & \resultcell{0}{0} & \resultcell{0}{0} & \resultcell{0}{0} & \resultcell{1}{1} & \resultcell{5}{5} & \resultcell{70}{8} & \resultcell{95}{3} \\
   &  &  & \textit{task5} & \resultcell{0}{0} & \resultcell{0}{0} & \resultcell{0}{0} & \resultcell{1}{1} & \resultcell{1}{2} & \resultcell{60}{9} & \resultcell{93}{4} \\
   &  &  & \textit{overall} & \resultcell{3}{3} & \resultcell{18}{2} & \resultcell{7}{2} & \resultcell{18}{2} & \resultcell{11}{2} & \resultcell{67}{5} & \resultcell{95}{2} \\
\cmidrule{3-11}
   &  & \multirow{6}{*}{\textit{antmaze-giant-stitch-v0}} & \textit{task1} & \resultcell{0}{0} & \resultcell{0}{0} & \resultcell{0}{0} & \resultcell{0}{0} & \resultcell{0}{0} & \resultcell{0}{1} & \resultcell{87}{5} \\
   &  &  & \textit{task2} & \resultcell{0}{0} & \resultcell{0}{0} & \resultcell{0}{0} & \resultcell{0}{0} & \resultcell{0}{0} & \resultcell{5}{5} & \resultcell{94}{4} \\
   &  &  & \textit{task3} & \resultcell{0}{0} & \resultcell{0}{0} & \resultcell{0}{0} & \resultcell{0}{0} & \resultcell{0}{0} & \resultcell{0}{0} & \resultcell{89}{6} \\
   &  &  & \textit{task4} & \resultcell{0}{0} & \resultcell{0}{0} & \resultcell{0}{0} & \resultcell{0}{0} & \resultcell{0}{0} & \resultcell{3}{3} & \resultcell{88}{9} \\
   &  &  & \textit{task5} & \resultcell{0}{0} & \resultcell{0}{0} & \resultcell{0}{0} & \resultcell{2}{2} & \resultcell{0}{0} & \resultcell{0}{1} & \resultcell{96}{4} \\
   &  &  & \textit{overall} & \resultcell{0}{0} & \resultcell{0}{0} & \resultcell{0}{0} & \resultcell{0}{0} & \resultcell{0}{0} & \resultcell{2}{2} & \resultcell{92}{3} \\
\cmidrule{3-11}
   &  & \multirow{6}{*}{\textit{antmaze-teleport-stitch-v0}} & \textit{task1} & \resultcell{21}{13} & \resultcell{39}{7} & \resultcell{12}{4} & \resultcell{22}{6} & \resultcell{30}{6} & \resultcell{44}{5} & \resultcell{43}{7} \\
   &  &  & \textit{task2} & \resultcell{39}{12} & \resultcell{44}{6} & \resultcell{18}{7} & \resultcell{22}{6} & \resultcell{30}{4} & \resultcell{42}{3} & \resultcell{26}{17} \\
   &  &  & \textit{task3} & \resultcell{34}{12} & \resultcell{36}{8} & \resultcell{18}{4} & \resultcell{25}{7} & \resultcell{23}{11} & \resultcell{26}{4} & \resultcell{29}{22} \\
   &  &  & \textit{task4} & \resultcell{46}{6} & \resultcell{44}{4} & \resultcell{18}{5} & \resultcell{24}{9} & \resultcell{38}{4} & \resultcell{26}{4} & \resultcell{13}{21} \\
   &  &  & \textit{task5} & \resultcell{16}{14} & \resultcell{33}{6} & \resultcell{17}{6} & \resultcell{26}{5} & \resultcell{32}{7} & \resultcell{40}{6} & \resultcell{40}{11} \\
   &  &  & \textit{overall} & \resultcell{31}{6} & \resultcell{39}{3} & \resultcell{17}{2} & \resultcell{24}{5} & \resultcell{31}{4} & \resultcell{36}{2} & \resultcell{35}{11} \\
\cmidrule{2-11}
   & \multirow{18}{*}{\textit{explore}} & \multirow{6}{*}{\textit{antmaze-medium-explore-v0}} & \textit{task1} & \resultcell{3}{6} & \resultcell{10}{8} & \resultcell{12}{6} & \resultcell{1}{1} & \resultcell{2}{2} & \resultcell{29}{17} & \resultcell{99}{1} \\
   &  &  & \textit{task2} & \resultcell{1}{2} & \resultcell{74}{9} & \resultcell{53}{8} & \resultcell{1}{1} & \resultcell{8}{6} & \resultcell{84}{10} & \resultcell{100}{1} \\
   &  &  & \textit{task3} & \resultcell{1}{2} & \resultcell{0}{0} & \resultcell{0}{0} & \resultcell{3}{5} & \resultcell{4}{6} & \resultcell{18}{24} & \resultcell{100}{1} \\
   &  &  & \textit{task4} & \resultcell{0}{0} & \resultcell{0}{0} & \resultcell{0}{0} & \resultcell{0}{0} & \resultcell{0}{0} & \resultcell{0}{0} & \resultcell{100}{1} \\
   &  &  & \textit{task5} & \resultcell{3}{4} & \resultcell{10}{6} & \resultcell{0}{0} & \resultcell{1}{1} & \resultcell{2}{2} & \resultcell{52}{27} & \resultcell{100}{0} \\
   &  &  & \textit{overall} & \resultcell{2}{1} & \resultcell{19}{3} & \resultcell{13}{2} & \resultcell{1}{1} & \resultcell{3}{2} & \resultcell{37}{10} & \resultcell{100}{0} \\
\cmidrule{3-11}
   &  & \multirow{6}{*}{\textit{antmaze-large-explore-v0}} & \textit{task1} & \resultcell{0}{0} & \resultcell{37}{12} & \resultcell{1}{1} & \resultcell{0}{0} & \resultcell{0}{1} & \resultcell{1}{3} & \resultcell{86}{35} \\
   &  &  & \textit{task2} & \resultcell{0}{0} & \resultcell{0}{0} & \resultcell{0}{0} & \resultcell{0}{0} & \resultcell{0}{0} & \resultcell{0}{0} & \resultcell{73}{45} \\
   &  &  & \textit{task3} & \resultcell{0}{0} & \resultcell{12}{6} & \resultcell{1}{1} & \resultcell{0}{0} & \resultcell{1}{1} & \resultcell{18}{24} & \resultcell{74}{46} \\
   &  &  & \textit{task4} & \resultcell{0}{0} & \resultcell{0}{0} & \resultcell{0}{0} & \resultcell{0}{0} & \resultcell{0}{0} & \resultcell{0}{0} & \resultcell{98}{2} \\
   &  &  & \textit{task5} & \resultcell{0}{0} & \resultcell{0}{0} & \resultcell{0}{0} & \resultcell{0}{0} & \resultcell{0}{0} & \resultcell{0}{0} & \resultcell{74}{46} \\
   &  &  & \textit{overall} & \resultcell{0}{0} & \resultcell{10}{3} & \resultcell{0}{0} & \resultcell{0}{0} & \resultcell{0}{0} & \resultcell{4}{5} & \resultcell{90}{15} \\
\cmidrule{3-11}
   &  & \multirow{6}{*}{\textit{antmaze-teleport-explore-v0}} & \textit{task1} & \resultcell{2}{2} & \resultcell{32}{4} & \resultcell{0}{1} & \resultcell{0}{0} & \resultcell{2}{1} & \resultcell{32}{11} & \resultcell{42}{11} \\
   &  &  & \textit{task2} & \resultcell{0}{0} & \resultcell{2}{4} & \resultcell{8}{6} & \resultcell{0}{1} & \resultcell{5}{4} & \resultcell{33}{17} & \resultcell{48}{21} \\
   &  &  & \textit{task3} & \resultcell{4}{3} & \resultcell{48}{3} & \resultcell{13}{8} & \resultcell{4}{4} & \resultcell{47}{6} & \resultcell{34}{16} & \resultcell{50}{6} \\
   &  &  & \textit{task4} & \resultcell{2}{2} & \resultcell{47}{5} & \resultcell{14}{8} & \resultcell{4}{4} & \resultcell{16}{12} & \resultcell{37}{19} & \resultcell{52}{11} \\
   &  &  & \textit{task5} & \resultcell{4}{2} & \resultcell{31}{2} & \resultcell{2}{1} & \resultcell{3}{3} & \resultcell{28}{5} & \resultcell{34}{14} & \resultcell{42}{10} \\
   &  &  & \textit{overall} & \resultcell{2}{1} & \resultcell{32}{2} & \resultcell{7}{3} & \resultcell{2}{2} & \resultcell{20}{2} & \resultcell{34}{15} & \resultcell{48}{6} \\
\bottomrule
\end{tabular}
}
\end{table}

\begin{table}[htbp]
\centering
\small
\caption{Full Results on \textit{humanoidmaze}.}
\label{tab:humanoidmaze_results}
\resizebox{\textwidth}{!}{
\begin{tabular}{lll l *{7}{c}}
\toprule
Environment Type & Dataset Type & Dataset & Task & GCBC & GCIVL & GCIQL & QRL & CRL & HIQL & ALPS \\
\midrule
  \multirow{36}{*}{\textit{humanoidmaze}} & \multirow{18}{*}{\textit{navigate}} & \multirow{6}{*}{\textit{humanoidmaze-medium-navigate-v0}} & \textit{task1} & \resultcell{4}{1} & \resultcell{22}{5} & \resultcell{23}{6} & \resultcell{12}{7} & \resultcell{84}{3} & \resultcell{95}{2} & \resultcell{86}{5} \\
   &  &  & \textit{task2} & \resultcell{8}{4} & \resultcell{42}{8} & \resultcell{49}{6} & \resultcell{25}{8} & \resultcell{80}{5} & \resultcell{96}{2} & \resultcell{91}{5} \\
   &  &  & \textit{task3} & \resultcell{12}{3} & \resultcell{15}{3} & \resultcell{12}{6} & \resultcell{25}{10} & \resultcell{43}{11} & \resultcell{79}{6} & \resultcell{90}{5} \\
   &  &  & \textit{task4} & \resultcell{2}{1} & \resultcell{0}{0} & \resultcell{1}{0} & \resultcell{16}{7} & \resultcell{5}{5} & \resultcell{75}{6} & \resultcell{78}{32} \\
   &  &  & \textit{task5} & \resultcell{12}{4} & \resultcell{40}{8} & \resultcell{51}{8} & \resultcell{29}{12} & \resultcell{87}{7} & \resultcell{97}{1} & \resultcell{93}{4} \\
   &  &  & \textit{overall} & \resultcell{8}{2} & \resultcell{24}{2} & \resultcell{27}{2} & \resultcell{21}{8} & \resultcell{60}{4} & \resultcell{89}{2} & \resultcell{89}{5} \\
\cmidrule{3-11}
   &  & \multirow{6}{*}{\textit{humanoidmaze-large-navigate-v0}} & \textit{task1} & \resultcell{1}{1} & \resultcell{6}{2} & \resultcell{3}{2} & \resultcell{3}{2} & \resultcell{36}{11} & \resultcell{67}{4} & \resultcell{49}{13} \\
   &  &  & \textit{task2} & \resultcell{0}{0} & \resultcell{0}{0} & \resultcell{0}{0} & \resultcell{0}{0} & \resultcell{0}{0} & \resultcell{2}{3} & \resultcell{30}{20} \\
   &  &  & \textit{task3} & \resultcell{3}{1} & \resultcell{6}{2} & \resultcell{5}{2} & \resultcell{17}{6} & \resultcell{54}{17} & \resultcell{88}{3} & \resultcell{78}{7} \\
   &  &  & \textit{task4} & \resultcell{2}{1} & \resultcell{0}{0} & \resultcell{1}{1} & \resultcell{4}{2} & \resultcell{23}{11} & \resultcell{42}{11} & \resultcell{63}{8} \\
   &  &  & \textit{task5} & \resultcell{1}{1} & \resultcell{1}{1} & \resultcell{1}{1} & \resultcell{2}{1} & \resultcell{6}{4} & \resultcell{47}{10} & \resultcell{59}{10} \\
   &  &  & \textit{overall} & \resultcell{1}{0} & \resultcell{2}{1} & \resultcell{2}{1} & \resultcell{5}{1} & \resultcell{24}{4} & \resultcell{49}{4} & \resultcell{56}{5} \\
\cmidrule{3-11}
   &  & \multirow{6}{*}{\textit{humanoidmaze-giant-navigate-v0}} & \textit{task1} & \resultcell{0}{0} & \resultcell{0}{0} & \resultcell{0}{0} & \resultcell{0}{0} & \resultcell{1}{1} & \resultcell{13}{7} & \resultcell{72}{7} \\
   &  &  & \textit{task2} & \resultcell{0}{0} & \resultcell{1}{1} & \resultcell{1}{1} & \resultcell{2}{1} & \resultcell{9}{5} & \resultcell{35}{11} & \resultcell{56}{23} \\
   &  &  & \textit{task3} & \resultcell{0}{0} & \resultcell{0}{0} & \resultcell{0}{0} & \resultcell{0}{0} & \resultcell{2}{2} & \resultcell{11}{4} & \resultcell{57}{24} \\
   &  &  & \textit{task4} & \resultcell{0}{0} & \resultcell{0}{0} & \resultcell{0}{0} & \resultcell{0}{0} & \resultcell{3}{2} & \resultcell{2}{2} & \resultcell{75}{8} \\
   &  &  & \textit{task5} & \resultcell{1}{1} & \resultcell{0}{0} & \resultcell{1}{1} & \resultcell{2}{1} & \resultcell{1}{1} & \resultcell{2}{2} & \resultcell{84}{4} \\
   &  &  & \textit{overall} & \resultcell{0}{0} & \resultcell{0}{0} & \resultcell{0}{0} & \resultcell{1}{0} & \resultcell{3}{2} & \resultcell{12}{4} & \resultcell{67}{11} \\
\cmidrule{2-11}
   & \multirow{18}{*}{\textit{stitch}} & \multirow{6}{*}{\textit{humanoidmaze-medium-stitch-v0}} & \textit{task1} & \resultcell{20}{7} & \resultcell{13}{3} & \resultcell{12}{3} & \resultcell{6}{5} & \resultcell{27}{7} & \resultcell{84}{5} & \resultcell{53}{10} \\
   &  &  & \textit{task2} & \resultcell{49}{12} & \resultcell{7}{2} & \resultcell{8}{5} & \resultcell{13}{4} & \resultcell{37}{7} & \resultcell{94}{2} & \resultcell{65}{14} \\
   &  &  & \textit{task3} & \resultcell{24}{8} & \resultcell{25}{3} & \resultcell{20}{7} & \resultcell{30}{6} & \resultcell{40}{4} & \resultcell{86}{4} & \resultcell{89}{5} \\
   &  &  & \textit{task4} & \resultcell{3}{2} & \resultcell{1}{1} & \resultcell{2}{2} & \resultcell{18}{5} & \resultcell{28}{7} & \resultcell{86}{4} & \resultcell{76}{7} \\
   &  &  & \textit{task5} & \resultcell{49}{8} & \resultcell{16}{3} & \resultcell{18}{7} & \resultcell{22}{2} & \resultcell{49}{5} & \resultcell{90}{4} & \resultcell{69}{10} \\
   &  &  & \textit{overall} & \resultcell{29}{5} & \resultcell{12}{2} & \resultcell{12}{3} & \resultcell{18}{2} & \resultcell{36}{2} & \resultcell{88}{2} & \resultcell{68}{5} \\
\cmidrule{3-11}
   &  & \multirow{6}{*}{\textit{humanoidmaze-large-stitch-v0}} & \textit{task1} & \resultcell{3}{4} & \resultcell{2}{1} & \resultcell{1}{1} & \resultcell{0}{0} & \resultcell{0}{0} & \resultcell{21}{5} & \resultcell{29}{12} \\
   &  &  & \textit{task2} & \resultcell{0}{0} & \resultcell{0}{0} & \resultcell{0}{0} & \resultcell{0}{0} & \resultcell{0}{0} & \resultcell{5}{2} & \resultcell{11}{9} \\
   &  &  & \textit{task3} & \resultcell{20}{11} & \resultcell{3}{2} & \resultcell{1}{1} & \resultcell{16}{7} & \resultcell{13}{3} & \resultcell{84}{4} & \resultcell{79}{7} \\
   &  &  & \textit{task4} & \resultcell{2}{1} & \resultcell{1}{1} & \resultcell{0}{1} & \resultcell{1}{1} & \resultcell{4}{1} & \resultcell{19}{4} & \resultcell{46}{15} \\
   &  &  & \textit{task5} & \resultcell{2}{2} & \resultcell{1}{1} & \resultcell{0}{0} & \resultcell{0}{0} & \resultcell{3}{1} & \resultcell{12}{2} & \resultcell{39}{14} \\
   &  &  & \textit{overall} & \resultcell{6}{3} & \resultcell{1}{1} & \resultcell{0}{0} & \resultcell{3}{1} & \resultcell{4}{1} & \resultcell{28}{3} & \resultcell{39}{6} \\
\cmidrule{3-11}
   &  & \multirow{6}{*}{\textit{humanoidmaze-giant-stitch-v0}} & \textit{task1} & \resultcell{0}{0} & \resultcell{0}{0} & \resultcell{0}{0} & \resultcell{0}{0} & \resultcell{0}{0} & \resultcell{1}{2} & \resultcell{59}{13} \\
   &  &  & \textit{task2} & \resultcell{0}{0} & \resultcell{1}{1} & \resultcell{0}{0} & \resultcell{1}{1} & \resultcell{0}{0} & \resultcell{12}{6} & \resultcell{63}{10} \\
   &  &  & \textit{task3} & \resultcell{0}{0} & \resultcell{0}{0} & \resultcell{0}{0} & \resultcell{0}{0} & \resultcell{0}{0} & \resultcell{2}{2} & \resultcell{49}{14} \\
   &  &  & \textit{task4} & \resultcell{0}{0} & \resultcell{0}{0} & \resultcell{0}{0} & \resultcell{0}{0} & \resultcell{0}{0} & \resultcell{1}{1} & \resultcell{72}{10} \\
   &  &  & \textit{task5} & \resultcell{0}{0} & \resultcell{0}{0} & \resultcell{1}{1} & \resultcell{1}{1} & \resultcell{0}{1} & \resultcell{0}{1} & \resultcell{84}{9} \\
   &  &  & \textit{overall} & \resultcell{0}{0} & \resultcell{0}{0} & \resultcell{0}{0} & \resultcell{0}{0} & \resultcell{0}{0} & \resultcell{3}{2} & \resultcell{62}{6} \\
\bottomrule
\end{tabular}
}
\end{table}

\begin{table}[htbp]
\centering
\small
\caption{Full Results on \textit{cube}.}
\label{tab:cube_results}
\resizebox{\textwidth}{!}{
\begin{tabular}{lll l *{7}{c}}
\toprule
Environment Type & Dataset Type & Dataset & Task & GCBC & GCIVL & GCIQL & QRL & CRL & HIQL & ALPS \\
\midrule
  \multirow{12}{*}{\textit{cube}} & \multirow{12}{*}{\textit{play}} & \multirow{6}{*}{\textit{cube-single-play-v0}} & \textit{task1} & \resultcell{7}{3} & \resultcell{57}{6} & \resultcell{71}{9} & \resultcell{6}{2} & \resultcell{20}{6} & \resultcell{15}{5} & \resultcell{84}{11} \\
   &  &  & \textit{task2} & \resultcell{5}{2} & \resultcell{51}{6} & \resultcell{71}{6} & \resultcell{5}{2} & \resultcell{20}{4} & \resultcell{16}{5} & \resultcell{70}{9} \\
   &  &  & \textit{task3} & \resultcell{7}{3} & \resultcell{55}{6} & \resultcell{70}{6} & \resultcell{4}{1} & \resultcell{21}{6} & \resultcell{16}{3} & \resultcell{78}{9} \\
   &  &  & \textit{task4} & \resultcell{4}{2} & \resultcell{50}{4} & \resultcell{61}{8} & \resultcell{4}{2} & \resultcell{16}{3} & \resultcell{14}{5} & \resultcell{58}{14} \\
   &  &  & \textit{task5} & \resultcell{4}{2} & \resultcell{52}{6} & \resultcell{67}{7} & \resultcell{4}{3} & \resultcell{15}{3} & \resultcell{13}{4} & \resultcell{51}{13} \\
   &  &  & \textit{overall} & \resultcell{6}{2} & \resultcell{53}{4} & \resultcell{68}{6} & \resultcell{5}{1} & \resultcell{19}{2} & \resultcell{15}{2} & \resultcell{68}{6} \\
\cmidrule{3-11}
   &  & \multirow{6}{*}{\textit{cube-double-play-v0}} & \textit{task1} & \resultcell{6}{3} & \resultcell{58}{5} & \resultcell{74}{8} & \resultcell{6}{3} & \resultcell{30}{7} & \resultcell{22}{6} & \resultcell{9}{6} \\
   &  &  & \textit{task2} & \resultcell{0}{0} & \resultcell{51}{6} & \resultcell{55}{11} & \resultcell{0}{0} & \resultcell{9}{2} & \resultcell{4}{3} & \resultcell{0}{1} \\
   &  &  & \textit{task3} & \resultcell{0}{0} & \resultcell{42}{7} & \resultcell{45}{7} & \resultcell{0}{0} & \resultcell{6}{1} & \resultcell{3}{2} & \resultcell{0}{0} \\
   &  &  & \textit{task4} & \resultcell{0}{0} & \resultcell{7}{2} & \resultcell{4}{3} & \resultcell{0}{0} & \resultcell{0}{0} & \resultcell{1}{1} & \resultcell{0}{0} \\
   &  &  & \textit{task5} & \resultcell{0}{0} & \resultcell{21}{1} & \resultcell{23}{6} & \resultcell{0}{0} & \resultcell{10}{2} & \resultcell{6}{2} & \resultcell{1}{1} \\
   &  &  & \textit{overall} & \resultcell{1}{1} & \resultcell{36}{3} & \resultcell{40}{5} & \resultcell{1}{0} & \resultcell{10}{2} & \resultcell{6}{2} & \resultcell{2}{1} \\
\bottomrule
\end{tabular}
}
\end{table}

\begin{table}[htbp]
\centering
\small
\caption{Full Results on \textit{scene}.}
\label{tab:scene_results}
\resizebox{\textwidth}{!}{
\begin{tabular}{lll l *{7}{c}}
\toprule
Environment Type & Dataset Type & Dataset & Task & GCBC & GCIVL & GCIQL & QRL & CRL & HIQL & ALPS \\
\midrule
  \multirow{6}{*}{\textit{scene}} & \multirow{6}{*}{\textit{play}} & \multirow{6}{*}{\textit{scene-play-v0}} & \textit{task1} & \resultcell{18}{7} & \resultcell{75}{5} & \resultcell{93}{4} & \resultcell{19}{4} & \resultcell{49}{7} & \resultcell{40}{4} & \resultcell{90}{6} \\
   &  &  & \textit{task2} & \resultcell{1}{1} & \resultcell{62}{8} & \resultcell{82}{8} & \resultcell{1}{1} & \resultcell{12}{4} & \resultcell{40}{5} & \resultcell{4}{4} \\
   &  &  & \textit{task3} & \resultcell{2}{1} & \resultcell{64}{7} & \resultcell{72}{10} & \resultcell{1}{1} & \resultcell{26}{8} & \resultcell{36}{5} & \resultcell{29}{13} \\
   &  &  & \textit{task4} & \resultcell{3}{2} & \resultcell{7}{4} & \resultcell{8}{3} & \resultcell{5}{2} & \resultcell{5}{2} & \resultcell{55}{5} & \resultcell{2}{3} \\
   &  &  & \textit{task5} & \resultcell{0}{0} & \resultcell{2}{1} & \resultcell{1}{1} & \resultcell{0}{1} & \resultcell{1}{1} & \resultcell{20}{5} & \resultcell{0}{1} \\
   &  &  & \textit{overall} & \resultcell{5}{1} & \resultcell{42}{4} & \resultcell{51}{4} & \resultcell{5}{1} & \resultcell{19}{2} & \resultcell{38}{3} & \resultcell{26}{3} \\
\bottomrule
\end{tabular}
}
\end{table}

\begin{table}[htbp]
\centering
\small
\caption{Full Results on \textit{visual-antmaze}.}
\label{tab:visual_antmaze_results}
\resizebox{\textwidth}{!}{
\begin{tabular}{lll l *{7}{c}}
\toprule
Environment Type & Dataset Type & Dataset & Task & GCBC & GCIVL & GCIQL & QRL & CRL & HIQL & ALPS \\
\midrule
  \multirow{66}{*}{\textit{visual-antmaze}} & \multirow{24}{*}{\textit{navigate}} & \multirow{6}{*}{\textit{visual-antmaze-medium-navigate-v0}} & \textit{task1} & \resultcell{17}{6} & \resultcell{30}{7} & \resultcell{16}{3} & \resultcell{0}{0} & \resultcell{92}{2} & \resultcell{90}{4} & \resultcell{92}{4} \\
   &  &  & \textit{task2} & \resultcell{8}{2} & \resultcell{21}{6} & \resultcell{7}{2} & \resultcell{0}{0} & \resultcell{94}{2} & \resultcell{92}{7} & \resultcell{96}{3} \\
   &  &  & \textit{task3} & \resultcell{17}{1} & \resultcell{24}{5} & \resultcell{16}{4} & \resultcell{0}{0} & \resultcell{98}{1} & \resultcell{94}{4} & \resultcell{98}{2} \\
   &  &  & \textit{task4} & \resultcell{12}{2} & \resultcell{21}{3} & \resultcell{9}{2} & \resultcell{0}{0} & \resultcell{94}{2} & \resultcell{94}{2} & \resultcell{98}{2} \\
   &  &  & \textit{task5} & \resultcell{4}{2} & \resultcell{16}{5} & \resultcell{6}{2} & \resultcell{0}{0} & \resultcell{94}{2} & \resultcell{94}{5} & \resultcell{94}{6} \\
   &  &  & \textit{overall} & \resultcell{11}{2} & \resultcell{22}{2} & \resultcell{11}{1} & \resultcell{0}{0} & \resultcell{94}{1} & \resultcell{93}{4} & \resultcell{94}{5} \\
\cmidrule{3-11}
   &  & \multirow{6}{*}{\textit{visual-antmaze-large-navigate-v0}} & \textit{task1} & \resultcell{3}{1} & \resultcell{7}{2} & \resultcell{4}{3} & \resultcell{0}{0} & \resultcell{78}{5} & \resultcell{60}{10} & \resultcell{83}{3} \\
   &  &  & \textit{task2} & \resultcell{4}{3} & \resultcell{4}{1} & \resultcell{2}{1} & \resultcell{0}{0} & \resultcell{80}{3} & \resultcell{28}{9} & \resultcell{85}{6} \\
   &  &  & \textit{task3} & \resultcell{4}{2} & \resultcell{6}{2} & \resultcell{4}{1} & \resultcell{1}{1} & \resultcell{90}{3} & \resultcell{85}{10} & \resultcell{96}{4} \\
   &  &  & \textit{task4} & \resultcell{4}{2} & \resultcell{5}{3} & \resultcell{6}{1} & \resultcell{0}{1} & \resultcell{88}{3} & \resultcell{46}{7} & \resultcell{95}{3} \\
   &  &  & \textit{task5} & \resultcell{4}{2} & \resultcell{5}{1} & \resultcell{4}{2} & \resultcell{0}{0} & \resultcell{83}{2} & \resultcell{44}{10} & \resultcell{90}{4} \\
   &  &  & \textit{overall} & \resultcell{4}{0} & \resultcell{5}{1} & \resultcell{4}{1} & \resultcell{0}{0} & \resultcell{84}{1} & \resultcell{53}{9} & \resultcell{88}{3} \\
\cmidrule{3-11}
   &  & \multirow{6}{*}{\textit{visual-antmaze-giant-navigate-v0}} & \textit{task1} & \resultcell{0}{0} & \resultcell{0}{0} & \resultcell{0}{0} & \resultcell{0}{0} & \resultcell{17}{2} & \resultcell{2}{1} & \resultcell{9}{3} \\
   &  &  & \textit{task2} & \resultcell{1}{1} & \resultcell{2}{1} & \resultcell{1}{1} & \resultcell{0}{0} & \resultcell{73}{9} & \resultcell{12}{8} & \resultcell{30}{18} \\
   &  &  & \textit{task3} & \resultcell{0}{0} & \resultcell{0}{0} & \resultcell{0}{0} & \resultcell{0}{0} & \resultcell{22}{6} & \resultcell{2}{3} & \resultcell{21}{10} \\
   &  &  & \textit{task4} & \resultcell{0}{1} & \resultcell{0}{1} & \resultcell{0}{0} & \resultcell{0}{0} & \resultcell{47}{5} & \resultcell{4}{2} & \resultcell{48}{12} \\
   &  &  & \textit{task5} & \resultcell{1}{1} & \resultcell{2}{3} & \resultcell{1}{0} & \resultcell{0}{1} & \resultcell{77}{5} & \resultcell{13}{11} & \resultcell{58}{11} \\
   &  &  & \textit{overall} & \resultcell{1}{1} & \resultcell{1}{1} & \resultcell{0}{0} & \resultcell{0}{0} & \resultcell{47}{2} & \resultcell{6}{4} & \resultcell{36}{7} \\
\cmidrule{3-11}
   &  & \multirow{6}{*}{\textit{visual-antmaze-teleport-navigate-v0}} & \textit{task1} & \resultcell{2}{2} & \resultcell{6}{1} & \resultcell{2}{1} & \resultcell{3}{2} & \resultcell{32}{3} & \resultcell{32}{5} & \resultcell{44}{5} \\
   &  &  & \textit{task2} & \resultcell{6}{3} & \resultcell{9}{3} & \resultcell{9}{2} & \resultcell{6}{4} & \resultcell{73}{8} & \resultcell{40}{6} & \resultcell{46}{5} \\
   &  &  & \textit{task3} & \resultcell{9}{1} & \resultcell{12}{3} & \resultcell{9}{2} & \resultcell{10}{4} & \resultcell{47}{3} & \resultcell{33}{1} & \resultcell{46}{4} \\
   &  &  & \textit{task4} & \resultcell{10}{2} & \resultcell{10}{2} & \resultcell{8}{3} & \resultcell{6}{4} & \resultcell{50}{4} & \resultcell{44}{5} & \resultcell{42}{4} \\
   &  &  & \textit{task5} & \resultcell{1}{1} & \resultcell{3}{1} & \resultcell{3}{1} & \resultcell{4}{2} & \resultcell{36}{5} & \resultcell{33}{7} & \resultcell{47}{3} \\
   &  &  & \textit{overall} & \resultcell{5}{1} & \resultcell{8}{1} & \resultcell{6}{1} & \resultcell{6}{3} & \resultcell{48}{2} & \resultcell{37}{2} & \resultcell{47}{3} \\
\cmidrule{2-11}
   & \multirow{24}{*}{\textit{stitch}} & \multirow{6}{*}{\textit{visual-antmaze-medium-stitch-v0}} & \textit{task1} & \resultcell{80}{4} & \resultcell{0}{1} & \resultcell{0}{0} & \resultcell{0}{0} & \resultcell{33}{4} & \resultcell{75}{8} & \resultcell{89}{7} \\
   &  &  & \textit{task2} & \resultcell{90}{4} & \resultcell{1}{2} & \resultcell{0}{0} & \resultcell{0}{0} & \resultcell{69}{5} & \resultcell{85}{7} & \resultcell{92}{4} \\
   &  &  & \textit{task3} & \resultcell{69}{18} & \resultcell{15}{6} & \resultcell{8}{1} & \resultcell{0}{0} & \resultcell{88}{1} & \resultcell{92}{1} & \resultcell{98}{2} \\
   &  &  & \textit{task4} & \resultcell{1}{1} & \resultcell{7}{4} & \resultcell{3}{1} & \resultcell{0}{1} & \resultcell{70}{12} & \resultcell{88}{4} & \resultcell{95}{1} \\
   &  &  & \textit{task5} & \resultcell{97}{1} & \resultcell{6}{3} & \resultcell{1}{1} & \resultcell{0}{0} & \resultcell{85}{5} & \resultcell{93}{1} & \resultcell{97}{1} \\
   &  &  & \textit{overall} & \resultcell{67}{4} & \resultcell{6}{2} & \resultcell{2}{0} & \resultcell{0}{0} & \resultcell{69}{2} & \resultcell{87}{2} & \resultcell{95}{2} \\
\cmidrule{3-11}
   &  & \multirow{6}{*}{\textit{visual-antmaze-large-stitch-v0}} & \textit{task1} & \resultcell{26}{11} & \resultcell{0}{0} & \resultcell{0}{0} & \resultcell{0}{0} & \resultcell{6}{1} & \resultcell{36}{5} & \resultcell{88}{5} \\
   &  &  & \textit{task2} & \resultcell{0}{0} & \resultcell{0}{0} & \resultcell{0}{0} & \resultcell{0}{0} & \resultcell{2}{1} & \resultcell{3}{2} & \resultcell{84}{4} \\
   &  &  & \textit{task3} & \resultcell{73}{14} & \resultcell{3}{2} & \resultcell{0}{0} & \resultcell{2}{2} & \resultcell{36}{10} & \resultcell{87}{6} & \resultcell{98}{2} \\
   &  &  & \textit{task4} & \resultcell{7}{5} & \resultcell{1}{1} & \resultcell{0}{0} & \resultcell{1}{1} & \resultcell{8}{1} & \resultcell{7}{4} & \resultcell{94}{2} \\
   &  &  & \textit{task5} & \resultcell{11}{5} & \resultcell{0}{0} & \resultcell{0}{0} & \resultcell{0}{0} & \resultcell{5}{2} & \resultcell{6}{1} & \resultcell{92}{6} \\
   &  &  & \textit{overall} & \resultcell{24}{3} & \resultcell{1}{1} & \resultcell{0}{0} & \resultcell{1}{1} & \resultcell{11}{3} & \resultcell{28}{2} & \resultcell{90}{2} \\
\cmidrule{3-11}
   &  & \multirow{6}{*}{\textit{visual-antmaze-giant-stitch-v0}} & \textit{task1} & \resultcell{0}{0} & \resultcell{0}{0} & \resultcell{0}{0} & \resultcell{0}{0} & \resultcell{0}{0} & \resultcell{0}{0} & \resultcell{24}{4} \\
   &  &  & \textit{task2} & \resultcell{1}{2} & \resultcell{0}{0} & \resultcell{0}{0} & \resultcell{0}{0} & \resultcell{0}{0} & \resultcell{1}{1} & \resultcell{74}{8} \\
   &  &  & \textit{task3} & \resultcell{0}{0} & \resultcell{0}{0} & \resultcell{0}{0} & \resultcell{0}{0} & \resultcell{0}{0} & \resultcell{0}{0} & \resultcell{40}{19} \\
   &  &  & \textit{task4} & \resultcell{0}{0} & \resultcell{0}{0} & \resultcell{0}{0} & \resultcell{0}{0} & \resultcell{0}{0} & \resultcell{0}{0} & \resultcell{59}{6} \\
   &  &  & \textit{task5} & \resultcell{0}{0} & \resultcell{0}{0} & \resultcell{0}{0} & \resultcell{0}{0} & \resultcell{0}{1} & \resultcell{0}{0} & \resultcell{88}{8} \\
   &  &  & \textit{overall} & \resultcell{0}{0} & \resultcell{0}{0} & \resultcell{0}{0} & \resultcell{0}{0} & \resultcell{0}{0} & \resultcell{0}{0} & \resultcell{55}{6} \\
\cmidrule{3-11}
   &  & \multirow{6}{*}{\textit{visual-antmaze-teleport-stitch-v0}} & \textit{task1} & \resultcell{37}{4} & \resultcell{2}{2} & \resultcell{1}{1} & \resultcell{0}{0} & \resultcell{20}{5} & \resultcell{36}{5} & \resultcell{48}{6} \\
   &  &  & \textit{task2} & \resultcell{36}{3} & \resultcell{2}{1} & \resultcell{1}{1} & \resultcell{1}{1} & \resultcell{40}{9} & \resultcell{38}{3} & \resultcell{16}{12} \\
   &  &  & \textit{task3} & \resultcell{17}{6} & \resultcell{2}{1} & \resultcell{2}{1} & \resultcell{3}{4} & \resultcell{32}{9} & \resultcell{36}{5} & \resultcell{0}{0} \\
   &  &  & \textit{task4} & \resultcell{39}{9} & \resultcell{1}{1} & \resultcell{0}{0} & \resultcell{2}{3} & \resultcell{45}{7} & \resultcell{37}{6} & \resultcell{0}{0} \\
   &  &  & \textit{task5} & \resultcell{29}{1} & \resultcell{1}{1} & \resultcell{1}{1} & \resultcell{1}{1} & \resultcell{22}{9} & \resultcell{38}{5} & \resultcell{45}{8} \\
   &  &  & \textit{overall} & \resultcell{32}{3} & \resultcell{1}{1} & \resultcell{1}{0} & \resultcell{1}{2} & \resultcell{32}{6} & \resultcell{37}{4} & \resultcell{21}{4} \\
\cmidrule{2-11}
   & \multirow{18}{*}{\textit{explore}} & \multirow{6}{*}{\textit{visual-antmaze-medium-explore-v0}} & \textit{task1} & \resultcell{0}{0} & \resultcell{0}{0} & \resultcell{0}{0} & \resultcell{0}{0} & \resultcell{0}{0} & \resultcell{0}{0} & \resultcell{52}{20} \\
   &  &  & \textit{task2} & \resultcell{0}{0} & \resultcell{0}{0} & \resultcell{0}{0} & \resultcell{0}{0} & \resultcell{0}{0} & \resultcell{0}{0} & \resultcell{82}{27} \\
   &  &  & \textit{task3} & \resultcell{0}{0} & \resultcell{0}{0} & \resultcell{0}{0} & \resultcell{0}{0} & \resultcell{0}{0} & \resultcell{1}{2} & \resultcell{82}{6} \\
   &  &  & \textit{task4} & \resultcell{0}{0} & \resultcell{0}{0} & \resultcell{0}{0} & \resultcell{0}{0} & \resultcell{0}{0} & \resultcell{0}{0} & \resultcell{78}{20} \\
   &  &  & \textit{task5} & \resultcell{0}{0} & \resultcell{0}{0} & \resultcell{0}{0} & \resultcell{0}{0} & \resultcell{0}{0} & \resultcell{0}{0} & \resultcell{55}{23} \\
   &  &  & \textit{overall} & \resultcell{0}{0} & \resultcell{0}{0} & \resultcell{0}{0} & \resultcell{0}{0} & \resultcell{0}{0} & \resultcell{0}{0} & \resultcell{78}{10} \\
\cmidrule{3-11}
   &  & \multirow{6}{*}{\textit{visual-antmaze-large-explore-v0}} & \textit{task1} & \resultcell{0}{0} & \resultcell{0}{0} & \resultcell{0}{0} & \resultcell{0}{0} & \resultcell{0}{0} & \resultcell{0}{0} & \resultcell{22}{7} \\
   &  &  & \textit{task2} & \resultcell{0}{0} & \resultcell{0}{0} & \resultcell{0}{0} & \resultcell{0}{0} & \resultcell{0}{0} & \resultcell{0}{0} & \resultcell{2}{2} \\
   &  &  & \textit{task3} & \resultcell{0}{0} & \resultcell{0}{0} & \resultcell{0}{0} & \resultcell{0}{0} & \resultcell{0}{0} & \resultcell{0}{0} & \resultcell{28}{26} \\
   &  &  & \textit{task4} & \resultcell{0}{0} & \resultcell{0}{0} & \resultcell{0}{0} & \resultcell{0}{0} & \resultcell{0}{0} & \resultcell{0}{0} & \resultcell{32}{23} \\
   &  &  & \textit{task5} & \resultcell{0}{0} & \resultcell{0}{0} & \resultcell{0}{0} & \resultcell{0}{0} & \resultcell{0}{0} & \resultcell{0}{0} & \resultcell{12}{12} \\
   &  &  & \textit{overall} & \resultcell{0}{0} & \resultcell{0}{0} & \resultcell{0}{0} & \resultcell{0}{0} & \resultcell{0}{0} & \resultcell{0}{0} & \resultcell{26}{10} \\
\cmidrule{3-11}
   &  & \multirow{6}{*}{\textit{visual-antmaze-teleport-explore-v0}} & \textit{task1} & \resultcell{0}{0} & \resultcell{0}{0} & \resultcell{0}{0} & \resultcell{0}{0} & \resultcell{0}{0} & \resultcell{0}{0} & \resultcell{6}{12} \\
   &  &  & \textit{task2} & \resultcell{0}{0} & \resultcell{0}{0} & \resultcell{0}{0} & \resultcell{0}{0} & \resultcell{0}{0} & \resultcell{0}{0} & \resultcell{31}{11} \\
   &  &  & \textit{task3} & \resultcell{0}{0} & \resultcell{0}{1} & \resultcell{0}{1} & \resultcell{0}{0} & \resultcell{3}{1} & \resultcell{38}{8} & \resultcell{48}{8} \\
   &  &  & \textit{task4} & \resultcell{0}{0} & \resultcell{0}{0} & \resultcell{0}{0} & \resultcell{0}{0} & \resultcell{0}{0} & \resultcell{28}{13} & \resultcell{41}{9} \\
   &  &  & \textit{task5} & \resultcell{0}{0} & \resultcell{0}{0} & \resultcell{0}{0} & \resultcell{0}{0} & \resultcell{2}{2} & \resultcell{27}{18} & \resultcell{32}{8} \\
   &  &  & \textit{overall} & \resultcell{0}{0} & \resultcell{0}{0} & \resultcell{0}{0} & \resultcell{0}{0} & \resultcell{1}{0} & \resultcell{19}{8} & \resultcell{34}{4} \\
\bottomrule
\end{tabular}
}
\end{table}

\subsection{Ablations}
\label{appendix:ablations}
For planning without the \(\prior\) case, CEM must find optimal actions from scratch. We increase the temporally-correlated noise magnitude \(\sigma=1.0\) because CEM has to search across the entire action space. Number of CEM iterations \(N_\text{iter}\) is set to \(15\), selected from a sweep over \(\{5, 10, 15, 20\}\), as CEM takes more time to converge now. The number of CEM samples \(N_\text{s}\) is also increased to \(5000\) as the search space is wider compared to the case when it can access the \(\prior\) case, selected with a grid search over \(\{500, 1000, 2000, 5000, 10000\}\). This further shows that having access to \(\prior\) can accelerate the convergence of CEM with a smaller number of samples.

We report the success rates (\%) of all six variants of ALPS, discussed in  Section~\ref{subsec:ablations}, in Table ~\ref{tab:all_ablations}.

\begin{table*}
\centering
\caption{\textbf{Success rates (\%) for the six variants of ALPS compared in Section~\ref{subsec:ablations} on the state-based OGBench locomotion tasks.} Results averaged over \(\nseeds\) seeds with standard deviation reported after ±. CEM, \(\prior\), and CEM + \(\prior\) are the low-level planning only variants. Dijkstra + CEM and Dijkstra + \(\prior\) are the hierarchical planning variants with either CEM or \(\prior\) acting as the low-level planner.}
\begin{adjustbox}{max width=\textwidth}
\small
\setlength{\tabcolsep}{3pt}
\begin{tabular}{ll*{6}{c}}
\toprule
\textbf{Environment} & \textbf{Dataset} & \textbf{CEM} & \textbf{\(\prior\)} & \textbf{CEM + \(\prior\)} & \textbf{Dijkstra + CEM} & \textbf{Dijkstra + \(\prior\)} & \textbf{ALPS}\\
\midrule
\multirow{8}{*}{pointmaze}
   & pointmaze-medium-navigate-v0 & \resultcell{58}{8} & \resultcell{19}{8} & \resultcell{30}{7} & \resultcell{100}{0} & \resultcell{46}{10} & \resultcell{82}{10} \\
   & pointmaze-large-navigate-v0 & \resultcell{19}{11} & \resultcell{17}{10} & \resultcell{21}{9} & \resultcell{100}{0} & \resultcell{44}{17} & \resultcell{80}{8} \\
   & pointmaze-giant-navigate-v0 & \resultcell{0}{0} & \resultcell{0}{0} & \resultcell{0}{0} & \resultcell{52}{10} & \resultcell{64}{21} & \resultcell{67}{11} \\
   & pointmaze-medium-stitch-v0 & \resultcell{40}{13} & \resultcell{40}{12} & \resultcell{46}{11} & \resultcell{99}{3} & \resultcell{82}{10} & \resultcell{94}{6} \\
   & pointmaze-large-stitch-v0 & \resultcell{0}{0} & \resultcell{12}{9} & \resultcell{9}{9} & \resultcell{100}{1} & \resultcell{93}{4} & \resultcell{96}{2} \\
   & pointmaze-giant-stitch-v0 & \resultcell{0}{0} & \resultcell{0}{0} & \resultcell{0}{0} & \resultcell{68}{8} & \resultcell{98}{1} & \resultcell{98}{1} \\
   & pointmaze-teleport-navigate-v0 & \resultcell{5}{5} & \resultcell{16}{6} & \resultcell{20}{6} & \resultcell{15}{8} & \resultcell{33}{6} & \resultcell{40}{6} \\
   & pointmaze-teleport-stitch-v0 & \resultcell{5}{6} & \resultcell{8}{5} & \resultcell{9}{6} & \resultcell{10}{10} & \resultcell{19}{7} & \resultcell{13}{4} \\
\midrule
\multirow{11}{*}{antmaze}
   & antmaze-medium-navigate-v0 & \resultcell{0}{0} & \resultcell{50}{10} & \resultcell{57}{8} & \resultcell{0}{0} & \resultcell{92}{5} & \resultcell{97}{2} \\
   & antmaze-large-navigate-v0 & \resultcell{0}{0} & \resultcell{30}{11} & \resultcell{29}{12} & \resultcell{0}{0} & \resultcell{90}{4} & \resultcell{93}{5} \\
   & antmaze-giant-navigate-v0 & \resultcell{0}{0} & \resultcell{2}{1} & \resultcell{2}{2} & \resultcell{0}{0} & \resultcell{54}{5} & \resultcell{69}{9} \\
   & antmaze-medium-stitch-v0 & \resultcell{0}{0} & \resultcell{50}{13} & \resultcell{55}{12} & \resultcell{0}{0} & \resultcell{92}{4} & \resultcell{93}{7} \\
   & antmaze-large-stitch-v0 & \resultcell{0}{0} & \resultcell{14}{8} & \resultcell{11}{7} & \resultcell{0}{0} & \resultcell{93}{2} & \resultcell{95}{2} \\
   & antmaze-giant-stitch-v0 & \resultcell{0}{0} & \resultcell{0}{0} & \resultcell{0}{0} & \resultcell{0}{0} & \resultcell{88}{3} & \resultcell{92}{3} \\
   & antmaze-medium-explore-v0 & \resultcell{0}{0} & \resultcell{11}{6} & \resultcell{7}{3} & \resultcell{47}{22} & \resultcell{92}{4} & \resultcell{100}{0} \\
   & antmaze-large-explore-v0 & \resultcell{0}{0} & \resultcell{1}{1} & \resultcell{0}{0} & \resultcell{11}{8} & \resultcell{43}{9} & \resultcell{90}{15} \\
   & antmaze-teleport-navigate-v0 & \resultcell{0}{0} & \resultcell{35}{6} & \resultcell{38}{6} & \resultcell{0}{0} & \resultcell{44}{5} & \resultcell{45}{3} \\
   & antmaze-teleport-stitch-v0 & \resultcell{0}{0} & \resultcell{21}{8} & \resultcell{24}{8} & \resultcell{0}{0} & \resultcell{33}{10} & \resultcell{35}{11} \\
   & antmaze-teleport-explore-v0 & \resultcell{6}{3} & \resultcell{4}{3} & \resultcell{9}{5} & \resultcell{13}{5} & \resultcell{38}{5} & \resultcell{48}{6} \\
\midrule
\multirow{6}{*}{humanoidmaze}
   & humanoidmaze-medium-navigate-v0 & \resultcell{0}{0} & \resultcell{29}{12} & \resultcell{26}{10} & \resultcell{0}{0} & \resultcell{86}{6} & \resultcell{89}{5} \\
   & humanoidmaze-large-navigate-v0 & \resultcell{0}{0} & \resultcell{3}{2} & \resultcell{3}{2} & \resultcell{0}{0} & \resultcell{58}{6} & \resultcell{56}{5} \\
   & humanoidmaze-giant-navigate-v0 & \resultcell{0}{0} & \resultcell{0}{0} & \resultcell{0}{0} & \resultcell{0}{0} & \resultcell{66}{10} & \resultcell{67}{11} \\
   & humanoidmaze-medium-stitch-v0 & \resultcell{0}{0} & \resultcell{33}{8} & \resultcell{31}{7} & \resultcell{0}{0} & \resultcell{64}{6} & \resultcell{68}{5} \\
   & humanoidmaze-large-stitch-v0 & \resultcell{0}{0} & \resultcell{2}{2} & \resultcell{1}{2} & \resultcell{0}{0} & \resultcell{48}{5} & \resultcell{56}{5} \\
   & humanoidmaze-giant-stitch-v0 & \resultcell{0}{0} & \resultcell{0}{1} & \resultcell{0}{1} & \resultcell{0}{0} & \resultcell{53}{10} & \resultcell{62}{6} \\
\bottomrule
\end{tabular}
\end{adjustbox}
\label{tab:all_ablations}
\end{table*}


\end{document}